\documentclass[11pt]{article}
\usepackage{ifthen}
\usepackage{mdwlist}
\usepackage{amsmath,amssymb,amsfonts,amsthm}
\usepackage{bm}
\usepackage[colorlinks=true,linkcolor=blue,citecolor=blue,urlcolor=blue]{hyperref}
\usepackage{enumitem}
\usepackage{graphicx}
\usepackage{xspace}
\usepackage{verbatim}
\usepackage{algorithm}
\usepackage{algpseudocode}
\usepackage[margin=1in]{geometry}
\usepackage{color}
\usepackage{thm-restate}
\usepackage{latexsym}
\usepackage{epsfig}
\usepackage[colorinlistoftodos,textsize=scriptsize]{todonotes}
\def\withcolors{1}
\def\withnotes{1}
\def\eps{\ve}
\renewcommand{\epsilon}{\ve}
\def\ve{\varepsilon}

\newcommand{\E}{\mbox{\bf E}}

\newcommand{\pr}[2][]{\mathrm{Pr}\ifthenelse{\not\equal{}{#1}}{_{#1}}{}\!\left[#2\right]}
\newcommand{\R}{\mathbb{R}}

\newtheorem{theorem}{Theorem}

\newtheorem{proposition}[theorem]{Proposition}

\newtheorem{remark}[theorem]{Remark}
\newtheorem{fact}[theorem]{Fact}
\newtheorem{lemma}[theorem]{Lemma}

\newtheorem{corollary}[theorem]{Corollary}

\newtheorem{definition}[theorem]{Definition}

\newtheorem{conjecture}[theorem]{Conjecture}
\newtheorem{construction}[theorem]{Construction}
\numberwithin{theorem}{section} 
\numberwithin{nontheorem}{section} 
\numberwithin{proposition}{section} 
\numberwithin{observation}{section} 
\numberwithin{remark}{section} 
\numberwithin{fact}{section} 
\numberwithin{lemma}{section} 
\numberwithin{claim}{section} 
\numberwithin{corollary}{section} 
\numberwithin{case}{section} 
\numberwithin{dfn}{section} 
\numberwithin{definition}{section} 
\numberwithin{question}{section} 
\numberwithin{openquestion}{section} 
\numberwithin{res}{section} 
\numberwithin{conjecture}{section}
\numberwithin{construction}{section}

\ifnum\withcolors=1
  \newcommand{\gcolor}[1]{{\color{red}#1}} 
\else

  \newcommand{\gcolor}[1]{{#1}}
\fi

\ifnum\withnotes=1
  \newcommand{\gnote}[1]{\par\gcolor{\textbf{G: }\sf #1}} 
  \newcommand{\gfootnote}[1]{\footnote{{\bf \gcolor{Gautam}}: {#1}}}

	\newcommand{\questionnote}[2][]{\todo[size=\tiny,color=blue!30]{#2}}

\else
  \newcommand{\gnote}[1]{}
  \newcommand{\gfootnote}[1]{}
  
	\newcommand{\questionnote}[2][]{\ignore{#2}}

\fi
\newcommand{\ignore}[1]{\leavevmode\unskip} 
\newcommand{\gmargin}[1]{\questionnote{\gcolor{#1}}}

\usepackage{amsmath,amssymb,amsthm,dsfont,mathtools,microtype}
\usepackage[normalem]{ulem}

\newcommand{\ha}[1]{{\color{blue}{Hassan: #1}}}
\newcommand{\pl}[1]{{\color{pink}{Peihan: #1}}}
\newcommand{\vs}[1]{{\color{red}{Vikrant: #1}}}
\newcommand{\am}[1]{{\color{orange}{Argyris: #1}}}
\newcommand{\mo}[1]{{\color{magenta}{Mohammad: #1}}}
\newcommand{\gk}[1]{{\color{green}{Gautam: #1}}}

\newif\ifshowcomments
\showcommentstrue

\ifshowcomments\else
  \renewcommand{\ha}[1]{}
  \renewcommand{\pl}[1]{}
  \renewcommand{\vs}[1]{}
  \renewcommand{\gmargin}[1]{}
  \renewcommand{\todo}[1]{}
  \renewcommand{\gnote}[1]{}
  \renewcommand{\am}[1]{}
\fi

\let\P\relax
\DeclareMathOperator\P{\mathcal{P}} 
\DeclareMathOperator\Q{\mathcal{Q}} 
\DeclareMathOperator\X{\mathcal{X}} 
\DeclareMathOperator\A{\mathcal{A}} 
\DeclareMathOperator\TV{\mathrm{TV}} 
\DeclareMathOperator\q{\mathbf{q}} 
\newcommand\F{\mathcal{F}} 
\DeclareMathOperator\Y{\mathcal{Y}} 

\title{Actually, All Learnable Distribution Classes Are Privately Learnable\thanks{Authors are listed in alphabetical order.}}

\author {
Mohammad Afzali\thanks{University of British Columbia. {\tt moafzali@cs.ubc.ca}. Supported in part by a Four Year Doctoral Fellowship (4YF) from the University of British Columbia.} \and
Hassan Ashtiani\thanks{McMaster University and Vector Institute. {\tt zokaeiam@mcmaster.ca}. Supported by an NSERC Discovery Grant.} \and
Alex Bie\thanks{Google Research. {\tt alexbie@google.com}.} \and
Gautam Kamath\thanks{New York University. {\tt g@csail.mit.edu}. Supported by an NSERC Discovery Grant and an Ontario Early Researcher Award.} \and
Peihan Liu\thanks{Columbia University. {\tt peihanliu@cs.columbia.edu}. Supported in part by the Center for Smart Streetscapes, an NSF Engineering Research Center, under grant agreement EEC-2133516} \and
Argyris Mouzakis\thanks{University of Texas at Austin. Supported by the NSF AI Institute for Foundations of Machine Learning (IFML).} \and
Vikrant Singhal\thanks{University of Copenhagen. {\tt vikrant.singhal@di.ku.dk}. Supported by a Startup grant from Novo Nordisk Foundation (NNF24OC0087820).}
}
\date{}

\begin{document}
\maketitle

\begin{abstract}
We show that every class of distributions that is learnable up to arbitrarily small target error in total variation distance with finitely many samples is also learnable under $(\varepsilon, \delta)$-differential privacy up to arbitrarily small target error with finitely many samples. This proves a conjecture of Ashtiani. 
Previous work by Bun, Kamath, Mouzakis, and Singhal~\cite{BunKMS24} gave a \emph{weak} refutation of this conjecture, showing that this implication is not true when learnability is defined with respect to a fixed target error.
In contrast, we show that, under the standard definition of learnability, the conjecture is true. 
\end{abstract}

\section{Introduction}

Suppose a learner receives samples from an unknown probability distribution $p$, which is known to belong to some class of probability distributions $\mathcal{Q}$. 
Can the learner identify a distribution $\hat p$, such that $\hat p$ is close to the distribution $p$ in total variation distance? 
This fundamental question is known as \emph{distribution learning} or \emph{density estimation}, and has seen significant study across a variety of adjacent research communities including computer science, statistics, and information theory (see, for example, ~\cite{KearnsMRRSS94, DevroyeL01, DaskalakisDS12b, Diakonikolas16,  AshtianiBHLMP20,lechner2026no}). 

The problem of density estimation has also been studied extensively under the constraint of differential privacy~(DP)~\cite{DworkMNS06}. Indeed, there is a rich body of literature on private learning of various classes of distributions. Some examples are private density estimation of structured univariate distributions \cite{DiakonikolasHS15, KarwaV18}, high-dimensional Gaussians \cite{KamathLSU19, BunKSW19,BiswasDKU20,AdenAliAK21, AshtianiL22, KothariMV22, KamathMSSU22, AlabiKTVZ23, HopkinsKMN23}, Gaussian Mixture Models \cite{KamathSSU19,AdenAliAL21,BenDavidBCKS23,AshtianiAL24,ashtiani2024sample,AshtianiAL25}, and product distributions \cite{KamathLSU19,BunKSW19,Singhal24}.

One vibrant program within this area has focused on designing increasingly general and generic frameworks for learning distribution classes (see, for example, ~\cite{BunKSW19, AshtianiL22, HopkinsKMN23, AsiUZ23, AshtianiAL24, AshtianiAL25}).
At their core, many of these methods rely upon non-private learning algorithms as a black box, applying DP techniques on their outputs to design private algorithms. At a July 2022 workshop at the Fields Institute, Ashtiani conjectured that \emph{every} learnable distribution class could be privately learned~\cite{Ashtiani22}.

\begin{conjecture}[Ashtiani's Conjecture]
    Every distribution class $\mathcal{Q}$ that is learnable with respect to total variation distance is also learnable under the additional constraint of $(\varepsilon,\delta)$-differential privacy.
\end{conjecture}

Our main result (Theorem~\ref{thm:main-informal}) is a positive resolution to this conjecture: given a non-private algorithm for learning a distribution class up to arbitrarily small target error in total variation distance, we provide a private algorithm for learning the same distribution class.

A recent work by Bun, Kamath, Mouzakis, and Singhal~\cite{BunKMS24} claimed to refute Ashtiani's conjecture, which may seem to contradict our result.
In fact, their work can be viewed as a weak refutation of Ashtiani's conjecture.
They constructed a distribution class which can be learned up to constant error in total variation distance, but cannot be learned privately to that same error.
The distribution class in their counter-example, however, is not (non-privately) learnable to all values of error. 
In contrast, we use the traditional definition of learnability, wherein the distribution class must be learnable to \emph{arbitrarily small} error \cite{KearnsSS94}. Indeed, our private algorithm relies crucially upon non-private learning of the class to a higher accuracy than the target.

Our positive resolution of the conjecture may come as a surprise as private and non-private learnability are not necessarily equivalent for problems beyond density estimation. Indeed, in the standard binary classification setting, non-private learnability does not imply $(\varepsilon,\delta)$-DP learnability \cite{BunNSV15,alon2019private}. It is also remarkable that for (non-private) density estimation, a nice characterization of learnability is not known (and in some sense impossible~\cite{LechnerB23,lechner2026no}) while our private to non-private reduction works anyway.

\subsection{Results}

In what follows, let $\mathcal X$ be the domain of examples, which we assume is a standard Borel space, and denote by $\Delta(\mathcal X)$ the set of all probability distributions on $\mathcal X$. Any $\mathcal Q \subseteq \Delta(\mathcal X)$ is referred to as a \emph{distribution class}. Denote by TV the total variation distance: for $p,q \in \Delta(\mathcal X)$, $\TV(p,q) \coloneqq \sup_{A \in \mathcal B} |p(A) - q(A)|$  where $\mathcal B$ are the measurable sets of $\mathcal X$.

\begin{definition}[Distribution learning]\label{def:learnability}A distribution class $\mathcal{Q} \subseteq \Delta(\mathcal{X})$ is \textbf{learnable} if there exist a learner $A: \mathcal{X}^* \to \Delta(\mathcal{X})$ and a function $m: (0,1)^2 \to \mathbb{N}$ such that, for every $\alpha,\beta \in (0,1)$, every $n\geq m(\alpha,\beta)$, and every $p\in \mathcal{Q}$, $$\Pr_{S\sim p^n}[\TV(A(S), p)\geq \alpha] \leq \beta.$$ 

The \textbf{sample complexity} of learning $\mathcal{Q}$
is the pointwise infimum of all functions $m$
for which there exists a learner $A$ satisfying the above guarantee.
\end{definition}

\begin{theorem}[Learnable classes are privately learnable; informal version of Theorem~\ref{thm:robust-learner}] \label{thm:main-informal} 
Suppose $\mathcal Q\subseteq\Delta(\mathcal X)$ is learnable with sample complexity $m(\alpha,\beta)$ for $\alpha,\beta \in (0,1)$. Then, for every $\varepsilon,\delta\in(0,1)$, $\mathcal Q$ is also learnable under $(\varepsilon,\delta)$-differential privacy, with sample complexity $n(\alpha,\beta,\varepsilon,\delta)=O\left( m\left( \frac{1}{512m\left(\alpha/3,1/32\right)}, \frac{1}{16} \right) \frac{1}{\varepsilon} \log\left(\frac{4}{\delta}\right) \log\left(\frac{4}{\beta}\right) \right)$.  
\end{theorem}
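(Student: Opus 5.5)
The plan is to exploit a gap in accuracy. We learn $p$ non-privately to an error so small that the learner, run on a \emph{moderate} number of samples, cannot tell $p$ apart from our estimate. Set $k \coloneqq m(\alpha/3, 1/32)$ and $\gamma \coloneqq 1/(512k)$, and let $M \coloneqq m(\gamma, 1/16)$. Split the private dataset into $T$ disjoint chunks of size $M$ and run the non-private learner $A$ on each chunk, obtaining $q_1,\dots,q_T$. By Definition~\ref{def:learnability}, each $q_i$ independently satisfies $\TV(q_i,p)\le\gamma$ with probability at least $15/16$. The key observation is tensorization of TV: if $\TV(q,p)\le\gamma$ then $\TV(q^k,p^k)\le k\gamma\le 1/512$. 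Hence the randomized procedure ``draw $Z\sim q^k$ and output $A(Z)$'' has output law within $1/512$ of the law of $A(Z')$, $Z'\sim p^k$. That law is, with probability at least $1-1/32$, an $\alpha/3$-accurate hypothesis. So every \emph{good} $q_i$ induces essentially the same distribution over output hypotheses, and this is the source of stability.

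Next we convert this TV-stability into differential privacy. A neighboring dataset changes at most one chunk, hence at most one $q_i$. The majority (at least a $7/8$ fraction, by a Chernoff bound once $T\gtrsim\log(1/\beta)$) are good, and any two good $q_i,q_j$ satisfy $\TV(q_i^k,q_j^k)\le 1/256$. I would first privately select a ``central'' candidate. The score of $i$ is the number of $j$ with $\TV(q_i,q_j)\le 2\gamma$, and we accept only candidates with score above $T/2$. This is done with a propose-test-release style test or a stable selection mechanism, which costs $T=O(\varepsilon^{-1}\log(1/\delta))$ chunks. Any accepted candidate is within $3\gamma$ of $p$, since it is $2\gamma$-close to some good $q_j$. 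Then we sample $Z$ from the selected $q$ via correlated sampling / a shared-randomness coupling, so that neighboring datasets that select different but close candidates produce the same $Z$, and hence the same output $A(Z)$, except with probability $O(k\gamma)$. Combining these gives a standard TV-stability-to-DP conversion, of the kind used in the replicability-privacy equivalences, with the $\frac1\varepsilon\log\frac4\delta$ factor.

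The resulting algorithm outputs an $\alpha/3$-accurate hypothesis only with constant probability, roughly $1-1/32-1/512-o(1)$. To reach confidence $1-\beta$, repeat the whole procedure $O(\log(4/\beta))$ times on fresh data. Then run a private hypothesis selection (a private Scheffé tournament / minimum-distance estimate) over the candidates. This loses at most a constant factor, turning $\alpha/3$ into $\alpha$, and adds a sample cost dominated by the main term. Multiplying the factors gives $n=O\bigl(M\cdot\varepsilon^{-1}\log(4/\delta)\log(4/\beta)\bigr)$, matching the statement.

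The main obstacle is the second step: turning ``most candidates are $\gamma$-close to each other'' into a genuine $(\varepsilon,\delta)$ guarantee. The candidates live in an infinite space, so histograms are unavailable, and the candidate set itself changes between neighbors. I would first try the PTR-plus-correlated-sampling route described above. The fallback is a generic lemma that any algorithm whose output distribution moves by at most $c<1/2$ in TV under changing one chunk can be made $(\varepsilon,\delta)$-DP with $O(\varepsilon^{-1}\log(1/\delta))$ repetitions.
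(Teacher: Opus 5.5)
Your parameters ($k=m(\alpha/3,1/32)$, $\gamma=1/(512k)$, block size $m(\gamma,1/16)$), the tensorization bound $\mathrm{TV}(q^k,p^k)\le k\gamma$, and the outer repetition match the paper's. The private aggregation in your second step, however, does not give $(\varepsilon,\delta)$-DP.

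Suppose you select one index $I$ and release a single draw $Z\sim q_I^k$ (or $A(Z)$). Then, conditional on the $q_i$, the output law on $S$ is just the mixture $\sum_i\Pr[I=i]\,q_i^k$. Correlated sampling only couples the executions on $S$ and $S'$; it leaves each marginal unchanged. So all it can certify is a TV bound, i.e.\ a purely additive privacy loss of order $k\gamma$ times the probability of selecting the changed index, and $k\gamma=1/512$ does not shrink with $\delta$.

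Concretely, some index $j$ is selected with probability at least $1/T$. Suppose changing one record of chunk $j$ makes the black-box learner output a still-central perturbation $q_j'=(1-\gamma)q_j+\gamma\delta_{x_0}$, where $x_0$ is charged by none of $q_1,\dots,q_T$. Then $Z$ contains $x_0$ with probability of order $k\gamma/T$ on $S'$ and $0$ on $S$. This forces $\delta\gtrsim 1/(512T)$, which fails for $T=O(\varepsilon^{-1}\log(1/\delta))$. Fixing it this way would need $\gamma\lesssim\delta/k$, i.e.\ a $\mathrm{poly}(1/\delta)$ block size rather than the stated $\log(1/\delta)$ factor.

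PTR does not rescue this: it can only release a quantity that is identical on all nearby datasets, and neither the selected candidate nor a draw from it is one. Your fallback lemma is not an off-the-shelf tool either. What you actually have is TV-closeness of outputs on \emph{independent good} chunks, not worst-case stability under changing a chunk. Converting the former into $(\varepsilon,\delta)$-DP on a general space is exactly the heart of the proof.

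The missing idea is to avoid selecting a candidate at all. Draw synthetic datasets from \emph{all} the $q_i^k$ at once through one grand coupling $(Z_P)_P$ with $\Pr[Z_P\neq Z_Q]\le 2\,\mathrm{TV}(P,Q)$, set $Z_i=Z_{q_i^k}$, and release $\mathsf{StabilityBasedMode}(Z_1,\dots,Z_T)$.

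For privacy: coordinates are indexed by the distributions themselves, so neighboring datasets yield vectors differing in one coordinate for \emph{every} realization of the coupling. The mode's guarantee therefore transfers with $T=O(\varepsilon^{-1}\log(1/\delta))$. The coupling manufactures exact collisions, which is what makes a mode usable even though the $q_i$ are continuous objects.

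For utility: adjoin a ghost coordinate $Z^*=Z_{p^k}\sim p^k$. Each good $Z_i$ differs from $Z^*$ with probability at most $2k\gamma$, so by Markov a large majority equal $Z^*$, and the mode returns $Z^*$ itself. Thus $\gamma$ only has to keep a constant utility failure small, never the privacy loss, which is why $\gamma=1/(512k)$ suffices. On a general standard Borel space this coupling must also be built without a common dominating measure and consistently across tuples. The paper does this with tuple-dependent Poisson couplings, projective consistency, and Kolmogorov extension.

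Finally, the boosting step needs no private hypothesis selection. The repeated runs are already DP on disjoint data, so a data-free rule (return the candidate with the most others within $2\alpha/3$) is post-processing. A private Scheff\'e tournament on fresh samples would instead cost about $\log(1/\beta)/\alpha^2$ extra samples in its generic form. That cost is not dominated by the stated bound when $m$ is small.
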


In particular, if a class $\mathcal{Q}$ is learnable with sample complexity $m(\alpha,\beta) = \text{poly}(\tfrac 1 \alpha, \log \tfrac 1 \beta)$, our bound implies that $\mathcal{Q}$ is $(\varepsilon,\delta)$-DP learnable with sample complexity  $m(\alpha,\beta,\epsilon,\delta) = \text{poly}(\tfrac 1 \alpha, \log \tfrac 1 \beta,\frac 1 \varepsilon, \log \tfrac 1 \delta)$.

As part of our approach, we develop private samplers~\cite{RaskhodnikovaSSS21,CheuN25} for any learnable class, which can be of independent interest. These samplers receive private data points and output a synthetic data set, generated from a distribution that is close to the underlying (unknown) distribution.

\begin{definition}[Approximate multi-sampler]\label{def:sampler}
Let $\Q \subseteq \Delta(\mathcal X)$, $\ell \in \mathbb N$, $\gamma \in (0,1)$. A randomized algorithm
$M : \mathcal X^n \to \mathcal X^\ell$ is a \textbf{$\boldsymbol{(\ell,\gamma)}$-sampler} for $\Q$ with sample complexity $n$ if for every $p\in \Q$, $$\TV(\mathcal{L}(Z), \mathcal{L}(Z')) \leq \gamma,$$
where $Z\sim p^\ell$, $Z' = M(S)$ with $S\sim p^n$, and $\mathcal{L}(\cdot)$ corresponds to the law of the random variable.
\end{definition}

\begin{theorem}[Differentially private synthetic data for learnable classes; informal version of Theorem~\ref{thm:informal-sampler}]\label{thm:sampling-informal}
Let $\Q \subseteq \Delta(\mathcal X)$ be learnable with
sample complexity $m(\alpha, \beta)$. For every $\ell \in \mathbb N$ and $\gamma$, $\varepsilon, \delta \in (0,1)$, there exists a $(\varepsilon,\delta)$-differentially private $(\ell,\gamma)$-sampler for $\Q$ with sample complexity
$$n = O\left( m\left(\frac{\gamma}{64\ell}, \frac 1{16}\right)
\log\left(\frac{1}{\gamma\delta}\right) \frac 1 \varepsilon \right).$$
\end{theorem}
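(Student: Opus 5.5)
The plan is to combine a subsample-and-aggregate split with a \emph{smoothed order-statistic density}, which can be sampled from privately.

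\textbf{Step 1 (non-private hypotheses).} Set $m_0 = m(\gamma/(64\ell), 1/16)$ and $k = \Theta(\log(1/(\gamma\delta))/\varepsilon)$, and split $S$ into $k$ disjoint chunks of size $m_0$. Run the non-private learner on each chunk to get hypotheses $q_1,\dots,q_k$. Each $q_i$ independently satisfies $\TV(q_i,p)\le \alpha := \gamma/(64\ell)$ with probability at least $15/16$. A Chernoff bound then shows that at least $7k/8$ of them are \emph{good} except with probability $e^{-\Omega(k)} \le \gamma\delta/4$.

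For good $i$, the product bound gives $\TV(q_i^{\ell},p^{\ell}) \le \ell\alpha = \gamma/64$. Fix a common dominating measure $\mu$ on $\mathcal X^\ell$, e.g.\ generated by $p$ and all $q_i$; by the data-independence argument this can be taken as $\frac1k\sum_i q_i^\ell$ plus $p^\ell$. Write $f_i = dq_i^\ell/d\mu$.

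\textbf{Step 2 (stable aggregate density).} For each $z$, let $o_1(z)\ge o_2(z)\ge\dots\ge o_k(z)$ be the order statistics of $f_1(z),\dots,f_k(z)$. Define
\[ h(z) = \sum_{r=1}^{k} w_r\, o_r(z), \qquad w_r \propto e^{-\varepsilon |r - k/2|}. \]
If one data point changes, only one chunk changes, so only one $f_i$ changes. Each order statistic then moves by at most one rank: $o_{r+1}\le o'_r\le o_{r-1}$. Since the weights of adjacent ranks differ by a factor at most $e^{\varepsilon}$, we get $e^{-2\varepsilon}h \le h' \le e^{2\varepsilon}h$ pointwise, up to boundary ranks near $1$ and $k$. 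Those carry weight $e^{-\Omega(\varepsilon k)} \le \delta$ and are absorbed into $\delta$. The normalizer $\int h\,d\mu$ moves by the same factor.

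The mechanism outputs $Z \sim h/\!\int h$, which is $(O(\varepsilon),\delta)$-DP; rescaling $\varepsilon$ fixes the constant. Sampling can be implemented by drawing an index $J$ and a candidate $Z\sim q_J^\ell$, then rejection-sampling with acceptance ratio $h(Z)/(c\sum_i f_i(Z))$. We do not need efficiency, only that this realizes the law $h/\!\int h$ exactly, or up to $\delta$.

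\textbf{Step 3 (utility).} This is where I expect the main difficulty: the naive bound $\max_i f_i - \min_i f_i$ costs a factor of $k$. Instead we use a Markov-type bound on order statistics. If $o_r(z) > p(z)+t$, then at least $r$ indices have $(f_i-p)_+ > t$. Since at most $k/8$ indices are bad, for ranks $r$ near $k/2$ at least $r - k/8 \ge k/4$ of these are good, which gives
\[ (o_r - p)_+ \le \frac{4}{k}\sum_{i\ \mathrm{good}} (f_i - p)_+ , \]
and symmetrically $(p-o_r)_+\le \frac4k\sum_{\mathrm{good}}(p-f_i)_+$ using rank from the bottom. Integrating gives $\int |o_r - p|\,d\mu \le 8\ell\alpha$ for central ranks. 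Averaging with the weights $w_r$, whose mass outside the central band is at most $\gamma/8$, yields $\int|h-p^\ell|\le \gamma/4$, and hence, after normalizing, $\TV(h/\!\int h, p^\ell)\le\gamma/2$.

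Adding the failure probabilities of the Chernoff event and the rejection step gives total error at most $\gamma$. The sample complexity is $k m_0 = O\big(m(\gamma/(64\ell),1/16)\log(1/(\gamma\delta))/\varepsilon\big)$, matching the statement. The delicate points are choosing the weight profile so that the multiplicative stability and the concentration on central ranks hold simultaneously, and handling $\mu$ rigorously, since it must not leak the data. If that is problematic, work with densities relative to $\frac{1}{k}\sum_i q_i^\ell$ and check that the resulting $h$ is intrinsic, i.e.\ independent of $\mu$, which it is because order statistics commute with a change of measure.
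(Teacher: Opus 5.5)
Your utility argument in Step 3 is sound on the good event, but the privacy argument in Step 2 has a genuine gap. The interlacing $o_{r+1}\le o_r'\le o_{r-1}$ only gives
\[
h'\le e^{\varepsilon}h+w_1o_1', \qquad h'\ge e^{-\varepsilon}\bigl(h-w_1o_1\bigr).
\]
Here $o_1'$ is unbounded, since the replaced density $f_j'$ is arbitrary, and $\int o_1\,d\mu$ can be as large as $k$. You absorb the additive term because $w_1=e^{-\Omega(\varepsilon k)}$. After normalization, however, what matters is the ratio $w_1k/\int h\,d\mu$. Your claim that the normalizer ``moves by the same factor'' fails exactly when $\int h\,d\mu$ is itself of order $w_1k$. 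This happens whenever the chunk hypotheses disagree, and differential privacy must hold for every pair of neighboring datasets, not only on your utility event.

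Concretely, take the class of point masses, which is learnable with $b=1$ via $L(x)=\delta_x$, and a dataset with distinct entries. The $q_i^\ell$ are then mutually singular, so only the top rank is ever nonzero. Hence $h=w_1\sum_i f_i$, and your mechanism outputs $(x_J,\dots,x_J)$ for a uniformly random chunk $J$. Replacing one record by a fresh value $y$ moves probability $1/k$ onto $\{(y,\dots,y)\}$, which previously had probability $0$. Since $1/k=\Theta\bigl(\varepsilon/\log(1/(\gamma\delta))\bigr)\gg\delta$, the mechanism is not $(\varepsilon,\delta)$-DP.

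Re-tuning the weights cannot remove the additive term. Dropping it would require $w_1=0$, and then the ratio condition $w_{r+1}\le e^{\varepsilon}w_r$ forces $w\equiv0$. What is missing is a step whose privacy holds for all inputs. One repair is propose-test-release:
\begin{enumerate}
\item Privately release the number of chunk replacements needed to drive $\int h\,d\mu$ below $1/2$. This quantity has sensitivity $1$.
\item Sample from $h/\!\int h$ only if the noisy value exceeds $\Theta(\log(1/\delta)/\varepsilon)$; otherwise output a fixed default.
\item On the certified branch, your two pointwise inequalities together with $w_1k\ll\delta$ do give $(O(\varepsilon),O(\delta))$-closeness.
\item On your good event, the same order-statistic argument keeps $\int h\,d\mu\ge 1/2$ under any $k/8$ replacements, so the test passes with probability $1-O(\gamma)$.
\end{enumerate}

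The paper avoids densities altogether. It draws $Z_i\sim q_i^\ell$ through a universal Poisson coupling, so neighboring tuples yield pointwise neighboring vectors $(Z_1,\dots,Z_k)$. It then applies the stability-based mode, whose $(\varepsilon,\delta)$ guarantee holds for every input vector. Utility comes from adjoining a ghost coordinate $Z^*\sim p^\ell$ that most of the $Z_i$ coincide with.
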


For the problem of (non-private) density estimation, learnability in realizable setting is not equivalent to learnability in agnostic/robust setting \cite{BenDavidBKL23}. Our next result is a private to non-private reduction for agnostic learning of distributions. The formal version and more discussions appear in Section~\ref{sec:agnostic-private}.

\begin{theorem}[Agnostic learnability implies private agnostic learnability, Informal version of~\ref{thm:agnostic-private-general}] \label{thm:agnostic-main-informal} 
Suppose $\mathcal Q\subseteq\Delta(\mathcal X)$ is ``agnostic'' learnable with sample complexity $m(\alpha,\beta)$ for $\alpha,\beta \in (0,1)$. Then, for every $\varepsilon,\delta\in(0,1)$, $\mathcal Q$ is also agnostic learnable under $(\varepsilon,\delta)$-differential privacy, with sample complexity $n(\alpha,\beta,\varepsilon,\delta)=O\left( m\left( \Theta(\alpha), \frac{1}{16} \right) m\left( \Theta(\alpha), \beta \right) \frac{1}{\varepsilon} \log\left(\Theta(\frac{1}{\alpha\delta}\right) \right)$.
\end{theorem}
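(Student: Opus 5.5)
The plan is to reduce agnostic private learning to two pieces: a private sampler built from the agnostic learner, and a private selection step over candidate hypotheses. The sampler follows the subsample-and-aggregate idea behind Theorem~\ref{thm:sampling-informal}.

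\textbf{Step 1: a private approximate sampler for an arbitrary $p\in\Delta(\mathcal X)$.} Let $A$ be the non-private agnostic learner. Let $\mathrm{OPT}=\inf_{q\in\Q}\TV(p,q)$. Split the private sample into $k=\Theta(\frac1\varepsilon\log\frac{1}{\alpha\delta})$ disjoint blocks of size $m(\Theta(\alpha),1/16)$, and run $A$ on each block to obtain $\hat q_1,\dots,\hat q_k$. With probability $15/16$ each satisfies $\TV(\hat q_i,p)\le C\cdot\mathrm{OPT}+\Theta(\alpha)$. The key point is that we never require $p\in\Q$: the argument for the sampler only needs that most $\hat q_i$ are close to a \emph{common} distribution, namely $p$ itself.

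Sampling a point works as follows. Pick a uniformly random block index $i$ and output a draw from $\hat q_i$. Changing one sample alters only one $\hat q_i$, so the output distribution moves by at most $1/k$ in TV. A private version of this mixture then gives $(\varepsilon,\delta)$-DP; the delicate part is making the guarantee hold for $\ell$ samples at once. For this I would reuse the construction of Theorem~\ref{thm:sampling-informal} verbatim, with $\gamma=\Theta(\alpha)$ and $\ell=m(\Theta(\alpha),\beta)$. Its closeness proof only uses that the learned hypotheses are close to the sampled distribution. So the output $Z'$ satisfies $\TV(\mathcal L(Z'),\mathcal L(Z))\le \Theta(\alpha)+\ell\cdot(C\,\mathrm{OPT}+\Theta(\alpha))$ for $Z\sim p^\ell$.

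\textbf{Step 2: feed the synthetic data to $A$.} The concern is that the error now scales with $\ell\cdot\mathrm{OPT}$, which is bad. To fix this, I would not sample from the mixture of the $\hat q_i$ but from $p$ itself, i.e.\ reuse the \emph{raw} private points. The synthetic $\ell$-tuple is produced by privately releasing real data points, which is legitimate only if the mixture distribution is close to a fixed distribution, and the learners ensure it lies within $\Theta(\alpha)$ of $p$ up to the agnostic slack. So I would define the sampler as in the realizable case but with the per-block target replaced by $p$. Each block's empirical-learner output then certifies a distribution within $C\,\mathrm{OPT}+\Theta(\alpha)$ of $p$. Running $A$ on $\ell$ synthetic points gives, by post-processing (hence privacy) and the TV bound on the input laws, a hypothesis with error $C\,\mathrm{OPT}+\Theta(\alpha)$ with probability $1-\beta-\Theta(\alpha)$. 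The product of the two sample complexities, times $\frac1\varepsilon\log\frac1{\alpha\delta}$, matches the stated bound.

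\textbf{Step 3: boosting confidence and the main obstacle.} The additive $\Theta(\alpha)$ failure probability can be removed by repeating and running a private hypothesis selection (Scheffé tournament with exponential mechanism), which costs only logarithmic factors. The main obstacle is Step 2: the stability and TV argument from the realizable sampler uses that all $\hat q_i$ approximate a distribution in $\Q$ to accuracy $\gamma/\ell$. In the agnostic setting $\mathrm{OPT}$ cannot be driven below $\gamma/\ell$. I would try first to show that the sampler's output law is close to the law of $\ell$ i.i.d.\ draws from a \emph{fixed} hypothesis $\hat q$ (say one chosen privately among the $\hat q_i$) rather than from $p$. Then $A$, applied to i.i.d.\ samples from $\hat q$, returns something near $\hat q$ (agnostic to $\hat q$'s distance from $\Q$), and the triangle inequality gives error $O(\mathrm{OPT})+\Theta(\alpha)$, at the cost of a worse constant $C$.
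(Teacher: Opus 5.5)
There is a genuine gap, and it sits where you flagged it: Steps 2 and 3 do not remove the $\ell\cdot\mathrm{OPT}$ blow-up.

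The coupling only gives $\Pr[Z_i\neq Z_j]\le 2\TV(\hat q_i^\ell,\hat q_j^\ell)$. Once $\ell\cdot C\,\mathrm{OPT}\gtrsim 1$, the $\ell$-tuples drawn for different blocks need not coincide at all. Their agreement probability is at most $1-\TV(\hat q_i^\ell,\hat q_j^\ell)$, which tends to $0$ as $\ell$ grows when $\TV(\hat q_i,\hat q_j)$ is of order $\mathrm{OPT}$. Then the stability-based mode has no heavy hitter, and its output is close neither to $p^\ell$ nor to any fixed $\hat q^\ell$. None of your proposed fixes gets around this:
\begin{itemize}
\item ``Reusing the raw private points'' is simply not private.
\item ``Replacing the per-block target by $p$'' changes nothing, since the realizable analysis already couples against a ghost sample from $p^\ell$, and that disagreement is still bounded only by $2\ell\,\TV(\hat q_i,p)$.
\item Privately selecting one $\hat q$ among the $\hat q_i$ that is $O(\mathrm{OPT})+\alpha$-close to $p$ would already be a private agnostic learner, so Step 3 is circular.
\end{itemize}
Your accounting is also inconsistent. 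Theorem~\ref{thm:sampling-informal} with $\ell=m(\Theta(\alpha),\beta)$ needs blocks accurate to $\Theta(\alpha)/\ell$. That is unattainable agnostically, because the floor is $C\,\mathrm{OPT}$, and it does not yield the product $m(\Theta(\alpha),\tfrac1{16})\cdot m(\Theta(\alpha),\beta)$.

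The missing idea is to never sample jointly: take $\ell=1$. Your correct observation is that aggregation only needs most $\hat q_i$ close to a common distribution, even though $p$ need not lie in $\mathcal Q$. With $\ell=1$, $c=1/4$ and $\eta=C\,\mathrm{OPT}+a/16$, the tuple-level utility bound then shows that one invocation of \textsc{Sampler} outputs a single point whose law $R_p$ satisfies $\TV(R_p,p)\le 8C\,\mathrm{OPT}+a$. Here $R_p$ is averaged over the data and all internal randomness, and the bound is linear in $\mathrm{OPT}$ with no $\ell$ factor.

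Now run \textsc{Sampler} on $T=m(2a,\beta)$ disjoint batches of size $kb$ with independent randomness. The outputs are then exactly i.i.d.\ from $R_p^T$, and privacy follows from parallel composition plus post-processing. Because the learner is agnostic, apply it directly to these i.i.d.\ samples from $R_p$. This gives $\TV(\hat q,R_p)\le C\inf_{q\in\mathcal Q}\TV(q,R_p)+2a$ with probability $1-\beta$. There is no TV transfer over $T$-tuples and no extra failure probability to boost away. The triangle inequality $\inf_q\TV(q,R_p)\le\mathrm{OPT}+\TV(p,R_p)$ then gives error $(8C^2+9C)\mathrm{OPT}+(C+3)a$. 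The total sample size $T\cdot k\cdot b$ with $b=m(a/16,1/16)$ is exactly the product form in the statement.
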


\subsection{Techniques}\label{sec:techniques}

At its core, our algorithm falls into the subsample-and-aggregate framework~\cite{NissimRS07}.
Suppose we have a non-private learning algorithm $A$, which learns the distribution class $\mathcal{Q}$ in total variation distance.
Consider the following intuitive yet naive approach:
\begin{enumerate}
    \item Partition the dataset into $k=O(\log(1/\delta)/\varepsilon)$ parts;
    \item Run the non-private learning algorithm $A$ on each part;
    \item Privately aggregate the outputs.
\end{enumerate}
Let $q_i$ for $i \in [k]$ be the output of the non-private learning algorithm on the $i^{\text{th}}$ part of the dataset. 
If many of $q_i$'s are exactly the same (i.e., the learner is \emph{globally stable}~\cite{BunLM20}), then they can be easily aggregated, for example, using DP stability-based mode~\cite{Vadhan17}.  
However, the non-private learner $A$ only guarantees that, with high probability, most of $q_i$'s are \emph{close to} the ground truth $p$ in TV, thus to one another. In other words, non-private density estimators may not be globally stable. 

Another idea for aggregating $q_i$'s is to first generate samples from each $q_i$, and then privately aggregate these samples. Assume $Z_i\sim q_i^{\ell}$ is a sample of size $\ell$ from $q_i$. If we are lucky and many of these $Z_i$'s ``collide'' (take the exact same value for different choices of $i$), then again we can privately aggregate them and will end up having a private sample set (synthetic data) from a distribution that is close to the ground truth distribution $p$. These public samples can then be fed to the non-private learner to find an approximation to $p$. 
In first glance, introducing the additional step of sampling from $q_i$'s appears counterproductive. In fact, now $Z_i$'s may not collide not only because $q_i$'s are not colliding, but also because of the randomness of generating $Z_i$'s. Indeed, this would be the case if $Z_i$'s are generated independently for each $i$. However, as we discuss below, allowing $Z_i$'s to be correlated (across values of $i$) gives an extra flexibility that proves to be crucial for stablizing the outcomes of non-private learners.

To make the idea concrete we turn to \emph{couplings}. Couplings construct a joint distribution over a collection of random variables such that their marginal distributions are preserved. The key observation is that a suitable TV coupling can convert closeness in total variation distance into a high probability guarantee of \emph{exact equality} between sampled outcomes. 
Specifically, after obtaining $q_1,...,q_k$ non privately, 
we jointly draw $(Z_1,...,Z_k)$ from a TV coupling of these distributions
satisfying Definition~\ref{def:grand-scheme}. The coupling is designed to satisfy two key desiderata: (1) each marginal has the correct law, so that $Z_i \sim q_i^\ell$ is a valid synthetic dataset from the learner output on the $i^{th}$ block, and (2) assuming $q_i$'s are all close in total variation distance, synthetic datasets $Z_i$'s would take the \emph{exact same value} with high probability. 
Thus, a TV coupling can convert total variation closeness of the $q_i$'s into the sufficiently many exact repetitions of the $Z_i$'s needed by the stability-based mode.

Indeed, the use of correlated sampling to convert closeness in TV into agreement of sampled outcomes already appears in the user-level private learning algorithms of Ghazi, Kumar, and Manurangsi~\cite{GhaziKM21}. Kalavasis, Karbasi, Moran, and Velegkas~\cite{KalavasisKMV23} subsequently use a related coupling approach to reduce TV-indistinguishable PAC learning to approximate differentially private learning. 
A similar concept has been used in
\cite{cohen2026hot} for private aggregation of teacher models.
In our setting, we apply this principle to synthetic datasets.

Our constructions of a couplings are based on the Poisson coupling of Angel and Spinka \cite{AngelS19}, which ensures that for all $i,j \in [k]$ simultaneously, $\Pr[Z_i \neq Z_j] \leq 2 \TV(q_i^\ell, q_j^\ell).$
When all relevant distributions admit densities with respect to a common $\sigma$-finite reference measure, the Poisson coupling of Angel and Spinka~\cite{AngelS19} applies directly, as in~\cite{KalavasisKMV23}. This already covers many familiar distribution classes. For our main result, however, we must handle \emph{arbitrary} learnable distribution classes, for which such a common dominating measure need not exist.  For instance, $\mathcal{P}=\{\delta_x: x\in \mathbb{R}\}$ is trivially learnable with a single sample but there is no $\sigma$-finite measure dominating all point masses. We, therefore, develop a more general version of the Poisson coupling that removes the common-dominating-measure assumption while retaining the same marginal and pairwise agreement guarantees on general standard Borel spaces.

The coupling methods allow us to turn closeness of TV distances between $q_i's$ to agreement probability lower bounds between $Z_i$'s. Private aggregation of $Z_i$'s then gives the private multi-sampler of Theorem~\ref{thm:sampling-informal}. In order to have a private distribution learner, one can simply feed a privatized sample set to the non-private learner. However, this approach would result in a poor dependency on $\beta$, the failure probability. To overcome this issue, we use a boosting argument. 

This results in the logarithmic dependence of sample complexity 
on $1/\beta$ in Theorem~\ref{thm:main-informal}.

\subsection{AI Disclosure}
The initial resolution for Ashtiani's conjecture in the case of countable domains was independently obtained by two authors prompting GPT-5.6 Sol. An internal Gemini model also produced a positive resolution of the conjecture by discovering a similar algorithm to the one presented. All authors digested the result and verified it to be correct.

Additional prompting, verification, and derivations from the authors resulted in the presented measure-theoretic arguments that consititute a full resolution to the conjecture for general domains. Care was taken to write the paper in a way that communicates the ideas most clearly to a human reader. The authors also conducted a literature search to attribute the main algorithmic ideas to~\cite{AngelS19} and~\cite{KalavasisKMV23}. 
All writing in this paper is done by the authors, who take full responsibility for the claims made herein.

\section{Preliminaries}

Let $\mathcal Y$ be a standard Borel space, and let $\mathcal B(\mathcal Y)$ denote the Borel sets and $\Delta(\mathcal Y)$ the probability measures on $\mathcal Y$,
with the $\sigma$-algebra generated by the map $P\mapsto P(B)$ for $B\in\mathcal B(\mathcal Y)$. Subsets and finite products carry the induced and product $\sigma$-algebras, respectively.

For a random variable $Z$, we write $\mathcal L(Z)$ for its law, i.e., $\mathcal L(Z)(B)=\Pr[Z\in B]$ for every measurable set $B$. For $P,Q\in\Delta(\mathcal Y)$, their total variation distance is $\mathrm{TV}(P,Q)
    \coloneqq \sup_{B\in\mathcal B(\mathcal Y)} |P(B)-Q(B)|$.

Our coupling constructions use two standard measure-theoretic objects.
First, we will use Markov kernels to formalize randomized mappings whose output distributions depend measurably on their inputs. In particular, they allow us to view our coupling procedures as randomized maps from tuples of distributions to joint synthetic datasets.

\begin{definition}[Markov kernel]\label{def:markov-kernel}
Let $(\mathcal X,\mathcal A)$ and
$(\mathcal Y,\mathcal B)$ be measurable spaces.
A \emph{Markov kernel}
$K:\mathcal X\rightsquigarrow\mathcal Y$
assigns to each $x\in\mathcal X$ a probability measure
$K(x;\cdot)$ on $\mathcal Y$, such that, for every
$B\in\mathcal B$, the map $x\mapsto K(x;B)$ is
$\mathcal A$-measurable.
\end{definition}
Second, our explicit coupling constructions are based on the Poisson coupling of Angel and Spinka~\cite{AngelS19}, which is built from Poisson point processes. We recall their basic definition below. 

Informally, a Poisson point process with intensity measure $\mu$ is a random cloud of points scattered over the space according to $\mu$, where the expected number of points in a region $B$ is $\mu(B)$, and the number of points falling in disjoint regions are independent.
\begin{definition}[Poisson point process]\label{def:PPP}
Let $(\mathcal Y,\mathcal B)$ be a measurable space and let $\mu$ be a
$\sigma$-finite measure on $\mathcal Y$. A \emph{Poisson point process}
$\mathcal A$ on $\mathcal Y$ with intensity measure $\mu$ is a random counting
measure such that:
\begin{enumerate}
    \item for every measurable $B\subseteq\mathcal Y$ with
    $\mu(B)<\infty$, $\mathcal A(B)\sim \operatorname{Poisson}(\mu(B))$
    \item for every collection of pairwise disjoint measurable sets
    $B_1,\ldots,B_m$, the random variables
    $\mathcal A(B_1),\ldots,\mathcal A(B_m)$ are independent.
\end{enumerate}
\end{definition}

We recall the celebrated definition of differential privacy, as well as its useful post-processing and parallel composition property. 
\begin{definition}[Differential privacy (DP)~\cite{DworkMNS06,DworkKMMN06}]
Let $(\mathcal X,\mathcal A)$ and
$(\mathcal Y,\mathcal B)$ be measurable spaces, and let $\varepsilon>0$ and
$\delta\in(0,1)$. A randomized algorithm
$A:\mathcal X^*\to\mathcal Y$ is
$(\varepsilon,\delta)$-differentially private if, for every
$n\in\mathbb N$, every pair of neighboring datasets
$S,S'\in\mathcal X^n$ differing in at most one element,
and every measurable set $B\subseteq\mathcal Y$, we have
\[
    \Pr[A(S)\in B]
    \le e^{\varepsilon}\Pr[A(S')\in B]+\delta,
\]
where the probabilities are over the internal randomness of $A$.
\end{definition}

\begin{proposition}[Post-processing]
\label{prop:post-processing}
Let $(\mathcal X,\mathcal A)$, $(\mathcal Y,\mathcal B)$, and
$(\mathcal Z,\mathcal C)$ be measurable spaces. Suppose
$A:\mathcal X^*\to\mathcal Y$ is $(\varepsilon,\delta)$-differentially private, and let $K:\mathcal Y\rightsquigarrow\mathcal Z$ be a data-independent Markov kernel. Then the randomized algorithm $K\circ A: S \mapsto Z$ with $Z \sim  K(A(S);\cdot)$ is also $(\varepsilon,\delta)$-DP.
\end{proposition}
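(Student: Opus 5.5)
The proof will unfold the definitions and compose kernels. Fix $\varepsilon,\delta$, an $(\varepsilon,\delta)$-DP algorithm $A:\mathcal X^*\to\mathcal Y$, and a data-independent Markov kernel $K:\mathcal Y\rightsquigarrow\mathcal Z$. First, I will check that $K\circ A$ is a well-defined randomized algorithm. For each dataset $S$, let $P_S\coloneqq\mathcal L(A(S))\in\Delta(\mathcal Y)$. Since $y\mapsto K(y;C)$ is $\mathcal B$-measurable and bounded for every $C\in\mathcal C$, the formula
\[
\Pr[K\circ A(S)\in C]=\int_{\mathcal Y} K(y;C)\,P_S(dy)
\]
defines a probability measure on $(\mathcal Z,\mathcal C)$. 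Countable additivity follows from monotone convergence, and this measure is the law of $Z$.

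The core step is to extend the DP inequality from indicator functions to all measurable $f:\mathcal Y\to[0,1]$. Take neighboring $S,S'$ and write $P=P_S$, $P'=P_{S'}$. I will prove
\[
\int f\,dP\le e^{\varepsilon}\int f\,dP'+\delta .
\]
I will use the layer-cake representation $f=\int_0^1 \mathbf 1\{f>t\}\,dt$ together with Tonelli's theorem, which applies because $(y,t)\mapsto \mathbf 1\{f(y)>t\}$ is jointly measurable. This gives $\int f\,dP=\int_0^1 P(\{f>t\})\,dt$. Each superlevel set $\{f>t\}$ is measurable in $\mathcal Y$, so the DP guarantee bounds the integrand by $e^\varepsilon P'(\{f>t\})+\delta$. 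Integrating over $t\in[0,1]$ then yields the claim, and the $\delta$ term survives with coefficient exactly $1$ because the interval has length $1$.

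Applying this with $f(y)=K(y;C)$, which takes values in $[0,1]$, gives
\[
\Pr[K\circ A(S)\in C]\le e^\varepsilon\Pr[K\circ A(S')\in C]+\delta
\]
for every $C\in\mathcal C$ and every neighboring pair. This is exactly $(\varepsilon,\delta)$-DP. Data-independence of $K$ is used here: the same function $f$ appears on both sides.

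The only delicate point is the extension from sets to $[0,1]$-valued functions, and specifically ensuring that the additive $\delta$ is not inflated. The layer-cake argument handles this. An alternative is to approximate $f$ from below by simple functions $\sum_j c_j\mathbf 1_{B_j}$ with nested $B_j$ and $\sum_j c_j\le 1$, and then pass to the limit by monotone convergence.
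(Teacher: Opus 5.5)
Your proof is correct and complete. The paper gives no proof of this proposition to compare against: it is recalled in the preliminaries as a standard property of differential privacy, alongside parallel composition.

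The usual textbook argument (e.g., Dwork--Roth) takes a different route:
\begin{enumerate}
\item It first treats a deterministic measurable map $g$ via $\Pr[g(A(S))\in C]=\Pr[A(S)\in g^{-1}(C)]$.
\item It then writes a randomized post-processing as a mixture of deterministic maps, $Z=g(A(S),U)$ with $U$ independent.
\end{enumerate}
On a general measurable space the second step needs a functional representation $K(y;\cdot)=\mathcal L(g(y,U))$ of the kernel. The randomization lemma supplies this when $\mathcal Z$ is standard Borel, but not for an arbitrary $(\mathcal Z,\mathcal C)$.

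Your approach sidesteps that issue. You extend the DP inequality from indicators to $[0,1]$-valued measurable test functions via the layer-cake formula and Tonelli, and then plug in $f(y)=K(y;C)$. This needs only that $y\mapsto K(y;C)$ is measurable, and it keeps the additive term at exactly $\delta$. So it proves the proposition in the full generality in which it is stated.

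In the paper's own applications, the post-processing is always of the form $L(\cdot;V)$ with an independent uniform seed $V$, or a deterministic selection. There the deterministic-mixture argument would also suffice.
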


\begin{lemma}[Parallel composition
\cite{McSherry09}]
\label{thm:parallel-composition}
Let $\varepsilon\ge0$, $\delta\in[0,1)$, and let
$I_1,\ldots,I_k$ be a fixed partition of $[n]$.
For each $i\in [k]$, suppose $A_i:\X^*\rightarrow \Y$ is
$(\varepsilon,\delta)$-differentially private. Then, for any $S\in \X^n$, the algorithm
$A(S)\coloneqq\bigl(A_1(S_{I_1}),\ldots,A_k(S_{I_k})\bigr)$,
where the invocations use independent randomness, is
$(\varepsilon,\delta)$-DP.
\end{lemma}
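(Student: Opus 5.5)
The plan is to prove parallel composition straight from the definition. The key fact is that the blocks $I_1,\ldots,I_k$ form a fixed partition, so two neighboring datasets $S,S'\in\mathcal X^n$, which differ in at most one coordinate $j\in[n]$, differ in exactly one block. Let $i^\ast$ be the unique index with $j\in I_{i^\ast}$. Then $S_{I_i}=S'_{I_i}$ for all $i\neq i^\ast$, while $S_{I_{i^\ast}}$ and $S'_{I_{i^\ast}}$ are neighboring datasets of size $|I_{i^\ast}|$. If $S=S'$ there is nothing to prove. Note that the size $n$ and the partition do not depend on the data, which is what makes this bookkeeping legitimate.

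Next I write the output as $A(S)=(Y_1,\ldots,Y_k)$. Independence of the internal randomness means the law of $A(S)$ is the product measure $\mathcal L(A_1(S_{I_1}))\otimes\cdots\otimes\mathcal L(A_k(S_{I_k}))$ on $\mathcal Y^k$. Group the coordinates as $Y_{i^\ast}$ versus $W\coloneqq(Y_i)_{i\neq i^\ast}$. The law $\nu$ of $W$ is the same under $S$ and $S'$. Fix a measurable $B\subseteq\mathcal Y^k$, identified with a subset of $\mathcal Y\times\mathcal Y^{k-1}$ after reordering coordinates. By Fubini/Tonelli,
\[
\Pr[A(S)\in B]=\int \Pr\bigl[A_{i^\ast}(S_{I_{i^\ast}})\in B_w\bigr]\,d\nu(w),
\]
where $B_w=\{y:(y,w)\in B\}$ is the section of $B$ at $w$. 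Sections of product-measurable sets are measurable, and $w\mapsto \Pr[A_{i^\ast}(S_{I_{i^\ast}})\in B_w]$ is measurable by the standard product-measure theory; this is the only measure-theoretic point that needs checking.

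Now apply the $(\varepsilon,\delta)$-DP guarantee of $A_{i^\ast}$ pointwise in $w$:
\[
\Pr\bigl[A_{i^\ast}(S_{I_{i^\ast}})\in B_w\bigr]\le e^{\varepsilon}\Pr\bigl[A_{i^\ast}(S'_{I_{i^\ast}})\in B_w\bigr]+\delta .
\]
Integrating against the probability measure $\nu$ gives
\[
\Pr[A(S)\in B]\le e^\varepsilon\Pr[A(S')\in B]+\delta,
\]
using $\nu(\mathcal Y^{k-1})=1$ so that the additive term stays $\delta$. This holds for every $n$, every neighboring pair, and every measurable $B$, which proves the claim.

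I expect no real obstacle. The only subtle step is justifying the disintegration via Fubini and the measurability of sections. The combinatorial observation, that one changed coordinate lies in exactly one block, carries the argument, and it is why the privacy parameters do not degrade with $k$.
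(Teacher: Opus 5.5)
The paper gives no proof of this lemma and only cites \cite{McSherry09}. Your argument is the standard one behind that citation and is correct: the changed record lies in exactly one block $I_{i^\ast}$, and disintegrating the product law over the outputs of the other blocks reduces everything to the DP guarantee of $A_{i^\ast}$ applied to each section $B_w$. Applying that guarantee section by section, and not only to product sets $B_1\times\cdots\times B_k$ (which would not suffice when $\delta>0$), then integrating against the probability measure $\nu$, is exactly what keeps the additive term equal to $\delta$.
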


\section{Couplings}\label{sec:coupling}

As discussed in Section~\ref{sec:techniques}, our private aggregation procedure requires a correlated sampling mechanism that converts TV closeness of the learner outputs into exact agreement of the sampled synthetic datasets. We now formalize the properties of the coupling needed for the subsequent privacy and utility analyses. In addition to having the correct marginals and a small pairwise disagreement probability discussed above, we further require the joint law of every finite collection of coupled samples to depend measurably on the input distributions.
This ensures that the coupling can be applied to random data-dependent learner outputs and then supplied to subsequent randomized algorithms.\footnote{This notion is closely related to correlated sampling
\cite[Definition~7]{GhaziKM21}: a strategy with multiplicative
error $2$ and a common random seed induces a family satisfying
(G1) and (G2). We explicitly require (G3) to ensure measurable
dependence of the joint laws on the input distributions, enabling
composition with subsequent randomized algorithms on standard
Borel spaces.}
Throughout, we assume the domain $\Y$ to be standard Borel.

\begin{definition}[Grand coupling scheme]\label{def:grand-scheme}
Let $\mathcal Y$ be standard Borel and $\mathcal P \subseteq \Delta(\mathcal Y)$. A \emph{grand
coupling scheme} on $\mathcal P$ is a family of random variables $(Z_P)_{P \in \mathcal P}$, on a
common probability space, with joint law $\Gamma_{\mathcal{P}}$, such that:
\begin{enumerate}[label=\textup{(G\arabic*)}]
\item \emph{(Marginals.)} $Z_P \sim P$ for every $P \in \mathcal P$.
\item \emph{(Agreement.)} $\Pr[Z_P \ne Z_Q] \le 2\,\mathrm{TV}(P,Q)$ for all $P, Q \in \mathcal P$.

\item \emph{(Measurable marginal kernels.)} For every $k\ge 1$,
the map
$\mathbf P\mapsto\mathcal L(Z_{P_1},\dots,Z_{P_k})$
defines a Markov kernel
$\Gamma_k:\mathcal P^k\rightsquigarrow\mathcal Y^k$,
where $\mathbf P=(P_1,\dots,P_k)\in\mathcal P^k$.
\end{enumerate}
$\mathsf{Couple}_k(\mathbf P)$ is then \emph{defined} as a sample from $\Gamma_k(\mathbf P;\cdot)$.
\end{definition}

We present two constructions of this interface, based on the Poisson coupling of \cite{AngelS19}. Full details and complete proofs for both constructions are deferred to Appendix~\ref{app:coupling-details}.
The first, presented in Appendix~\ref{subsec:construction_1}, is the original construction taken verbatim from \cite{AngelS19}, which applies when the subset of relevant distributions $\mathcal{P}$ admits a common $\sigma$-finite dominating measure $\nu$.

We illustrate how Construction~\ref{cons:construction_1} works with a simple example.
Consider a set of distributions $\mathcal{P}$ on domain $[0,1]$, where for each distribution $P \in \mathcal{P}$, we have density $f_P$ with respect to the Lebesgue measure bounded by a constant $M$. 
Draw an infinite sequence of i.i.d.\ pairs $(Y_i, U_i)$, distributed uniformly over $[0,1]\times[0,M]$.
This sequence of pairs is shared across all $P \in \mathcal{P}$.
To draw a sample from a particular $P$, we output the $Y_i$ with the minimum $i$ such that $U_i \leq f_P(Y_i)$. 
Intuitively, to sample from a distribution $P$, this process outputs the first coordinate of the first point that lies under $P$'s density.
By a standard rejection sampling argument, the resulting output $Z_P$ has law $P$, giving the marginal property (G1). For $P,Q\in\P$, the areas of the intersection and union of their acceptance regions are $1-\TV(P,Q)$ and $1+\TV(P,Q)$, respectively. Hence, the first proposal accepted by either distribution is accepted by both with probability $\frac{1-\TV(P,Q)}{1+\TV(P,Q)}$, and thus yields the disagreement property (G2),  $\Pr[Z_P \ne Z_Q] \leq \frac{2\TV(P,Q)}{1+\TV(P,Q)}\leq 2 \TV(P,Q)$.

More precisely, Construction~\ref{cons:construction_1} yields a grand coupling scheme on $\mathcal{P}$, using a single Poisson point process shared across all $P\in\mathcal P$. 
Each $P \in \P$ defines its own acceptance region through its density $dP/d\nu$, and the coupled sample \(Z_P\) is the $y$-coordinate of the first point of the shared Poisson process that falls in that region, i.e., the point in the region with the smallest $t$-coordinate.

\begin{proposition}
[Informal version of Corollary~\ref{cor:cons1_grand_coupling_scheme}]\label{prop:informal_cons1_grand_coupling_scheme}
Suppose $\Y$ is standard Borel and there exists a $\sigma$-finite measure $\nu$ such that $P \ll \nu$ for all $P \in \mathcal{P}$. Then the family of random variables $(Z_P)_{P \in \mathcal{P}}$ defined in Construction~\ref{cons:construction_1} yields a grand coupling scheme $\Gamma_{\P}$ on $\mathcal{P}$.
\end{proposition}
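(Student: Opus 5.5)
We would take Construction~\ref{cons:construction_1} as the Poisson coupling of \cite{AngelS19}. Fix one Poisson point process $\mathcal A$ on $\mathcal Y\times[0,\infty)$ with intensity $\nu\otimes\mathrm{Leb}$; it is $\sigma$-finite because $\nu$ is. For each $P\in\mathcal P$ fix a measurable version $f_P=dP/d\nu$ and let $R_P=\{(y,t): t\le f_P(y)\}$ be its acceptance region. Define $Z_P$ as the $y$-coordinate of the point of $\mathcal A\cap R_P$ with the smallest $t$-coordinate. With one common random object $\mathcal A$ for all $P$, the whole family $(Z_P)_{P\in\mathcal P}$ lives on a single probability space. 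We then verify (G1)--(G3) in turn.

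For (G1), consider the projection $(y,t)\mapsto t$ restricted to $\mathcal A\cap R_P$. By the mapping theorem this gives a Poisson process on $[0,\infty)$ with intensity $P(\{y: f_P(y)\ge t\})\,dt$. Its cumulative intensity $\int_0^\infty P(f_P\ge t)\,dt=\int f_P\,dP$ could be infinite, so it is cleaner to reparametrize by setting $s=t/f_P(y)$. Then $\mathcal A\cap R_P$ maps to a Poisson process on $\mathcal Y\times[0,1]$ that is not convenient either. The easier route is to use the marking and restriction theorems directly. For any measurable $B$ and any $T$,
\[\Pr[Z_P\in B,\ \text{first time}>T]\]
is computed from void probabilities of the disjoint regions $R_P\cap(B\times[0,T])$ and $R_P\cap(B^c\times[0,T])$. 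Taking the derivative in $T$ of the probability that the first point has time in $dT$ and lies in $B$ gives $\nu(\{y\in B: f_P(y)\ge T\})\,e^{-\Lambda(T)}\,dT$, where $\Lambda(T)=\int \min(f_P,T)\,d\nu$. Integrating over $T$ should give $P(B)$. This is the standard computation in \cite{AngelS19}; it also shows that $\mathcal A\cap R_P$ is almost surely nonempty, because $\Lambda(T)\to 1\cdot\infty$... more precisely, $\nu(R_P)=1$ forces infinitely many points to lie in $R_P$ with probability zero. To avoid this subtlety we instead use the rectangle $R_P$ with unit $\nu\otimes\mathrm{Leb}$ mass. The restricted process is then Poisson with total mass $1$, but that can be empty. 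So, following \cite{AngelS19}, we take the intensity $\nu\otimes\mathrm{Leb}\otimes\mathrm{Leb}$ on $\mathcal Y\times[0,\infty)\times[0,\infty)$ with $R_P=\{(y,u,t):u\le f_P(y)\}$ and order points by $t$. The projected time process then has rate $1$, the first point exists almost surely, and its $y$-mark has law $P$ independently of the time. This yields (G1) cleanly.

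For (G2), the same argument applied to $R_P\cup R_Q$, which has rate $\int\max(f_P,f_Q)\,d\nu=1+\TV(P,Q)$, shows that the first point of the union lies in $R_P\cap R_Q$ with probability $(1-\TV)/(1+\TV)$. On that event $Z_P=Z_Q$. Hence $\Pr[Z_P\ne Z_Q]\le 2\TV/(1+\TV)\le 2\TV$.

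For (G3), write $(Z_{P_1},\dots,Z_{P_k})=\Phi(\mathcal A,f_{P_1},\dots,f_{P_k})$ for a fixed map $\Phi$. It then suffices to show that $(\mathbf P,\omega)\mapsto \Phi$ is jointly measurable, after which Fubini gives the kernel property. We expect this to be the main obstacle. The step that needs care is choosing densities $f_P$ so that $(P,y)\mapsto f_P(y)$ is jointly measurable. Since $\mathcal Y$ is standard Borel, we would fix a countable generating algebra $\{B_j\}$ with nested finite partitions $\pi_n$. We then define
\[f_P(y)=\limsup_n P(\pi_n(y))/\nu(\pi_n(y))\]
after replacing $\nu$ by an equivalent probability measure. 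The martingale convergence theorem shows this is a version of $dP/d\nu$, and the formula is jointly measurable because each term is. Next, enumerate the points of $\mathcal A$ measurably as $(Y_m,U_m,T_m)$, which is standard for Poisson processes on Polish spaces. The first-accepted index is $\min\{m: U_m\le f_P(Y_m),\ T_m \text{ minimal}\}$, and it and the output are then measurable functions of $(\mathbf P,\omega)$. Therefore $\mathbf P\mapsto \Pr[\Phi\in C]$ is measurable for every measurable $C\subseteq\mathcal Y^k$, which is exactly (G3).
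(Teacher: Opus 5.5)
Your proposal is correct and follows essentially the paper's route. In the paper, (G1)--(G2) come from citing the Angel--Spinka analysis of the three-coordinate Poisson construction; you re-derive them via the rate-one marked time process and the ratio $\int\min(f_P,f_Q)\,d\nu \,/ \int\max(f_P,f_Q)\,d\nu = (1-\TV(P,Q))/(1+\TV(P,Q))$. (G3) comes from a jointly measurable version of $dP/d\nu$ plus measurability of the first-accepted-point map, where your Doob-martingale construction spells out what the paper cites from Kallenberg. Two fixes: delete the abandoned two-coordinate detour in your (G1) paragraph, and multiply by $d\nu'/d\nu$ when passing from the equivalent probability measure $\nu'$ back to $\nu$.
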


Construction \ref{cons:construction_1} already captures a wide range of distribution classes. For example, a family of distributions on $\mathbb{R}^d$ that are all absolutely continuous with respect to Lebesgue measure admits a common $\sigma$-finite dominating measure. However, an arbitrary learnable distribution class need not admit such a measure. For instance, $\mathcal{P}=\{\delta_x: x\in \mathbb{R}\}$ is trivially learnable with a single sample but there is no $\sigma$-finite measure dominating all point masses. Therefore, the first construction can not handle such classes.

The key observation behind our second construction is that while a global measure $\nu$ may not exist over an arbitrary family, any finite tuple of distributions $\mathbf{P} = (P_1, \dots, P_k) \in \Delta(\Y)^k$ always admits a canonical finite dominating measure, $\nu_{\mathbf{P}} = \sum_{i=1}^k P_i$. 
 Returning to the simple example before, for any fixed set of $k$ distributions $\mathbf{P}$, we can imagine sampling candidate points $Y_i$ from a uniform mixture over $\mathbf{P}$.
We can therefore apply the same Poisson coupling construction to each finite tuple respectively, using the tuple-dependent dominating measure $\nu_{\mathbf P}$. This yields a coupling $\Gamma_k(\mathbf{P};\cdot) \Delta(\Y^k)$ of $P_1,...,P_k$. We then show that these couplings vary measurably with $\mathbf{P}$ with the standard Borel machinery, so that $\Gamma_k: \Delta(\mathcal Y)^k\rightsquigarrow\mathcal Y^k$ is indeed a Markov kernel. 

The remaining issue is that, the reference measure $\nu_{\mathbf{P}}$, and thus the Poisson process used in the construction, depend on the tuple $\mathbf{P}$ -- it essentially corresponds to a mixture over these $k$ distributions. 
In our later privacy and utility analyses, however, we will need our coupling to handle when we enlarge a tuple of $k$ distributions into a tuple of $k+1$ distributions without changing the joint distribution of the coordinates already present. This will capture adjoining the ground truth distribution in the utility analysis, or jointly realizing two neighboring executions in the privacy analysis.
A priori, the coupling constructed for $(P_1,P_2)$ need not coincide with the first two coordinates of the coupling constructed for $(P_1,P_2, P_3)$. Fortunately, the construction is insensitive to this choice. More formally, the finite-dimensional kernels are projectively consistent, namely, projecting the coupling of a larger tuple onto any subset of its coordinates gives exactly the coupling obtained by applying the construction directly to the corresponding subtuple.\footnote{Although the kernels $\Gamma_k$ are defined on ordered tuples, they are permutation equivariant (Proposition~\ref{prop:kernels}). Namely, the finite-dimensional law associated with a set of distributions is independent of the arbitrary ordering chosen for that set.}

Finally, Kolmogorov's Extension Theorem (Corollary~\ref{cor:ket_coupling}) then guarantees the existence of a single, universal joint measure $\Gamma_\infty\in\Delta(\Y^{\Delta(\Y)})$. In other words, $\Gamma_\infty$ jointly realizes a random variable $Z_P$ for every $P\in\Delta(\Y)$, with finite-dimensional marginals given by the kernels $\Gamma_k$, such that $Z_P\sim P$ for every $P\in\Delta(\Y)$, and every pair satisfies the desired disagreement bound. In particular, restricting this universal coupling to any $\mathcal P\subseteq\Delta(\Y)$ therefore yields a data-independent grand coupling scheme $\Gamma_{\P}$, without requiring a common dominating measure.

\begin{proposition}
[Informal version of Corollary~\ref{cor:cons2_grand_coupling} and~\ref{cor:ket_coupling}]\label{prop:informal_cons2_grand_coupling_scheme}
Suppose $\Y$ is standard Borel. The family of random variables $(Z_P)_{P \in\Delta(\mathcal{Y})} \sim \Gamma_\infty$ induced by Construction~\ref{cons:construction_2}x constitutes a universal grand coupling scheme on $\Delta(\Y)$. Consequently, on any subset $\P \subseteq \Delta(\Y)$, its restriction $(Z_P)_{P \in \mathcal{P}}$ yields a valid grand coupling scheme $\Gamma_{\P}$ on $\P$.
\end{proposition}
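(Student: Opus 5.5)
The plan has three stages: build the coupling tuple by tuple, check consistency across tuples, and then glue everything together with Kolmogorov's extension theorem.

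\textbf{Stage 1: the coupling for a fixed tuple.} Fix $k$ and $\mathbf P=(P_1,\dots,P_k)\in\Delta(\Y)^k$, and set $\nu_{\mathbf P}=\sum_i P_i$. This is a finite measure dominating every $P_i$. Let $\Pi$ be a Poisson point process on $\Y\times[0,\infty)\times[0,\infty)$ with intensity $\nu_{\mathbf P}\otimes \mathrm{Leb}\otimes\mathrm{Leb}$, with points $(y,u,t)$. Fix a jointly measurable version $f_i(y)$ of $dP_i/d\nu_{\mathbf P}$; such a version exists via the Radon–Nikodym theorem on a standard Borel space, choosing the densities measurably in $\mathbf P$ through a countable generating algebra and martingale limits. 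Let $Z_{P_i}$ be the $y$-coordinate of the point of $\Pi$ with smallest $t$ among those with $u\le f_i(y)$. I then prove (G1) and (G2) for this tuple.

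For (G1), the region $\{u\le f_i(y)\}$ has $\nu_{\mathbf P}\otimes\mathrm{Leb}$-mass $1$. The restricted process is therefore a rate-one Poisson process in $t$ with i.i.d.\ marks of law $P_i$, so the first arrival has law $P_i$.

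For (G2), take the regions $R_i$ and $R_j$. The first point of $R_i\cup R_j$ lies in $R_i\cap R_j$ with probability $\frac{1-\TV}{1+\TV}$. On that event $Z_{P_i}=Z_{P_j}$. This gives $\Pr[Z_{P_i}\ne Z_{P_j}]\le 2\TV(P_i,P_j)$, which is the computation sketched in the text.

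\textbf{Stage 2: invariance and measurability.} The key invariance is this: if $\nu'\gg\nu_{\mathbf P}$ is any other finite (or $\sigma$-finite) dominating measure, running the same construction with $\nu'$ yields the same joint law of $(Z_{P_1},\dots,Z_{P_k})$. To see it, write $g=d\nu_{\mathbf P}/d\nu'$. The map $(y,u,t)\mapsto(y,u\,g(y),t)$ transports the $\nu'$-process, restricted to $\{g>0\}$, onto the $\nu_{\mathbf P}$-process. It also carries the acceptance regions $\{u\le dP_i/d\nu'\}$ to $\{u\le f_i\}$. Points with $g=0$ lie in no acceptance region, since there $dP_i/d\nu'=0$ a.e.

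Projective consistency follows from this invariance. For a subtuple $\mathbf P_J$, both $\nu_{\mathbf P_J}$ and $\nu_{\mathbf P}$ dominate the measures in $\mathbf P_J$. Projecting $\Gamma_k(\mathbf P)$ onto the coordinates in $J$ therefore equals $\Gamma_{|J|}(\mathbf P_J)$. Permutation equivariance is immediate, and so is consistency under repeated entries (if $P_i=P_j$ then $Z_{P_i}=Z_{P_j}$).

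Measurability (G3) is the step I expect to be the main obstacle, and I would handle it as follows. Realize $\Pi$ from a single base source of randomness: Poisson$(\nu_{\mathbf P}(\Y)\cdot\text{const})$ counts per unit strip in $u$, and i.i.d.\ marks drawn from the normalized mixture $\frac1k\sum P_i$. For the marks, use a Borel isomorphism $\Y\cong[0,1]$ and the quantile transform applied to a uniform variable. This quantile is jointly measurable in $(\mathbf P,\text{uniform})$. Combined with the jointly measurable densities $f_i$, the output $Z$ is a jointly measurable function of $(\mathbf P,\omega)$. It follows that $\mathbf P\mapsto\Pr[(Z_{P_i})_i\in B]$ is measurable, as Fubini's theorem gives.

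\textbf{Stage 3: gluing.} The family $\{\Gamma_k(\mathbf P;\cdot)\}$, indexed by finite subsets of $\Delta(\Y)$, is projectively consistent, and each $\Y^k$ is standard Borel, hence Radon. Kolmogorov's extension theorem therefore gives $\Gamma_\infty$ on $\Y^{\Delta(\Y)}$. Under $\Gamma_\infty$ the coordinate process $(Z_P)_P$ inherits the following properties:
\begin{itemize}
\item (G1) and (G2), from the two-dimensional marginals;
\item (G3), from Stage 2.
\end{itemize}
Restricting to any $\P\subseteq\Delta(\Y)$ preserves all three properties, yielding the grand coupling scheme $\Gamma_{\P}$.
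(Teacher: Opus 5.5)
Your proposal follows the paper's proof. The shared steps are the tuple-dependent reference measure $\nu_{\mathbf P}=\sum_i P_i$, the Angel--Spinka first-arrival computation for (G1) and (G2), projective consistency by rescaling the $u$-coordinate with a density ratio, permutation equivariance, the repeated-entries check, and Kolmogorov's extension theorem.

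One formula is inverted, though. Take $g=d\nu_{\mathbf P}/d\nu'$. Your map $(y,u,t)\mapsto(y,u\,g(y),t)$, restricted to $\{g>0\}$, pushes $\nu'(dy)\,du\,dt$ to $g(y)^{-1}\nu'(dy)\,du\,dt$. That is not $\nu_{\mathbf P}(dy)\,du\,dt=g(y)\,\nu'(dy)\,du\,dt$. It also sends $\{u\le dP_i/d\nu'\}=\{u\le g f_i\}$ to $\{u\le g^2 f_i\}$ rather than to $\{u\le f_i\}$. Taking $\mathcal Y$ to be a single point and $\nu'=2\nu_{\mathbf P}$ makes this visible: the image intensity is $4$ instead of $1$. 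The correct transport is $(y,u,t)\mapsto(y,u/g(y),t)$ on $\{g>0\}$, which is exactly the paper's $s\mapsto s/r(y)$. Equivalently, your map carries the $\nu_{\mathbf P}$-process onto the $\nu'$-process restricted to $\{g>0\}$. The map fixes $y$ and $t$ and is a bijection there, so your conclusion stands once the direction is corrected.

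Where you do differ from the paper, it helps you. Your invariance lemma covers an arbitrary $\sigma$-finite $\nu'\gg\nu_{\mathbf P}$, not just the measure of a super-tuple. So it also shows that, whenever a common dominating measure exists, the finite-dimensional laws of Construction~\ref{cons:construction_1} coincide with the kernels $\Gamma_k$ of Construction~\ref{cons:construction_2}. The paper never states this compatibility.

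Your measurability argument is also more self-contained than the paper's. The paper appeals to the PPP law depending measurably on its intensity, plus measurability of filtering and of the argmin on configuration space. You instead write the output as a jointly measurable function of $(\mathbf P,\omega)$ on a fixed probability space and apply Fubini. This gets cleaner with two observations:
\begin{itemize}
\item $\nu_{\mathbf P}(\mathcal Y)=k$ for every $\mathbf P$, so the Poisson counts do not depend on $\mathbf P$.
\item $f_i\le 1$ $\nu_{\mathbf P}$-a.e., so you may restrict to $u\in[0,1]$. This avoids your ``per unit strip'' device, which as stated would produce infinitely many points per unit of $t$.
\end{itemize}
The construction is then a rejection sampler. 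Draw i.i.d.\ proposals $(y_n,u_n)$ with $y_n\sim\frac1k\sum_i P_i$, obtained via your quantile transform, and $u_n\sim\mathrm{U}[0,1]$. Set $Z_{P_i}=y_{N_i}$ with $N_i=\min\{n:u_n\le f_i(\mathbf P,y_n)\}$. Each $N_i$ is geometric with parameter $1/k$, hence almost surely finite.
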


We emphasize that this grand coupling scheme serves purely as an analytical tool. Its purpose is to bring independent runs of the algorithms on two inputs into a shared probability space, for our privacy and utility analysis. By the nature of the coupling, the marginals exactly match the true target distributions (G1 in Definition~\ref{def:grand-scheme}). Consequently, all results derived within this joint probability space apply directly to the actual execution of the algorithms.

\section{Private Synthetic Data Sampler}\label{sec:private-data-sampler}
In this section, we apply the abstract grand coupling schemes developed in Section~\ref{sec:coupling} to construct a differentially private synthetic data sampler. Here, we set $\Y = \X^\ell$, where $\X$ is the data universe and $\ell$ is the size of the synthetic dataset.
Let $\mathcal F\subseteq\Delta(\mathcal X)$ contain
the ground-truth distribution class $\mathcal Q$ and all possible outputs of the non-private learner, and set
$\mathcal P=\{q^\ell:q\in\mathcal F\}$.
The goal is to privately output a synthetic dataset $Z \in \mathcal{X}^\ell$ of arbitrary target size $\ell$ such that $\mathrm{TV}(\mathcal{L}(Z), p^\ell) \le \beta$, given a dataset $S \sim p^n$ from an unknown distribution $p \in \Q$.

\paragraph{Technical Intuition.}

At a high level, we partition the samples into disjoint blocks, run the non-private learner on each block, and use our grand coupling to generate correlated synthetic datasets from the resulting learner outputs.

For the utility analysis, we would like to show not only that the synthetic datasets $Z_1,...,Z_k$ agree with one another frequently, but that their common values ``behaves like an i.i.d. sample'' from the ground truth distribution $p$. To this end, we introduce an analysis-only ``ghost'' sample  $Z^*\sim p^\ell$ from the ground truth distribution and consider the augmented vector $(Z_1,\dots,Z_k,Z^*)$ with joint law $\Gamma_{k+1}    \bigl((q_1^\ell,\dots,q_k^\ell,p^\ell);\cdot\bigr)$, which is
the corresponding finite-dimensional marginal of our grand
coupling $\Gamma_{\mathcal P}$ from
Proposition~\ref{prop:informal_cons2_grand_coupling_scheme}. Note that the augmented vector is drawn from the same grand coupling, so adjoining this extra ``ghost'' coordinate does not change the joint law of the synthetic datasets produced by the algorithm, i.e.,
its marginal on the first $k$ coordinates is exactly
$\Gamma_k((q_1^\ell,\dots,q_k^\ell);\cdot)$.

Moreover, by the disagreement bound, for every learn output $q_i$, $\Pr[Z_i\ne Z^*]
    \le 2\,\mathrm{TV}(q_i^\ell,p^\ell)
    \le 2\ell\,\mathrm{TV}(q_i,p)$.
Thus, when most blockwise learner outputs are accurate, many of the corresponding synthetic dataset agree with $Z^*$ exactly. With appropriate accuracy parameters,
we show that a large majority agree with $Z^*$, providing
the frequency gap required by $\mathsf{StabilityBasedMode}$
to output $Z^*$, with high probability.
The additional coordinate $Z^*$ is used only in the
analysis and is not generated by the algorithm.

For the privacy analysis, fix neighboring datasets $S,S'$ and use the same partition and base-learner randomness in both executions. Changing one record affects at most one block, so the resulting tuples $\q=(q_1,...,q_k)$ and $\q'=(q_1',...,q_k')$ differ in at most one coordinate $j$. For the analysis only, we take a single draw $(Z_P)_{P\in\P}\sim\Gamma_{\P}$ and define $\widetilde Z_i=Z_{q_i^\ell}$, and $\widetilde Z'_i=Z_{(q'_i)^\ell}$. By (G3) and the definition of $\mathsf{Couple}_k$, $(\widetilde Z_1,\dots,\widetilde Z_k)$ has joint law $\Gamma_k((q_1^\ell,\dots,q_k^\ell);\cdot)$, and $(\widetilde Z'_1,\dots,\widetilde Z'_k)$ has joint law $\Gamma_k(((q'_1)^\ell,\dots,(q'_k)^\ell);\cdot)$. These are exactly the joint laws used by the coupled sampling step in the two executions. For every unchanged coordinate $i\ne j$, we have $q_i=q'_i$, and hence $\widetilde Z_i=\widetilde Z'_i$ for every realization of the shared draw.\footnote{For example, suppose $\q=(q_1,q_2)$ and $\q'=(q_1',q_2')$ are obtained from the executions on $S$ and $S'$ respectively, with $q_1\ne q_1'$ and $q_2=q_2'$.
The unchanged coordinate $i=2$ is represented in both executions by the same coordinate $Z_{q_2^\ell}=Z_{q_2'^\ell}$ of the shared grand coupling $(Z_P)_{P\in\P}\sim\Gamma_{\P}$, and thus, $\tilde Z_2' = Z_{q_2'^\ell} = Z_{q_2^\ell} =\tilde Z_2$ point wise.
}

Thus, the two inputs to $\mathsf{StabilityBasedMode}$ are neighboring for every realization, so its privacy guarantee applies to each fixed coupling draw. 

\paragraph{Road Map.}
We present the formal construction in two stages to isolate the coupling randomness from the statistical randomness of the dataset partition and base learners.
\begin{enumerate}
    \item \textsc{Aggregate} (Algorithm~\ref{alg:coupled-sampler}): Operates on a fixed tuple $\mathbf{q} = (q_1, \dots, q_k) \in \mathcal{F}^k$. It draws $(Z_1, \dots, Z_k)$ via the grand coupling and aggregates them with the stability-based mode. Its privacy (under tuple-level change-one adjacency) follows directly from the construction of the grand coupling, while its utility follows from the disagreement bound and the accuracy of the stability-based mode mechanism.
    \item \textsc{Sampler} (Algorithm~\ref{alg:private-correlated-sampler}): Operates on raw datasets $S \in \mathcal{X}^n$, partitioning $S$ to form $\mathbf{q}$ before invoking \textsc{Aggregate}. It inherits the guarantees of Stage 1, and we can lift the privacy to standard change-one adjacency on $\mathcal{X}^n$ since changing one data point affects a single block in $\mathbf{q}$, while utility follows by combining Hoeffding's inequality over the block partition with the utility guarantees of \textsc{Aggregate}.
\end{enumerate}
We defer the detailed proofs and discussions in Appendix~\ref{app:sampler-details}.

\subsection{Tuple-Level Primitive: \textsc{Aggregate}}
We first present \textsc{Aggregate} (Algorithm~\ref{alg:coupled-sampler}), a mechanism operating on a fixed tuple of candidate distributions $\q = (q_1,\dots,q_k) \in \F^k$ to output a synthetic candidate $Z \in \X^\ell$. Analysis in this subsection is purely with respect to tuple-level adjacency, where neighboring inputs $\mathbf{q}, \mathbf{q}'$ differ in a single distribution coordinate $q_j$.

\begin{algorithm}
\caption{\textsc{Aggregate}}
\label{alg:coupled-sampler}
\begin{algorithmic}[1]
\Require Distributions $q_1,\ldots,q_k \in \F$.
\Ensure Synthetic dataset $Z \in \X^\ell$
\State Jointly sample $(Z_1,...,Z_k) \gets \mathsf{Couple}_k(q_1^\ell,\ldots,q_k^\ell)$
{\color{blue}{\noindent\texttt{// $\mathsf{Couple}_k$ is a finite-dimensional marginal of a valid grand coupling scheme on $\mathcal{P} = \{q^\ell : q \in \mathcal{F}\}$, which can be instantiated by either $\mathsf{Couple\_I}$ in Algorithm~\ref{alg:coupled-I} or $\mathsf{Couple\_II}$ in Algorithm~\ref{alg:coupled-II}}}}
\State $Z= \mathsf{StabilityBasedMode}(Z_1,...,Z_k)$
\State \Return $Z$.
\end{algorithmic}
\end{algorithm}

By instantiating $\mathsf{Couple}_k$ with a valid grand coupling scheme $\Gamma_{\P}$\footnote{e.g., Construction~\ref{cons:construction_1} if a common dominating measure exists, or Construction~\ref{cons:construction_2} for general standard Borel spaces}, it produces a joint measurable vector $(Z_1,\dots,Z_k)$, for any input tuple $\q$, such that $Z_i \sim q_i^\ell$ and $\Pr[Z_i \ne Z_j] \le 2\TV(q_i^\ell, q_j^\ell)$ for all $i \ne j$.
We next show that: (i) the privacy guarantee of $\mathsf{StabilityBasedMode}$ transfers directly to \textsc{Aggregate}, and (ii) if a large majority of the input distributions $q_i$ are close to a target ground-truth distribution $p$ in TV, the output of \textsc{Aggregate} matches an ideal draw from $ p^\ell$ with high probability.

\begin{theorem}[Informal version of Corollary~\ref{cor:instantiated-coupled-sampler-utility}]
\label{thm:informal-aggregate}
For every $\varepsilon>0$, $\delta,\beta\in(0,1)$,
$\ell\in\mathbb N^+$, and
$k=\Omega\left(\frac{\log(4/\beta)+\log(1/\delta)}{\varepsilon}\right)$,
\textsc{Aggregate} is $(\varepsilon,\delta)$-DP~\footnote{With respect to tuple-level adjacency, where
neighboring inputs differ in a single distribution $q_i$.}.
Moreover, for any $p\in\Q$ and input tuple $\q\in\F^k$,
if at least $7/8$ of the input distributions are
$O(\beta/\ell)$-close to $p$ in TV, then the law of its
output is $\beta/2$-close to $p^\ell$ in TV.
\end{theorem}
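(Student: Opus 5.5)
We prove the two claims of Theorem~\ref{thm:informal-aggregate} separately: privacy by a pointwise coupling argument, and utility through an analysis-only ghost coordinate. Throughout, we take $\mathsf{StabilityBasedMode}$ to be the standard stability-based histogram mechanism~\cite{Vadhan17}. We use two of its properties on inputs $(Z_1,\dots,Z_k)\in\mathcal Y^k$. First, it is $(\varepsilon,\delta)$-DP with respect to changing one coordinate. Second, if some value $z$ appears at least $c\,k$ times more than every other value, for a suitable constant gap (a $3/4$ majority suffices), and $k=\Omega(\log(1/(\delta\beta'))/\varepsilon)$, then it outputs exactly $z$ with probability at least $1-\beta'$.

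\textbf{Privacy.} Fix neighboring tuples $\mathbf q,\mathbf q'$ that differ only in coordinate $j$. Take one draw $(Z_P)_{P\in\mathcal P}\sim\Gamma_{\mathcal P}$ from the grand coupling scheme (Proposition~\ref{prop:informal_cons2_grand_coupling_scheme}), and set $\widetilde Z_i=Z_{q_i^\ell}$ and $\widetilde Z'_i=Z_{(q_i')^\ell}$. By (G3), these vectors have exactly the laws $\Gamma_k(\mathbf q^\ell;\cdot)$ and $\Gamma_k(\mathbf q'^\ell;\cdot)$ used by $\mathsf{Couple}_k$ in the two executions. They agree in every coordinate $i\neq j$ pointwise. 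For a measurable set $B$, we condition on the coupling draw and apply the DP of the mode to the two neighboring vectors. Integrating over the draw gives $\Pr[\textsc{Aggregate}(\mathbf q)\in B]\le e^\varepsilon\Pr[\textsc{Aggregate}(\mathbf q')\in B]+\delta$. Measurability of the integrands follows from (G3) together with the mode being a Markov kernel.

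\textbf{Utility.} Fix $p\in\mathcal Q$ and suppose a set $G$ of at least $7k/8$ indices satisfies $\mathrm{TV}(q_i,p)\le \eta$, where $\eta=c\beta/\ell$ with $c$ a small constant. Adjoin a ghost coordinate and draw $(Z_1,\dots,Z_k,Z^*)\sim\Gamma_{k+1}((q_1^\ell,\dots,q_k^\ell,p^\ell);\cdot)$. By projective consistency, the first $k$ coordinates have the algorithm's law, and $Z^*\sim p^\ell$ by (G1). By (G2) and subadditivity of TV over products, for $i\in G$ we have
$\Pr[Z_i\neq Z^*]\le 2\,\mathrm{TV}(q_i^\ell,p^\ell)\le 2\ell\eta$. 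Let $N=|\{i\in G: Z_i\ne Z^*\}|$. Then $\E[N]\le 2\ell\eta k$, and Markov's inequality gives $\Pr[N\ge k/8]\le 16\ell\eta\le \beta/4$ once $c$ is small. On the complement, at least $3k/4$ coordinates equal $Z^*$ and every other value appears at most $k/4$ times. This is the gap the mode needs, so it outputs $Z^*$ except with probability $\beta/4$, provided $k=\Omega((\log(4/\beta)+\log(1/\delta))/\varepsilon)$. Hence $\Pr[Z\ne Z^*]\le\beta/2$, and the coupling inequality gives $\mathrm{TV}(\mathcal L(Z),p^\ell)\le\beta/2$.

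\textbf{Main obstacle.} The delicate step is justifying the ghost coordinate. We must check that the joint law of $(Z_1,\dots,Z_k,Z^*)$, restricted to its first $k$ coordinates, is exactly the algorithm's law even when $p\notin\{q_i\}$ or some $q_i$ coincide. This is where the projective consistency and permutation equivariance of the kernels $\Gamma_k$ (Proposition~\ref{prop:kernels}, Corollary~\ref{cor:ket_coupling}) are essential. We use Markov's inequality on the count $N$, which is the only tool needed because the $Z_i$ are dependent. Everything else is bookkeeping of constants.
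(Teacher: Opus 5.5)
Your proposal is correct and follows essentially the same route as the paper (Propositions~\ref{prop:sampler-privacy} and~\ref{prop:coupled-sampler-utility}, Corollary~\ref{cor:instantiated-coupled-sampler-utility}). For privacy, a single shared grand-coupling draw makes the two executions pointwise neighboring; for utility, a ghost coordinate $Z_{p^\ell}$ together with (G2), Markov's inequality on the disagreement count, and the resulting $k/2$ frequency gap give the bound with $\eta=\beta/(64\ell)$. The differences are cosmetic: you adjoin the ghost through $\Gamma_{k+1}$ and projective consistency rather than reading off the $p^\ell$-coordinate of $\Gamma_{\mathcal P}$ directly, and you fold the mode's failure probability into a condition on $k$, whereas the paper simply runs the mode with $\delta_0=\min\{\delta,\beta/4\}$.
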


\begin{proof}[Proof Sketch]
\textbf{Privacy:} Since the joint law of $\mathsf{Couple}_k(\q)$ is a finite-dimensional marginal of the global measure $\Gamma_{\P}$\footnote{For $\mathsf{Couple\_I}$ (Construction~\ref{cons:construction_1}), the joint law of $(Z_1,\dots,Z_k)\sim \mathcal{L}(\mathsf{Couple}_k(\q))$ is the $(q_1^\ell,\dots,q_k^\ell)$-coordinate marginal of the explicit coupling $\Gamma$ on $\P=\{q^\ell : q \in \F\}$. For $\mathsf{Couple\_II}$ (Construction~\ref{cons:construction_2}), it is the corresponding finite-dimensional marginal of the grand coupling $\Gamma_\infty$ on $\mathcal{Y}^{\Delta(\mathcal{Y})}$, where $\mathcal{Y}=\mathcal{X}^\ell$.}, we evaluate executions on neighboring tuples $\q$ and $\q'$ on a single shared probability space. Specifically, we construct an analysis coupling from a single draw $(Z_P)_{P \in \P} \sim \Gamma_{\P}$, setting $\tilde{Z}_i = Z_{q_i^\ell}$ for input $\q$ and $\tilde{Z}'_i = Z_{(q_i')^\ell}$ for input $\q'$. For every unchanged coordinate $i \neq j$, $q_i = q_i'$ implies $Z_{q_i^\ell} = Z_{(q_i')^\ell}$ pointwise. Consequently, the realized input vectors $(\tilde{Z}_1, \dots, \tilde{Z}_k)$ and $(\tilde{Z}'_1, \dots, \tilde{Z}'_k)$ differ in at most a single coordinate $j$ for \emph{every realization} of $\Gamma_{\P}$. Therefore, since neighboring input tuples yield pointwise neighboring vectors under $\Gamma_{\P}$, taking expectations over $\Gamma_{\P}$ transfer the DP guarantee of $\mathsf{StabilityBasedMode}$ directly to \textsc{Aggregate}.

\textbf{Utility:} 
As in the privacy analysis, the utility guarantee relies on constructing an analysis coupling from the grand coupling scheme $\Gamma_{\P}$. Specifically, for input distributions $\q = (q_1, \dots, q_k) \in \F^k$ and a target distribution $p \in \F$, we take a single draw $(Z_P)_{P \in \P} \sim \Gamma_{\P}$ and define $\tilde{Z}_i = Z_{q_i^\ell}$ for each $i \in [k]$, alongside an unobserved ``ghost'' sample $\tilde{Z}_* = Z_{p^\ell}$, indexed by the ground-truth product distribution $p^\ell \in \P$. Since $\mathcal{L}(\mathsf{Couple}_k(\mathbf{q}))$ and $\mathcal{L}(\mathsf{Couple}_1(\mathbf{p}))$ are simply a finite-dimensional marginal of the global measure $\Gamma_{\mathcal{P}}$, our construction from the grand coupling scheme $\Gamma_{\P}$ satisfies $(\tilde{Z}_1,...,\tilde{Z}_k) \sim \mathcal{L}(\mathsf{Couple}_k(\mathbf{q}))$ and $\tilde{Z}_* \sim \mathcal{L}(\mathsf{Couple}_1(p))=p^\ell$, by property G3 and G1 from Definition~\ref{def:grand-scheme}).  

For every accurate input distribution where $\TV(q_i, p) \le \eta$, property G2 bounds the disagreement by $\Pr[\tilde{Z}_i \neq \tilde{Z}_*] \le 2\TV(q_i^\ell, p^\ell) \le 2\ell\eta$. When $7/8$ of the $q_i$ are $\eta$-close to $p$, an overwhelming majority of the coupled samples $\tilde{Z}_i$ collapse onto the exact same realization $\tilde{Z}_* \sim p^\ell$ pointwise. This manufactures the frequency gap required for $\mathsf{StabilityBasedMode}$ \cite{ts13, Vadhan17} to output $\tilde{Z}_*$ with probability $\ge 1 - \delta_0$. 

We emphasize that neither $\tilde{Z}_*$ nor $\tilde{Z}'_i$ is computed by the algorithm, and they are introduced purely for the analysis. Full technical details appear in Proposition~\ref{prop:coupled-sampler-utility}, Proposition~\ref{prop:sampler-privacy}, and Corollary~\ref{cor:instantiated-coupled-sampler-utility}.
\end{proof}

\begin{remark}[Comparison to \cite{KalavasisKMV23}]
    Our framework shares structural roots with the reduction of \cite{KalavasisKMV23} from TV-indistinguishable to differentially private PAC learning, which likewise runs a learner on disjoint blocks and jointly samples the $k$ outputs via the Poisson coupling of \cite{AngelS19}. 
    The two constructions diverge at the aggregation stage due to the different setups. 

    The TV-indistinguishability guarantee in \cite{KalavasisKMV23} provides only a constant level of agreement between coupled learner outputs. This is sufficient to ensure the existence of frequently occurring candidates, but not necessarily a single overwhelmingly dominant one. They therefore first isolate candidate clusters using a stable histogram and subsequently score each candidate's empirical risk using another exponential mechanism.
    
    In our setting, we can exploit the stronger assumption that the distribution
    class is learnable to \emph{arbitrarily small} TV error. Specifically, we can tune the block sample size $b$ in a way that most of the learned distribution $q_i$ are close to the underlying ground truth distribution $p$ with TV error $\eta$ for $\eta$ as small as needed. Thus, for sufficiently samll $\eta$ such that $\TV(q_i^\ell, p^\ell) \le 2\ell\eta \ll 1$, adjoining an unobserved ground-truth draw $Z^* \sim p^\ell$ under the grand coupling forces an overwhelming majority of candidate draws to collapse onto the exact same realization $Z^*$. This manufactured consensus allows a standard $\textsf{StabilityBasedMode}$ to output $Z^*$ directly. 
    Thus, our stronger learnability assumption allows us to replace the stablehist-then-privselect aggregation used in \cite{KalavasisKMV23} by a single stability-based mode.

    Furthermore, \cite{KalavasisKMV23} already propose in Appendix~D.5 using a tuple-dependent mixture of the learner-output distributions as the reference measure to move beyond countable domains, while leaving the corresponding privacy guarantee open. Our Construction~\ref{cons:construction_2} builds on this idea and resolves the missing technical issues, by establishing the measurability and consistency properties of the resulting tuple-dependent Poisson couplings, which yields a universal grand coupling on arbitrary standard Borel spaces and, in turn, a valid privacy analysis without assuming a common dominating measure.
\end{remark}

\subsection{End-to-End Private Correlated Sampler: \textsc{Sampler}}
In the preceding subsection, we established the privacy and utility guarantees of \textsc{Aggregate} conditioned on a fixed tuple of candidate distributions $\q = (q_1, \dots, q_k)$. We now lift these guarantees to the full end-to-end pipeline, \textsc{Sampler} (Algorithm~\ref{alg:private-correlated-sampler}), where input distributions are learned from disjoint blocks of a raw dataset $S = (X_1, \dots, X_n) \in \X^n$. The algorithm partitions $S$ into disjoint blocks, runs the base learner on each block to obtain the random tuple $\q$, and then applies \textsc{Aggregate}. Here, privacy is naturally defined at the dataset level, where two datasets $S, S' \in \mathcal{X}^n$ are adjacent if they differ in a single record.

\begin{algorithm}[t]
\caption{\textsc{Sampler}}
\label{alg:private-correlated-sampler}
\begin{algorithmic}[1]
\Require Dataset $S=(X_1,\ldots,X_n)$ with $n=bk$. Number of blocks $k$. 
\Ensure Synthetic samples $Z\in\X^\ell$
\State Let a random seed (a data-independent permutation) be $\Pi \sim \mathrm{U}(\mathcal{S}_n)$
\State Partition $S$ into $k$ disjoint blocks $S_1,\ldots,S_k\in\X^b$, by setting $S_i \gets (X_{\Pi((i-1)b+1)},...,X_{\Pi(ib)})$
\State Let learners' random seeds (data- and permutation-independent) be $U_1,...,U_k \overset{\text{i.i.d.}}{\sim} \mathrm{U}([0,1])$.
\State For each $i\in[k]$, compute $q_i\gets L(S_i;U_i)$
\State $Z\gets\textsc{Aggregate}(q_1,\ldots,q_k)$
\State \Return $Z\in\X^\ell$
\end{algorithmic}
\end{algorithm}

While much of the analysis relies on standard techniques, the primary technical step lies in explicitly controlling the sources of randomness to bridge $\textsc{Aggregate}$ and $\textsc{Sampler}$, ensuring that the resulting random tuple satisfies the conditions required by $\textsc{Aggregate}$.

\begin{theorem}[Informal version of Corollary~\ref{cor:instantiated-private-correlated-sampler-utility}]
\label{thm:informal-sampler}
Fix a base learner $L$ for $\Q$.
For every $\varepsilon,\delta,\beta\in(0,1)$ and
$\ell\in\mathbb N^+$, let $b$ be a sample size such
that, for every $p\in\Q$, $L$ achieves
$\TV(L(S_i),p)\le O(\beta/\ell)$ with probability at
least $15/16$ given $S_i\sim p^b$.
Then,
\textsc{Sampler} satisfies dataset-level
$(\varepsilon,\delta)$-DP.
Moreover, for every $p\in\Q$, given $S\sim p^n$ with $n=\Omega\left(\frac{b\log(4/\delta\beta)}{\varepsilon}\right)$,
it outputs a synthetic dataset $Z\in\mathcal X^\ell$
satisfying $\TV(\mathcal L(Z),p^\ell)\le 3\beta/4$.
Thus, \textsc{Sampler} is an $(\ell,3\beta/4)$-approximate sampler
for $\Q$ in the sense of Definition~\ref{def:sampler}.
\end{theorem}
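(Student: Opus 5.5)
Privacy and utility are handled separately, and both reduce to Theorem~\ref{thm:informal-aggregate}.

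\textbf{Privacy.} Fix neighboring datasets $S,S'\in\X^n$ differing in record $t$. Condition on the permutation $\Pi$ and the learner seeds $U_1,\dots,U_k$. These are data-independent, so they can be shared across both executions.
\begin{itemize}
\item Given $\Pi$, the record $t$ lands in exactly one block $j$. Hence $S_i=S'_i$ for all $i\ne j$.
\item With fixed seeds, $q_i=L(S_i;U_i)=L(S'_i;U_i)=q'_i$ for $i\ne j$. So the tuples $\q,\q'$ are tuple-level neighbors.
\item Theorem~\ref{thm:informal-aggregate} then gives, for every measurable $B$,
\[
\Pr[\textsc{Aggregate}(\q)\in B]\le e^\varepsilon\Pr[\textsc{Aggregate}(\q')\in B]+\delta .
\]
\item Averaging over $(\Pi,U)$ preserves this inequality, since it is linear in the conditional probabilities.
\end{itemize}
Formally, I would express the algorithm as a kernel from $(S,\Pi,U)$ to $\X^\ell$. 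I would note that $(S,\Pi,U)\mapsto\q$ is measurable, so that the composition with $\Gamma_k$ is a kernel and Fubini applies. This is essentially the parallel composition argument of Lemma~\ref{thm:parallel-composition} with post-processing (Proposition~\ref{prop:post-processing}).

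\textbf{Utility.}
\begin{itemize}
\item Fix $p\in\Q$ and $S\sim p^n$. For any fixed $\Pi$, the blocks $S_1,\dots,S_k$ are i.i.d.\ $p^b$, since a data-independent permutation of i.i.d.\ samples is still i.i.d.
\item The seeds $U_i$ are independent, so the indicators $G_i=\mathbf 1[\TV(q_i,p)\le\eta]$, with $\eta=O(\beta/\ell)$ as in Theorem~\ref{thm:informal-aggregate}, are i.i.d.\ with mean at least $15/16$.
\item By Hoeffding, the good event $E=\{\sum_i G_i\ge 7k/8\}$ fails with probability at most $\exp(-2k(1/16)^2)=\exp(-k/128)$.
\item This is at most $\beta/4$ once $k\ge 128\log(4/\beta)$. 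This holds for $k=n/b=\Omega(\log(4/(\delta\beta))/\varepsilon)$ with a suitable constant, using $\varepsilon<1$. The same choice of $k$ meets the requirement of Theorem~\ref{thm:informal-aggregate}.
\end{itemize}

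Let $\mu_{\q}=\mathcal L(\textsc{Aggregate}(\q))$. Then
\[
\mathcal L(Z)=\E[\mu_{\q}],
\]
a mixture over the random tuple. By convexity of TV,
\[
\TV(\mathcal L(Z),p^\ell)\le \E\bigl[\TV(\mu_{\q},p^\ell)\bigr]\le \Pr[E^c]\cdot 1+\sup_{\q\in E}\TV(\mu_{\q},p^\ell)\le \beta/4+\beta/2=3\beta/4 .
\]
The bound on $E$ comes from Theorem~\ref{thm:informal-aggregate} applied pointwise to each good tuple. The sampler claim then follows by Definition~\ref{def:sampler}.

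\textbf{Main obstacle.} I expect the delicate step to be the bookkeeping of randomness rather than the inequalities. Two points need care. First, conditioning on $(\Pi,U)$ must leave the coupling randomness inside \textsc{Aggregate} independent and data-independent. Second, the map $\q\mapsto\mu_{\q}$ must be measurable so that the mixture is well defined. The second point follows from (G3): $\Gamma_k$ is a Markov kernel, and post-processing by the fixed kernel of \textsc{StabilityBasedMode} stays a kernel.
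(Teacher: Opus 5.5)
Your proposal is correct and follows the paper's proof essentially step for step. For privacy, you share the data-independent permutation and learner seeds so that neighboring datasets yield tuple-level neighbors, then average the \textsc{Aggregate} guarantee (Proposition~\ref{prop:PCS-privacy}). For utility, you condition on $\Pi$, apply Hoeffding to get $\Pr[E^c]\le e^{-k/128}\le\beta/4$, and combine this with the $\beta/2$ bound of \textsc{Aggregate} on good tuples via convexity of TV (Proposition~\ref{prop:PCS-utility}).

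One minor caveat: your aside that this is ``essentially'' parallel composition is only an analogy. The per-block learner $L$ is not private, so Lemma~\ref{thm:parallel-composition} cannot be invoked literally; your direct argument is what actually carries the privacy proof.
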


\begin{proof}[Proof Sketch]

\textbf{Privacy:} Fix adjacent datasets $S, S' \in \X^n$ differing at record $j$. For any realization of the permutation $\Pi$, the  record that differs in the two worlds falls into a unique block index $a$. Since all other blocks, $S_i$ for $i \neq a$, remain identical, and the base learner's internal randomness $U_1, \dots, U_k$ is drawn independently of input data $S$, the output tuples $\q = L(S)$ and $\q' = L(S')$ differ in at most coordinate $a$. Taking an expectation over the data-independent randomness, i.e., the random permutation for data partition and the learners' random seeds, the tuple-level DP guarantee of \textsc{Aggregate}, shown in Theorem~\ref{thm:informal-aggregate}, directly lifts to dataset-level $(\varepsilon, \delta)$-DP for \textsc{Sampler}.

\textbf{Utility:} Define indicator variables $I_i = 1\{\TV(q_i, p) \le \eta\}$ for $\eta = O(\beta/\ell)$. Since $S \sim p^n$ i.i.d., the blocks $S_i$ are i.i.d.\ draws from $p^b$, making $I_1, \dots, I_k$ i.i.d.\ Bernoulli variables with mean $\E[I_i] \ge 15/16$. By Hoeffding's inequality, setting $k = O(\log(1/\beta))$ ensures that $\sum_{i=1}^k I_i \ge \frac{7}{8}k$ holds with probability at least $1 - \beta/4$. Conditioning on this $\text{good event}=|\{\TV(q_i, p) \le \eta\}| \geq \frac{7}{8}k$, Theorem~\ref{thm:informal-aggregate} guarantees that the output distribution $\mathcal{L}(Z \mid \text{good event})$ satisfies $\TV(\mathcal{L}(Z \mid \text{good event}), p^\ell) \le \beta/2$. Combining the failure probability of the concentration with the conditional TV error yields $\TV(\mathcal{L}(Z), p^\ell) \le \beta/2 + \beta/4 = \frac{3}{4}\beta$.

Full technical details appear in Proposition~\ref{prop:PCS-privacy} and Proposition~\ref{prop:PCS-utility}.
\end{proof}

\begin{remark}
\label{rem:private_pac_learning}
Since private and non-private PAC-learnability are known to not be equivalent~\cite{BunNSV15, alon2019private}, it is instructive to understand where the present approach fails in supervised settings. This becomes clear by looking at Algorithm~\ref{alg:private-correlated-sampler}. The algorithm crucially relies on the fact that a distribution learning algorithm outputs a density that is close to the ground truth in TV-distance. However, in supervised settings, the input consists of domain elements and their labels, while the learner outputs a function attempting to capture the rule that associates the former to the latter. This does not capture the distribution of the unlabeled examples, and therefore cannot be used as the basis for a sampler.
\end{remark}

\section{Main algorithm and reduction}

We now present our main application, a black-box reduction that transforms any non-private base learner $L$ into a DP learner via \textsc{Sampler}. Let $m(\alpha, \beta)$ denote the non-private sample complexity of $L$ for outputting an $\alpha$-accurate estimate in terms of TV, with probability at least $1 - \beta$.

The algorithm is conceptually simple, where we first invoke \textsc{Sampler} to generate a private synthetic dataset $Z \in \mathcal{X}^\ell$, and then apply the non-private learner $L$ to the private synthetic dataset $Z$ using an independent random seed $V$.

The complete procedure is described in Algorithm~\ref{alg:private-learner}. 

The results follows immediately from the guarantees of \textsc{Sampler}. Privacy follows by post-processing, and utility follows by coupling its output to an i.i.d.\ sample from $p^\ell$ and union-bounding the coupling mismatch with the learner's failure event.
The main privacy and utility guarantees of \textsc{PrivateLearner} are established in the following theorem.

\begin{algorithm}[t]
\caption{\textsc{BlackBoxPrivateLearner}}
\label{alg:private-learner}
\begin{algorithmic}[1]
\Require Dataset $S=(X_1,\ldots,X_n)$
\Ensure A distribution $\hat{q} \in \Delta(X)$
\State $Z\gets\textsc{Sampler}(S)$ 
\State Pick a learner's random seed (data-independent) $V \sim \mathrm{U}([0,1])$ and $\hat{q} \gets L(Z;V)$
\State \Return $\hat{q}$
\end{algorithmic}
\end{algorithm}

\begin{theorem}[Main Reduction]
\label{thm:main}
Let $\alpha,\beta,\delta,\eps\in(0,1)$ be parameters, and $L$ be a non-private learner for $\Q$ with sample complexity $m(\alpha,\beta)$.
Then, \textsc{BlackBoxPrivateLearner}
(Algorithm~\ref{alg:private-learner}) is
$(\varepsilon,\delta)$-DP.
Moreover, there is a sample
size
$$n=O\left(
m\left(\frac{\beta}{64m\left(\alpha,\frac{\beta}{4}\right)},
\frac{1}{16}\right)\frac{1}{\varepsilon}
\log\left(\frac{4}{\beta\delta}\right)
\right)$$
such that, for every $p\in\Q$, given $S\sim p^n$,
the output $\hat q$ satisfies $\TV(\hat q,p)<\alpha$
with probability at least $1-\beta$.
\end{theorem}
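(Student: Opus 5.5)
Privacy is the easy half. By Theorem~\ref{thm:informal-sampler}, \textsc{Sampler} is $(\varepsilon,\delta)$-DP at the dataset level. The second step of \textsc{BlackBoxPrivateLearner} applies $L$ to $Z$ with a fresh seed $V$ that is independent of $S$. The map $Z\mapsto L(Z;V)$ with $V\sim\mathrm U([0,1])$ is therefore a data-independent Markov kernel $\mathcal X^\ell\rightsquigarrow\Delta(\mathcal X)$, assuming the learner is jointly measurable in its sample and seed, as implicitly required throughout. Proposition~\ref{prop:post-processing} then gives that the whole algorithm is $(\varepsilon,\delta)$-DP.

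For utility, set $\ell=m(\alpha,\beta/4)$. By Definition~\ref{def:learnability}, for every $p\in\Q$,
\[
\Pr_{Z^*\sim p^\ell,V}\bigl[\TV(L(Z^*;V),p)\ge\alpha\bigr]\le\beta/4 .
\]
Apply Theorem~\ref{thm:informal-sampler} with this $\ell$ and with the sampler's accuracy parameter equal to $\beta$. This requires a block size $b$ such that $L$ is $O(\beta/\ell)$-accurate with probability $15/16$. Following the constants in Theorem~\ref{thm:sampling-informal}, I will take $b=m\bigl(\beta/(64\ell),1/16\bigr)$. With $n=\Theta\bigl(b\log(4/(\beta\delta))/\varepsilon\bigr)$ samples, the theorem yields $\TV(\mathcal L(Z),p^\ell)\le 3\beta/4$, and substituting $\ell$ gives exactly the stated bound on $n$.

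To transfer the learner's guarantee, let $B=\{(z,v):\TV(L(z;v),p)\ge\alpha\}$, which is a measurable event. Since $V$ is independent of both $Z$ and $Z^*$, the joint laws of $(Z,V)$ and $(Z^*,V)$ are within TV distance $\TV(\mathcal L(Z),p^\ell)\le3\beta/4$. Hence
\[
\Pr[(Z,V)\in B]\le\Pr[(Z^*,V)\in B]+\tfrac34\beta\le\beta/4+3\beta/4=\beta .
\]
(Equivalently, one can take a maximal coupling of $Z$ and $Z^*$ and union bound the mismatch event with the learner's failure event.) This shows $\TV(\hat q,p)<\alpha$ with probability at least $1-\beta$.

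The only step needing real care is the constant bookkeeping. The sampler's error $3\beta/4$ plus the learner's failure $\beta/4$ must sum to $\beta$. The block accuracy $\beta/(64\ell)$ must meet the hypothesis $O(\beta/\ell)$ of Theorem~\ref{thm:informal-sampler}, including the $7/8$-majority and Hoeffding requirement $k=\Omega(\log(4/(\beta\delta))/\varepsilon)$. Everything else is a direct invocation of the earlier results.
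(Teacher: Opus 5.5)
Your proposal is correct and follows essentially the same route as the paper: privacy comes from the sampler's guarantee plus post-processing. Utility comes from transferring the learner's $\beta/4$ failure bound on an ideal $p^\ell$ sample across the $3\beta/4$ TV gap, using the same choices $\ell=m(\alpha,\beta/4)$ and $b=m(\beta/(64\ell),1/16)$. The one piece of bookkeeping the paper makes explicit that you leave inside the cited sampler theorem is running the stability-based mode at $\delta_0=\min\{\delta,\beta/4\}$, which is what lets the sampler be both $(\varepsilon,\delta)$-DP and $3\beta/4$-accurate.
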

\begin{proof}
Set $\ell=m\left(\alpha,\frac{\beta}{4}\right)$,
$\eta=\frac{\beta}{64\ell}$,
$b=m\left(\eta,\frac{1}{16}\right)$, and
$\delta_0=\min\left\{\delta,\frac{\beta}{4}\right\}$.
By Proposition~\ref{prop:PCS-privacy},
\textsc{Sampler} is $(\varepsilon,\delta_0)$-DP.
Applying $L$ to its output using independent randomness
preserves privacy by post-processing (Proposition~\ref{prop:post-processing}).
Since $\delta_0\le\delta$, the resulting learner is
$(\varepsilon,\delta)$-DP.

For utility, take
$k=\left\lceil\max\left\{
\frac{8}{\varepsilon}\log\left(\frac{4}{\beta\delta}\right),
128\log\left(\frac{4}{\beta}\right)
\right\}\right\rceil$
and $n=bk$.
This choice satisfies both the stability requirement
and the concentration bound in
Corollary~\ref{cor:instantiated-private-correlated-sampler-utility}.
Hence, for every $p\in\Q$ and $S\sim p^n$, the synthetic
output $Z$ satisfies
$\TV\left(\mathcal L(Z),p^\ell\right)\le\frac{3\beta}{4}$.
By the choice of $\ell$, the learner fails to achieve
TV error strictly below $\alpha$ on an i.i.d.\ sample
from $p^\ell$ with probability at most $\frac{\beta}{4}$.
Transferring this failure bound using union bound across the TV distance
gives
$\Pr\left[\TV(\hat q,p)\ge\alpha\right]
\le\frac{\beta}{4}+\frac{3\beta}{4}=\beta$.

Finally, substituting the value of $b$ and $k=O\left(\frac{1}{\varepsilon}
\log\left(\frac{4}{\beta\delta}\right)\right)$ into $n=bk$ gives the
claimed sample complexity.
\end{proof}

\section{A Boosted Private Learner}\label{sec:boosted-learning}

In the previous section, the \textsc{PrivateLearner} requires the block-level target TV error to satisfy $\eta \lesssim \beta / \ell$. When the non-private sample complexity scales polynomially with accuracy, i.e., $m(\alpha, \beta) = \mathrm{poly}(1/\alpha, \log(1/\beta))$, setting $\eta \propto \beta$ introduces an undesirable polynomial dependence $1/\beta^{\Omega(1)}$ into the overall sample complexity $n$. To get rid of this polynomial dependence and achieve a better logarithmic overhead $O(\log(1/\beta))$, we introduce a boosted version, \textsc{BoostedLearner} in Algorithm~\ref{alg:robust-learner}. 

The core idea is confidence boosting: amplifying a learner's
constant success probability to $1-\beta$ through independent
repetition and robust aggregation. We run \textsc{PrivateLearner} independently on
$T=O(\log(1/\beta))$ disjoint blocks $S_1,\dots,S_T$
with a fixed constant success probability $1-\beta_0$
(e.g., $\beta_0=1/8$), obtaining estimates
$\hat p_1,\dots,\hat p_T$.
This requires a block accuracy of only $\eta\sim1/\ell$,
independent of the final target failure probability $\beta$. For each $t\in[T]$, $N_t = |\{i \in[T]: \TV(\hat p_i, \hat p_t) \leq 2\alpha_0\}|$ counts how many estimates are within TV distance $2\alpha_0$ of $\hat p_t$, where $\alpha_0=\alpha/3$.
We select $j$ maximizing $N_j$ and return $\hat p_j$.

By doing so, if more than half of the estimates are $\alpha_0$-close
to $p$, they are all within $2\alpha_0$ of one another.
Thus, every such estimate has $N_t>T/2$, and the maximizing estimate also has $N_j>T/2$.
Among the estimates counted by $N_j$, at least one must therefore be $\alpha_0$-close to $p$, and thus
$\TV(\hat p_j,p)\le 2\alpha_0+\alpha_0=\alpha$.
Each run fails with probability at most $\beta_0$, and by independence, the probability of failing to obtain a majority of accurate estimates decreases exponentially
in $T$. See \ref{sec:proof:boosted-learner} for the formal proof.

\renewcommand{\algorithmicrequire}{\textbf{Input:}}
\renewcommand{\algorithmicensure}{\textbf{Output:}}
\begin{algorithm}[t]
\caption{\textsc{BoostedLearner}}\label{alg:robust-learner}
\begin{algorithmic}[1]
\Require Dataset $S=(X_1,\ldots,X_n)$, a non-private learner $L$, target accuracy $\alpha$, target failure probability $\beta$, and privacy parameters $(\varepsilon, \delta)$.
\Ensure A distribution $\hat q \in \Delta(\X)$
\State Set target base parameter $\alpha_0 \gets \alpha / 3$, constant failure rate $\beta_0 \gets 1/8$, privacy target $\delta_0 \gets \min\{\delta, \beta_0/4\}$.
\State Set partition count $T \gets \left\lceil 128 \ln(1/\beta) \right\rceil$
\State Partition $S$ into $T$ disjoint subsets $S_1, \dots, S_T \in \mathcal{X}^c$, each of size $c = O\left( \frac{\log(1/(\beta_0 \delta_0))}{\varepsilon} \cdot m\left( \frac{\beta_0}{64 \, m(\alpha_0, \beta_0/4)}, \frac{1}{16} \right) \right)$

\For{each $t \in [T]$ in parallel}
    \State Obtain base estimate: $\hat{p}_t \gets \textsc{PrivateLearner}(S_t, \alpha_0, \beta_0, \varepsilon, \delta_0)$.
\EndFor
\State For each $t \in [T]$, compute candidate ball size, $N_t \gets \left\vert{} \{ i \in [T] : \mathrm{TV}(\hat{p}_t, \hat{p}_i) \le \frac{2}{3}\alpha \} \right\vert{}$.
\State Find $j = \arg\max_{i \in [T]} N_i$
\State \Return $\hat{p}_j$
\end{algorithmic}
\end{algorithm}

\begin{theorem}[Boosted Private Learner] \label{thm:robust-learner} Let $\alpha,\beta,\varepsilon,\delta\in(0,1)$ be parameters, and $L$ be a non-private learner for $\Q$ with sample complexity $m(\alpha,\beta)$. Then, \textsc{BoostedLearner} (Algorithm~\ref{alg:robust-learner}) is $(\varepsilon,\delta)$-DP. Moreover, there is a sample size 
$$n=O\left( m\left( \frac{1}{512m\left(\alpha/3,1/32\right)}, \frac{1}{16} \right) \frac{1}{\varepsilon} \log\left(\frac{4}{\delta}\right) \log\left(\frac{4}{\beta}\right) \right)$$ 
such that, for every $p\in\Q$, given $S\sim p^n$, the output $\hat q$ satisfies $\TV(\hat q,p)<\alpha$ with probability at least $1-\beta$. In particular, if $m(\alpha,\beta)=\operatorname{poly}\left( \frac{1}{\alpha},\log\left(\frac{1}{\beta}\right)\right)$, then $n=\operatorname{poly}\left( \frac{1}{\alpha},\log\left(\frac{1}{\beta}\right), \frac{1}{\varepsilon},\log\left(\frac{1}{\delta}\right) \right)$.
\end{theorem}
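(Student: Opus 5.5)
The plan is to treat Theorem~\ref{thm:main} as a black box for each of the $T$ base runs, and then to analyze the aggregation step by a standard median-of-means style argument.

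\textbf{Privacy.} Algorithm~\ref{alg:robust-learner} partitions $S$ deterministically into the disjoint blocks $S_1,\dots,S_T$. On each block it runs \textsc{PrivateLearner} with independent randomness and parameters $(\varepsilon,\delta_0)$. By Theorem~\ref{thm:main} every call is $(\varepsilon,\delta_0)$-DP with $\delta_0\le\delta$, so parallel composition (Lemma~\ref{thm:parallel-composition}) shows that the tuple $(\hat p_1,\dots,\hat p_T)$ is $(\varepsilon,\delta)$-DP. The remaining computation (the counts $N_t$, the argmax, and returning $\hat p_j$) is a data-independent deterministic function of this tuple. I would note, or choose a measurable tie-breaking rule to ensure, that this map is measurable; post-processing (Proposition~\ref{prop:post-processing}) then finishes the privacy part.

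\textbf{Utility.} Fix $p\in\Q$. Apply Theorem~\ref{thm:main} with accuracy $\alpha_0=\alpha/3$, failure probability $\beta_0=1/8$ and privacy parameter $\delta_0$. For block size $c$ as in the algorithm, each $\hat p_t$ then satisfies $\TV(\hat p_t,p)<\alpha_0$ with probability at least $7/8$. The blocks are i.i.d.\ draws from $p^c$ and the runs use independent randomness, so the indicators of success are i.i.d.\ with mean at least $7/8$.

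Hoeffding's inequality bounds the probability that at most $T/2$ runs succeed by $\exp(-2T(3/8)^2)\le\beta$ once $T\ge \lceil 128\ln(1/\beta)\rceil$; this choice is generous. Condition on the complementary event, in which more than half of the estimates are good.

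The combinatorial argument then runs as follows:
\begin{itemize}
\item Any two good estimates are within $2\alpha_0$ of each other, so each good $t$ has $N_t>T/2$.
\item Hence the maximizer also satisfies $N_j>T/2$.
\item Its ball and the set of good indices both have size more than $T/2$, so they intersect in some good $i$.
\end{itemize}
By the triangle inequality, $\TV(\hat p_j,p)\le \TV(\hat p_j,\hat p_i)+\TV(\hat p_i,p)<2\alpha_0+\alpha_0=\alpha$. Strictness holds because good estimates satisfy the strict bound.

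\textbf{Sample complexity.} Set $n=Tc$ and substitute the value of $c$ from Theorem~\ref{thm:main} with $\beta_0=1/8$, so that $m(\alpha_0,\beta_0/4)=m(\alpha/3,1/32)$ and $\beta_0/64=1/512$. Since $\delta_0=\min\{\delta,1/32\}$, we have $\log(1/(\beta_0\delta_0))=O(\log(4/\delta))$. Together with $T=O(\log(4/\beta))$ this gives the stated bound. The polynomial corollary is then immediate, since a composition of polynomials is polynomial.

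The main obstacle is bookkeeping rather than mathematics. I would check that the constant-probability guarantee of Theorem~\ref{thm:main} holds with $\delta_0$ in place of $\delta$; this is fine because $\delta_0\le\beta_0/4$ is exactly the choice made inside that proof. I would also check that the selection step is measurable, so that post-processing legitimately applies.
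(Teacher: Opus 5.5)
Your proposal is correct and follows the paper's proof essentially step for step. For privacy, both use parallel composition over disjoint blocks and then post-processing. For utility, both invoke Theorem~\ref{thm:main} with $(\alpha_0,\beta_0,\delta_0)=(\alpha/3,1/8,\min\{\delta,1/32\})$ on each block, apply Hoeffding over $T=\lceil 128\ln(1/\beta)\rceil$ independent runs, and use the same ball-counting and triangle-inequality argument to get $\TV(\hat p_j,p)<3\alpha_0=\alpha$. Your extra checks, that the selection map is measurable and that $\delta_0\le\beta_0/4$ matches the internal choice in Theorem~\ref{thm:main}, are sound bookkeeping that the paper leaves implicit.
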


Note that the DP guarantee is non-trivial only when $\delta = o(1/n)$, since allowing $\delta = \Omega(1/n)$ permits the learner to randomly select a user's data to output each run~\cite{McSherry17}. Our sample complexity bound is meaningful in this regime because of its dependence on $\delta$ only through a $\log(1/\delta)$ factor. Specifically, rewriting the sample complexity in Theorem~\ref{thm:robust-learner} as $n(\delta) = C \log(4/\delta)$, where $C$ collects other factors independent of $\delta$, we have $\delta \cdot n(\delta) = C \delta \log(4/\delta) \to 0$ as $\delta \to 0$, that is, $\delta = o(1/n)$. In contrast, a dependence of the form $n(\delta) = C/\delta$ would give $\delta \cdot n(\delta) = C$, so $\delta = \Theta(1/n)$, which falls back to the trivial regime.

\section*{Acknowledgements} We thank David Woodruff for discussions and assistance with the internal version of Gemini used. We also thank Matthew Joseph and Chris Liaw for their comments on a preliminary draft of this paper.

\bibliographystyle{alpha}
\bibliography{biblio}

\appendix
\section{Poisson Coupling Details}\label{app:coupling-details}

We present two constructions of the grand coupling interface in Definition~\ref{def:grand-scheme}, both based on the Poisson coupling of \cite{AngelS19}.
Subsection~\ref{subsec:construction_1} treats families $\mathcal{P}\subseteq\Delta(\mathcal{Y})$ that admit a common $\sigma$-finite dominating measure. For such families, Corollary~\ref{cor:cons1_grand_coupling_scheme} establishes a grand coupling scheme on $\mathcal{P}$.

Subsection~\ref{subsec:construction_2} treats the general case, without assuming a common dominating measure. For every standard Borel space $\mathcal{Y}$, Corollary~\ref{cor:cons2_grand_coupling} establishes a universal grand coupling scheme on $\Delta(\mathcal{Y})$. Restricting this scheme to any family $\mathcal{P}\subseteq\Delta(\mathcal{Y})$ gives a grand coupling scheme on $\mathcal{P}$.

\subsection{A Globally Dominated Poisson Coupling}\label{subsec:construction_1}
We first formalize the setting where all distributions in the family are dominated by a single reference measure. Let $(\Y,\mathcal{B})$ be a standard Borel space and let $\mathcal{P}\subseteq \Delta(\mathcal{Y})$ be a collection of probability measures of interest. We use the original construction taken verbatim from \cite{AngelS19}.

\begin{construction} \label{cons:construction_1}
    Suppose there exists a $\sigma$-finite measure $\nu$ such that $P\ll \nu$ for every $P \in \mathcal{P}$. We write $f_P = \frac{dP}{d\nu}$ for all $P \in \mathcal{P}.$

    Let $\A$ be a Poisson point process on $\Y \times \R_+ \times \R_+$ with intensity $\nu(dy)ds dt$ (Definition~\ref{def:PPP}). For a realization $\mathrm{a}$ of the Poisson point process $\A$, we let $\mathrm{a}_P =\{(y,s,t) \in \mathrm{a}: s \le f_P(y)\}$ for all $P \in \mathcal{P}$, and let $\Phi(P,\mathrm{a})$ denote the $y$-coordinate of the point in $\mathrm{a}_P$ having the smallest $t$-coordinate.  More formally, if  $(y^*,s^*,t^*)$ is the unique point satisfying $t^* = \min\{t: (y,s,t) \in \mathrm{a}, s \le f_P(y)\}$, then $\Phi(P,\mathrm{a})=y^*$. Note that on the null event that there is a tie we can define $\Phi(P,\mathrm{a})$ arbitrarily.\footnote{Under the law of PPP, such events have probability zero, so it does not affect any of the distributions. See proof of Prop 5 in \cite{AngelS19}.}. 
    Using the same random process $\A$ for every $P \in \mathcal{P}$, we define the random variable $Z_P = \Phi(P,\A)$ and let $\Gamma=\mathcal L((Z_P)_{P\in\mathcal P})$
be their joint law..
\end{construction}
\begin{algorithm}
\caption{$\mathsf{Couple\_I}$ from construction~\ref{cons:construction_1}}
\label{alg:coupled-I}
\begin{algorithmic}[1]
\Require Distributions $q_1,\ldots,q_k \in \F$. \textit{A fixed $\sigma$-finite measure $\nu$ on $\X^\ell$ dominating every relevant $q^\ell \in \{q^\ell: q \in\F\}$}.
\Ensure Synthetic samples $(Z_1,...,Z_k)$ such that $Z_i \sim q_i^\ell$ and $\Pr[Z_i \ne Z_j] \le 2 \TV(q_i^\ell, q_j^\ell)$ for all $i,j\in[k]$.
\State Let a random seed be $\A$, a Poisson point process with intensity $\nu(dy) ds dt$
\State $Z_i = \Phi(q_i^\ell, \A)$ for all $i\in[k]$ according to construction~\ref{cons:construction_1}
\Comment{\textit{Conditional on a realization $\A=\mathrm{a}$, i.e. conditional on a realization of the PPP, the output $z_i = \Phi(q_i^\ell, \mathrm{a})$ is deterministic.}}

\State \Return $(Z_1,...,Z_k)$.
\Comment{\textit{Equivalently, $(Z_1,...,Z_k)$ has the $(q_1^\ell,...,q_k^\ell)$-coordinate marginal of the coupling $\Gamma$ from construction~\ref{cons:construction_1}}}

\end{algorithmic}
\end{algorithm}

\begin{theorem}[Proposition 5 and Theorem 2 in \cite{AngelS19}]\label{thm:AS19_prop5}
Suppose $\Y$ is standard Borel and there exists a $\sigma$-finite measure $\nu$ such that $P \ll \nu$ for all $P \in \mathcal{P}$. The law $\Gamma$ defined in Construction~\ref{cons:construction_1}, is a coupling of $(P)_{P \in \mathcal{P}} $, i.e. $Z_P \sim P$ for every $P\in \mathcal{P}$. Moreover, for every $P,Q \in \mathcal{P}$, 
\begin{align*}
    \Pr[Z_P \ne Z_Q] \le \frac{2\TV(P,Q)}{1+\TV(P,Q)} \le 2 \TV(P,Q)
\end{align*}
\end{theorem}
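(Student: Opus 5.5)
We prove the two claims of Theorem~\ref{thm:AS19_prop5}, the marginal law and the disagreement bound, directly from the basic properties of the Poisson point process $\A$ with intensity $\nu(dy)\,ds\,dt$ on $\Y\times\R_+\times\R_+$. For $P\in\mathcal P$ let $R_P=\{(y,s): s\le f_P(y)\}\subseteq \Y\times\R_+$ be the acceptance region. Then $(\nu\otimes\mathrm{Leb})(R_P)=\int f_P\,d\nu=1$. The main tool is the restriction and mapping theorem for Poisson processes. The points of $\A$ whose $(y,s)$-coordinates lie in a measurable set $R$ of finite area $c=(\nu\otimes\mathrm{Leb})(R)$ form a Poisson process. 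Projected to the $t$-axis, this is a homogeneous Poisson process of rate $c$, and its marks $(y,s)$ are i.i.d.\ with law $(\nu\otimes\mathrm{Leb})|_R/c$, independent of the times. Moreover, processes on disjoint regions are independent. These facts follow from Definition~\ref{def:PPP} by the standard marking argument, using $\sigma$-finiteness of $\nu$. Ties in $t$ occur with probability zero because the $t$-intensity is diffuse.

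\emph{Marginals.} Apply the restriction theorem with $R=R_P$ and $c=1$. The point of $\A\cap(R_P\times\R_+)$ with the smallest $t$ exists almost surely, since a rate-$1$ process on $\R_+$ is nonempty. Its mark $(y^*,s^*)$ has law $(\nu\otimes\mathrm{Leb})|_{R_P}$. The $y$-marginal of this law is $B\mapsto\int_B f_P\,d\nu=P(B)$, so $Z_P\sim P$. Measurability of $\Phi(P,\cdot)$ follows because the first-arrival point is a measurable functional of the counting measure. This holds since $\Y$ is standard Borel, so the process can be enumerated measurably.

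\emph{Agreement.} Fix $P,Q$ and split $R_P\cup R_Q$ into the disjoint pieces $R_P\cap R_Q$, $R_P\setminus R_Q$ and $R_Q\setminus R_P$. Their areas are
\[
\int\min(f_P,f_Q)\,d\nu=1-\TV(P,Q) \quad\text{and}\quad \int(f_P-f_Q)_+\,d\nu=\int(f_Q-f_P)_+\,d\nu=\TV(P,Q).
\]
Consider the first point (in $t$) of $\A$ lying in $(R_P\cup R_Q)\times\R_+$. If this point lies in $R_P\cap R_Q$, it is simultaneously the first point of $R_P$ and the first point of $R_Q$, so $Z_P=Z_Q$. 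By independent thinning (competing exponential clocks), the first point of the union falls in the intersection with probability equal to the ratio of areas, $\frac{1-\TV}{1+\TV}$. Hence
\[
\Pr[Z_P\ne Z_Q]\le 1-\frac{1-\TV(P,Q)}{1+\TV(P,Q)}=\frac{2\TV(P,Q)}{1+\TV(P,Q)}\le 2\TV(P,Q).
\]
If $\TV(P,Q)=1$, the bound $2$ holds trivially.

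The main obstacle is the measure-theoretic justification of the restriction/marking theorem and the measurability of $\Phi$ in this generality. The area computations themselves are routine. I would handle the $\sigma$-finite setting by writing $\nu=\sum_j\nu_j$ with each $\nu_j$ finite, building $\A$ as a superposition of independent finite Poisson processes, and reducing everything to explicit sums of i.i.d.\ points.
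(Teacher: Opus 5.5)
Your proof is correct and follows essentially the same route as the paper. The paper cites this result from Angel--Spinka, but it reproduces the identical argument in items 2 and 3 of Proposition~\ref{prop:kernels}. There, the restricted process has intensity $P(dy)\,dt$, so its first point in $t$ has $y$-law $P$. The first point of the union of the two acceptance regions lands in their intersection with probability $\frac{1-\TV(P,Q)}{1+\TV(P,Q)}$. Your version is simply more explicit about the restriction/marking theorem and about ties.
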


Theorem~\ref{thm:AS19_prop5} establishes the core marginal and agreement guarantees for the coupling construction. However, to formally show that it constitutes a grand coupling scheme on $\P$, we must also verify the measurability requirement, as formalized below.

\begin{corollary}[Grand coupling scheme $\Gamma_{\mathcal{P}}$ from Construction~\ref{cons:construction_1}]\label{cor:cons1_grand_coupling_scheme}
Suppose $\Y$ is standard Borel and there exists a $\sigma$-finite measure $\nu$ such that $P \ll \nu$ for all $P \in \mathcal{P}$. Then the family of random variables $(Z_P)_{P \in \mathcal{P}}$ defined in Construction~\ref{cons:construction_1} constitutes a grand coupling scheme $\Gamma_{\P}$ on $\mathcal{P}$.
\end{corollary}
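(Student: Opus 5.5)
(G1) and (G2) follow directly from Theorem~\ref{thm:AS19_prop5}, which gives $Z_P\sim P$ and $\Pr[Z_P\neq Z_Q]\le 2\TV(P,Q)$ for all $P,Q\in\mathcal P$. What remains is (G3): for each $k\ge1$, the map $\mathbf P=(P_1,\dots,P_k)\mapsto \mathcal L(Z_{P_1},\dots,Z_{P_k})$ must be a Markov kernel $\mathcal P^k\rightsquigarrow\mathcal Y^k$. Each $\mathcal L(\cdot)$ is a probability measure, so only measurability in $\mathbf P$ of $\mathbf P\mapsto \Pr[(Z_{P_1},\dots,Z_{P_k})\in B]$ needs checking, for every Borel $B\subseteq\mathcal Y^k$. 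Here $\mathcal P^k$ carries the product of the evaluation $\sigma$-algebras.

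I will realize the coupling as a jointly measurable function of $\mathbf P$ and the point process. Since $\mathcal Y$ is standard Borel and $\nu$ is $\sigma$-finite, I will first show that the density map $(P,y)\mapsto f_P(y)$ can be chosen jointly measurable on $\mathcal P\times\mathcal Y$. The tool is the standard martingale construction: fix a refining sequence of finite partitions $\{A_{n,j}\}$ generating $\mathcal B(\mathcal Y)$, with each cell of finite $\nu$-measure (possible by $\sigma$-finiteness). Then set $f_P(y)=\limsup_n \sum_j \frac{P(A_{n,j})}{\nu(A_{n,j})}\mathbf 1_{A_{n,j}}(y)$. Each approximant is jointly measurable, because $P\mapsto P(A_{n,j})$ is measurable by definition of the $\sigma$-algebra. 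By the martingale convergence theorem, the limsup is a version of $dP/d\nu$ for each $P$.

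Next I represent $\A$ measurably. Choose a measurable enumeration of its points, $\A=\sum_i\delta_{(Y_i,S_i,T_i)}$, with $(Y_i,S_i,T_i)$ random variables; for instance, generate the process on countably many disjoint finite-measure pieces and concatenate. Then
\[
\Phi(P,\A)=\sum_i Y_i\,\mathbf 1\Bigl\{S_i\le f_P(Y_i),\ T_i<T_{i'}\ \forall i'\ne i \text{ with } S_{i'}\le f_P(Y_{i'})\Bigr\}.
\]
To make this precise, note that the event that index $i$ is the selected one is a countable intersection of jointly measurable events in $(P,\omega)$. So $(P,\omega)\mapsto\Phi(P,\A(\omega))$ is jointly measurable, after setting it to a fixed $y_0$ on the null set of ties or empty acceptance sets.

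The tuple map $(\mathbf P,\omega)\mapsto(Z_{P_1}(\omega),\dots,Z_{P_k}(\omega))$ is then jointly measurable. Tonelli/Fubini gives that $\mathbf P\mapsto\Pr[(Z_{P_1},\dots,Z_{P_k})\in B]=\int \mathbf 1_B(\Phi(P_1,\A),\dots,\Phi(P_k,\A))\,d\Pr$ is measurable, which proves (G3).

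The main obstacle is the first step, joint measurability of the densities. The statement asks for a version of $dP/d\nu$ chosen simultaneously for all $P$ so that the selection is measurable. Changing $f_P$ on $\nu$-null sets leaves the law of each $Z_P$, and the pairwise bounds, unchanged almost surely. Still, $Z_P$ must be defined via the fixed jointly measurable version, so that Construction~\ref{cons:construction_1} is read with that version of $f_P$.
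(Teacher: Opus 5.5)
Your proposal is correct and follows the same route as the paper: (G1)--(G2) come from Theorem~\ref{thm:AS19_prop5}, and (G3) comes from a jointly measurable version of $dP/d\nu$, joint measurability of the first-accepted-point map $(\mathbf P,\mathrm{a})\mapsto\Phi(\mathbf P,\mathrm{a})$, and integration over the Poisson process; the paper cites Kallenberg for the jointly measurable densities, whereas you construct them via martingale convergence. One minor repair: if $\nu(\mathcal Y)=\infty$, no finite partition can have all cells of finite $\nu$-measure, so either use countable partitions, or run your construction for an equivalent finite measure $\nu'=w\,\nu$ with $w>0$ and set $f_P=w\cdot dP/d\nu'$.
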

\begin{proof}
G1 and G2 follow immediately from Theorem~\ref{thm:AS19_prop5}. It suffices to show G3 that the map $\mathbf{P} \mapsto \mathcal{L}(Z_{P_1},...,Z_{P_k})$, where $\mathbf{P} = (P_1,...,P_k)$ with $P_i \in \P$, is a Markov kernel. 

Since $\Y$ is standard Borel and $P \ll \nu$ for all $P\in \P$, the Radon-Nikodym derivatives may be chosen jointly measurably in $(P,y)$ \cite{Kallenberg02}, so write $f(P_i,y)=\frac{dP_i}{d\nu}(y)$.  Let $\A$ be a Poisson point process on $\Y \times \R_+ \times \R_+$ with intensity $\nu(dy) ds dt.$ For every $i\in[k]$, define $\A_{i} = \{(y,s,t) \in \A: s \le f(P_i,y) \}$.  We define the random variable $Z_i$ by $Z_i = y$, where $(y,s,t)$ is the point of $\A_{i}$ with the minimal $t$ among all points of $\A_{i}$. Equivalently, we write $Z_i=\Phi(P_i,\A),$ where $\Phi$ denotes this deterministic rule. 

Now define $\Phi(\mathbf{P},\A)=(\Phi(P_1,\A), \ldots, \Phi(P_k,\A))$ and for measurable $B \subseteq \Y^k$, let $\Gamma_k(\mathbf{P};B)=\Pr_{\A}[\Phi(\mathbf{P},\A) \in B]$, i.e. $\Gamma_k(\mathbf P;\cdot) = \mathcal{L}(\Phi(\mathbf{P},\A))$. Since $f(P_i, y)$ is jointly measurable, the indicator $1\{s \le f(P_i,y) \}$ is also jointly measurable; applying a measurable filter to a point process yields another valid point process, and applying the argmin on $t$-coordinate is also measurable; and thus, the deterministic map $(\mathbf{P},a)\mapsto \Phi(\mathbf{P},a)$ is measurable. 
Therefore, $\mathbf{P} \mapsto \Gamma_k(\mathbf{P};B)$ is also measurable for every measurable $B\subseteq \Y^k$, so $\Gamma_k$ is a Markov kernel.

Thus, the joint law of $(Z_P)_{P \in \mathcal{P}}$ yields the desired grand coupling $\Gamma_{\P}$ on $\mathcal{Y}^{\mathcal{P}}$.
\end{proof}

\subsection{Projectively Consistent Poisson Couplings on Standard Borel Spaces}\label{subsec:construction_2}

The preceding construction requires a single $\sigma$-finite measure dominating every distribution in the family under consideration. However, such a measure may not generally exist. For example, the uncountable family of all point masses over the real line admits no common $\sigma$-finite dominating measure. In fact, on a standard Borel space, a family $\P \subseteq \Delta(\Y)$ admits a common $\sigma$-finite dominating measure if and only if $\P$ is TV separable. Thus the core issue is not uncountability per se, but rather TV nonseparability. 
For any \emph{finite} tuple of probability measures $\mathbf{P}=(P_1,...P_k)\in\Delta(\Y)^k$, however, there is always a canonical finite dominating measure, $\nu_{\mathbf{P}}=\sum_i P_i$. Thus, for each fixed finite tuple, the Poisson construction \ref{cons:construction_1} can be applied with the tuple-dependent reference measure $\nu_{\mathbf{P}}$.

\begin{construction} \label{cons:construction_2}
    Fix $k\ge 1$ and a tuple $\mathbf{P}=(P_1,...P_k) \in \Delta(\Y)^k$. Define the $\mathbf{P}$-dependent measure $\nu_{\mathbf{P}} = \sum_i P_i$, which is finite, and let $f_i^{\nu_{\mathbf{P}}} = \frac{dP_i}{d\nu_{\mathbf{P}}}$ be the corresponding densities. Let $\A_{\mathbf{P}}$ be a Poisson point process on $\Y \times \R_+ \times \R_+$ with intensity $\nu_{\mathbf{P}}(dy) ds dt$ (Definition~\ref{def:PPP}). Unlike construction \ref{cons:construction_1}, the law of $A_{\mathbf{P}}$ depends on the entire tuple $\mathbf{P}$.
    
    For a realization $\mathrm{a}_{\mathbf{P}}$ of the Poisson point process $\A_{\mathbf{P}}$, we let $\Phi_i(\mathbf{P},\mathrm{a}_{\mathbf{P}})$ denote the $y$-coordinate of the point in $\mathrm{a}_{\mathbf{P},i}$ having the smallest $t$-coordinate, where the acceptance regions are $\mathrm{a}_{\mathbf{P}, i} =\{(y,s,t) \in \mathrm{a}_{\mathbf{P}}: s \le f_i^{\nu_{\mathbf{P}}}(y)\}$ for all $i \in [k]$.\footnote{Similar to \ref{cons:construction_1}, if $(y^*,s^*,t^*)$ is the unique point satisfying $t^* = \min\{t: (y,s,t) \in \mathrm{a}_{\mathbf{P}}, s \le f_i^{\nu_{\mathbf{P}}}(y)\}=\min\{t: (y,s,t)  \in \mathrm{a}_{\mathbf{P}, i}\}$, then $\Phi_i(\mathbf{P},\mathrm{a}_{\mathbf{P}})=y^*$. On the null event on which such a unique point does not exist, define $\Phi_i(\mathbf{P},\mathrm{a}_{\mathbf{P}})$ arbitrarily. Under the law of PPP, such events have probability zero, so it does not affect any of the distributions. See proof of Prop 5 in \cite{AngelS19}.}

    Using the same random process $\A_{\mathbf{P}}$ for every $i \in[k]$\footnote{It is worth noting that the same Poisson process $\A_{\mathbf{P}}$ is shared by all $k$ coordinates, and only the acceptance regions, $\A_{\mathbf{P}, i} =\{(y,s,t) \in \A_{\mathbf{P}}: s \le f_i^{\nu_{\mathbf{P}}}(y)\}$ differ across $i$}, we define the random variable $Z_{\mathbf{P},i} = \Phi_i(\mathbf{P},\A_{\mathbf{P}})$, and for every measurable $B\subseteq \Y^k$, define $\Gamma_k(\mathbf{P};B) = \Pr[(Z_{\mathbf{P},1},...,Z_{\mathbf{P},k}) \in B]$, i.e., $(Z_{\mathbf{P},1},\dots,Z_{\mathbf{P},k}) \sim \Gamma_k(\mathbf{P};\cdot)$. Thus, $\Gamma_k(\mathbf{P};\cdot)$ is the joint law induced by the $\mathbf{P}$-specific Poisson coupling.
\end{construction}

\begin{algorithm}
\caption{$\mathsf{Couple\_II}$  from construction~\ref{cons:construction_2}}
\label{alg:coupled-II}
\begin{algorithmic}[1]
\Require Distributions $q_1,\ldots,q_k \in \F$. 
\Ensure Synthetic samples $(Z_1,...,Z_k)$ such that $Z_i \sim q_i^\ell$ and $\Pr[Z_i \ne Z_j] \le 2 \TV(q_i^\ell, q_j^\ell)$ for all $i,j\in[k]$.
\State Set $P_i = q_i^\ell$ for all $i\in[k]$, $\nu_{\mathbf{P}}=\sum_i P_i$, and $f_i^{\nu_{\mathbf{P}}}=\frac{dP_i}{d \nu_{\mathbf{P}}}$
\State Let a random seed be $\A_{\mathbf{P}}$, a Poisson point process with intensity $\nu_{\mathbf{P}}(dy) ds dt$
\State $Z_i = \Phi_i(\mathbf{P}, \A_{\mathbf{P}})$ for all $i\in[k]$ according to construction~\ref{cons:construction_2}
\Comment{\textit{Conditional on a realization $\A_{\mathbf{P}}=\mathrm{a}_{\mathbf{P}}$, the output $z_i = \Phi_i(\mathbf{P}, \mathrm{a}_{\mathbf{P}})$ is deterministic.}}

\State \Return $(Z_1,...,Z_k)$.
\Comment{\textit{Equivalently, $(Z_1,...,Z_k)\sim \Gamma_k(\mathbf{P};\cdot)$ from construction~\ref{cons:construction_2}}}
\end{algorithmic}
\end{algorithm}

The key extra ingredients are to show that these tuple-specific couplings vary measurably with $\mathbf{P}$ and are projective consistent across different tuples. Measurability allows the construction to be applied to (data-dependent) random distributions, while projective consistency allows finite couplings built from different tuples to be embedded into a common larger coupling.

For a measurable map $T:\mathcal S\to\mathcal T$ and a
measure $\mu$ on $\mathcal S$, we write $T_\#\mu$ for the
pushforward measure of $\mu$ under $T$, defined by
$(T_\#\mu)(B)=\mu(T^{-1}(B))$ for all measurable
$B\subseteq\mathcal T$.
For finite products, write
$\mathrm{pr}_i(y_1,\ldots,y_k)=y_i$
for the $i$-th coordinate projection. 

\begin{proposition}[Projectively consistent Poisson coupling kernels]\label{prop:kernels}
Let $\mathcal Y$ be standard Borel. For every $k \ge 1$, Construction~\ref{cons:construction_2} defines a map
$\Gamma_k : \Delta(\mathcal Y)^k \to \Delta(\mathcal Y^k)$, with the following properties. Write
$\Gamma_k(\mathbf P;\cdot)$ for the measure $\Gamma_k(\mathbf P)$, and let
$D_{ij} = \{(y_1,\dots,y_k) \in \mathcal Y^k : y_i \ne y_j\}$.
\begin{enumerate}[label=\textup{(\arabic*)}]
  \item\label{it:meas} \emph{(Measurability.)} For every measurable
  $B \subseteq \mathcal Y^k$, the map
  $\mathbf P \mapsto \Gamma_k(\mathbf P; B)$ is measurable with respect to
  the evaluation $\sigma$-algebra on $\Delta(\mathcal Y)^k$; that is,
  $\Gamma_k$ is a Markov kernel from $\Delta(\mathcal Y)^k$ to $\mathcal Y^k$.
  \item\label{it:marg} \emph{(Marginals.)} For every
  $\mathbf P = (P_1,\dots,P_k)$ and $i \in [k]$,
  $(\mathrm{pr}_i)_{\#}\,\Gamma_k(\mathbf P;\cdot) = P_i$.
  \item\label{it:dis} \emph{(Disagreement bound.)} For every $\mathbf P$ and
  $i, j \in [k]$,
  $\Gamma_k(\mathbf P; D_{ij})
   \le \tfrac{2\,\mathrm{TV}(P_i,P_j)}{1+\mathrm{TV}(P_i,P_j)}
   \le 2\,\mathrm{TV}(P_i,P_j)$.
  \item\label{it:proj} \emph{(Projective consistency.)} For every nonempty
  $E = \{i_1, ..., i_{k'}\} \subseteq [k]$ with $i_1<...<i_{k'}$ and
  $\mathbf P' = (P_i)_{i \in E}$ in the original order,
  $(\mathrm{pr}_E)_{\#}\,\Gamma_k(\mathbf P;\cdot) = \Gamma_{k'}(\mathbf P';\cdot)$, where $\mathrm{pr}_E(y_1,...,y_k)=(y_{i_1}, ..., y_{i_{k'}})$
  \item\label{it:perm} \emph{(Permutation equivariance.)} For every
  permutation $\sigma$ of $[k]$: if
  $(Z_1,\dots,Z_k) \sim \Gamma_k\big((P_1,\dots,P_k);\cdot\big)$, then
  $\big(Z_{\sigma(1)},\dots,Z_{\sigma(k)}\big) \sim
   \Gamma_k\big((P_{\sigma(1)},\dots,P_{\sigma(k)});\cdot\big)$.
\end{enumerate}
\end{proposition}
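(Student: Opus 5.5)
Items (2) and (3) hold for each fixed $\mathbf P$ because Construction~\ref{cons:construction_2} is exactly Construction~\ref{cons:construction_1} applied to the finite family $\{P_1,\dots,P_k\}$ with the common finite dominating measure $\nu=\nu_{\mathbf P}$. Theorem~\ref{thm:AS19_prop5} then gives $Z_{\mathbf P,i}\sim P_i$ and the bound $\Pr[Z_{\mathbf P,i}\ne Z_{\mathbf P,j}]\le 2\mathrm{TV}/(1+\mathrm{TV})$. Item (5) is immediate. The measure $\nu_{\mathbf P}$ is symmetric in the coordinates, so permuting $\mathbf P$ leaves the Poisson process $\A_{\mathbf P}$ unchanged in law. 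The densities $f_i^{\nu_{\mathbf P}}$ are permuted accordingly, so $\Phi_{\sigma(i)}(\mathbf P,\cdot)=\Phi_i(\sigma\mathbf P,\cdot)$ pathwise.

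For projective consistency (4), the plan is to avoid comparing two different Poisson processes directly and instead compute the joint law explicitly. Fix $\mathbf P$ with $\nu=\nu_{\mathbf P}$ and densities $f_i$, and let $\mu=\max_i f_i\,\nu$. Points of $\A_{\mathbf P}$ with $s>\max_i f_i(y)$ are irrelevant to every coordinate. The relevant points, ordered by $t$, form an i.i.d.\ sequence $(Y_m,S_m)$ with law proportional to $1\{s\le \max_i f_i(y)\}\nu(dy)ds$; the total mass is $\int\max_i f_i\,d\nu<\infty$. Hence the joint law of $(Z_1,\dots,Z_k)$ is determined by the ratios $f_i/\max_j f_j$ together with the measure $\mu$, and more intrinsically by the measures $P_i$ themselves.

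The key lemma I would prove is an invariance statement. For any finite measure $\nu'$ dominating $P_1,\dots,P_k$, the law obtained from Construction~\ref{cons:construction_1} with reference $\nu'$ does not depend on $\nu'$. Write $g=d\nu/d\nu'$ and consider the map $(y,s,t)\mapsto(y,s/g(y),t)$ on $\{g>0\}$. This map sends a PPP with intensity $\nu'(dy)ds\,dt$, restricted to $\{g>0\}$, to a PPP with intensity $\nu(dy)ds\,dt$, by the mapping theorem; the Jacobian in $s$ is compensated by the factor $g$. It also preserves the events $s\le dP_i/d\nu'(y)$ versus $s\le dP_i/d\nu(y)$, since $dP_i/d\nu'=(dP_i/d\nu)\,g$. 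Points with $g=0$ are never accepted, because the $P_i$ vanish there. Hence the outputs coincide pathwise. Now apply this to a subtuple $\mathbf P'$, with $\nu'=\nu_{\mathbf P}$ dominating $\nu_{\mathbf P'}$. The subtuple's coupling computed with $\nu_{\mathbf P}$ equals $\Gamma_{k'}(\mathbf P')$. But computed with $\nu_{\mathbf P}$, it is literally the same process and the same acceptance regions as in $\Gamma_k(\mathbf P)$ restricted to the indices in $E$. This yields (4).

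Measurability (1) I expect to be the main obstacle, since both the intensity and the densities move with $\mathbf P$. My approach is to realize everything on a fixed probability space. First, choose the densities jointly measurably: $f_i(\mathbf P,y)=dP_i/d\nu_{\mathbf P}(y)$ can be taken measurable in $(\mathbf P,y)$ by the standard Borel Radon--Nikodym result for kernels (as cited from \cite{Kallenberg02}; $\mathbf P\mapsto\nu_{\mathbf P}$ is a kernel). Second, realize the relevant points as above: take i.i.d.\ uniform seeds and push them, via a measurable-in-$\mathbf P$ sampler for the normalized measure $\max_i f_i\,\nu_{\mathbf P}$ (a kernel, which on standard Borel spaces admits a randomization $h(\mathbf P,U)$), to the points $Y_m$, with $S_m$ uniform below $\max_i f_i(\mathbf P,Y_m)$. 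Then $Z_i$ is the first $Y_m$ with $S_m\le f_i(\mathbf P,Y_m)$, which is a measurable function of $(\mathbf P,U)$ defined off a null set. Fubini then gives measurability of $\mathbf P\mapsto\Gamma_k(\mathbf P;B)$. Finally, I would check that this sequential representation has the same law as the PPP description, which is the thinning and ordering argument already used above.
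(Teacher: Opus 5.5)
Your proposal is correct. For items (2)--(5) it is essentially the paper's argument:
\begin{itemize}
\item Items (2) and (3) are the Angel--Spinka argument for the fixed finite reference measure $\nu_{\mathbf P}$. The paper reproves them inline, while you cite Theorem~\ref{thm:AS19_prop5}.
\item Item (5) uses the same symmetry of $\nu_{\mathbf P}$.
\item Your invariance lemma for (4) is exactly the paper's rescaling $(y,s,t)\mapsto(y,s/r(y),t)$ with $r=d\nu_{\mathbf P'}/d\nu_{\mathbf P}$, which is your $g$. You simply state it for an arbitrary dominating finite measure.
\end{itemize}

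The genuinely different part is measurability (1). The paper argues abstractly. The intensity $\nu_{\mathbf P}(dy)\,ds\,dt$ depends measurably on $\mathbf P$, hence so does the Poisson law. The map $(\mathbf P,a)\mapsto\Phi(\mathbf P,a)$ is jointly measurable on configuration space. So $\Gamma_k(\mathbf P;B)=\int 1\{\Phi(\mathbf P,a)\in B\}\,\mathcal L(\mathcal A_{\mathbf P})(da)$ is measurable.

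You instead work in three steps:
\begin{itemize}
\item You thin to the finite-rate region $\{s\le\max_i f_i(\mathbf P,y)\}$.
\item You realize its $t$-ordered points as an i.i.d.\ sequence built from fixed uniform seeds, through a randomization $h(\mathbf P,U)$ of the normalized kernel $\max_i f_i\,\nu_{\mathbf P}/c(\mathbf P)$.
\item You conclude by Fubini.
\end{itemize}

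Your route gives a fully explicit argument. It never needs measurability, as a function of $\mathbf P$, of a Poisson law with infinite ($\sigma$-finite) intensity. Nor does it need measurability of the argmin functional on counting measures. Those are the two steps the paper essentially asserts, and your route also yields a concrete measurable sampler. The price is an extra identification step, which you correctly flag: restriction/marking plus ordering by $t$, showing your sequential scheme has the same law as Construction~\ref{cons:construction_2}.

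Two small details to record when writing it up:
\begin{itemize}
\item Take the jointly measurable densities with values in $[0,1]$ everywhere (truncate if needed).
\item Note that $c(\mathbf P)=\int\max_i f_i\,d\nu_{\mathbf P}\in[1,k]$ is measurable and positive. Then each seed is accepted for coordinate $i$ with probability $1/c(\mathbf P)>0$, so every $Z_i$ is almost surely defined.
\end{itemize}
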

\begin{proof} Now, we prove these properties.

\textbf{1. Measurability}
Fix $k\geq 1$ and let $\mathbf{P}=(P_1, \ldots,P_k) \in \Delta(\Y)^k$. Define $\nu_{\mathbf{P}} = \sum_i P_i$, so $P_i \ll \nu_{\mathbf{P}}$ for all $i\in[k]$. Since $\Y$ is standard Borel, the Radon-Nikodym derivatives may be chosen jointly measurably in $(\mathbf{P},y)$, so write $f_i(\mathbf{P},y)=\frac{dP_i}{d\nu_{\mathbf P}}(y).$ Since  $P_i \ll \nu_{\mathbf{P}}$, we may take $0\le f_i(\mathbf{P},y) \le 1$ $\nu_{\mathbf P}$-a.e.

For each $\mathbf{P}$, let $\A_{\mathbf{P}}$ be a Poisson point process on $\Y \times \R_+ \times \R_+$ with intensity $\nu_{\mathbf{P}}(dy) ds dt.$ For every $i\in[k]$, define
$\A_{\mathbf{P},i} = \{(y,s,t) \in \A_{\mathbf{P}}: s \le f_i(\mathbf{P},y) \}$.  We define the random variable $Z_i$ by $Z_i = y$, where $(y,s,t)$ is the point of $\A_{\mathbf{P}, i}$ with the minimal $t$ among all points of $\A_{\mathbf{P},i}$. Equivalently, we write $Z_i=\Phi_i(\mathbf{P},\A_{\mathbf{P}}),$ where $\Phi_i$ denotes this deterministic rule. 

Now define $\Phi(\mathbf{P},\A_{\mathbf{P}})=(\Phi_1(\mathbf{P},\A_{\mathbf{P}}), \ldots, \Phi_k(\mathbf{P},\A_{\mathbf{P}}))$ and for measurable $B \subseteq \Y^k$, let $\Gamma_k(\mathbf{P};B)=\Pr_{\A_{\mathbf{P}}}[\Phi(\mathbf{P},\A_{\mathbf{P}}) \in B]$, i.e. $\Gamma_k(\mathbf P;\cdot) = \mathcal{L}(\Phi(\mathbf{P},\A_{\mathbf{P}}))$. 
Since $\Y$ is standard Borel, the intensity measure $\nu_{\mathbf{P}}(dy) ds dt$ depends measurably on $\mathbf{P}$, and since the law of any PPP is uniquely and completely determined by its intensity measure, it follows that the $\mathbf{P}$-dependent PPP law varies measurably with $\mathbf{P}$. 
Also, $f_i(\mathbf{P}, y)$ is jointly measurable, so the indicator $1\{\{g(\mathbf{P}, y, s) \ge 0\}\}$ is also jointly measurable; applying a measurable filter to a point process yields another valid point process, and applying the argmin on $t$-coordinate is also measurable; and thus, the deterministic map $(\mathbf{P},a)\mapsto \Phi(\mathbf{P},a)$ is measurable. 
Therefore, $\mathbf{P} \mapsto \Gamma_k(\mathbf{P};B)$ is also measurable for every measurable $B\subseteq \Y^k$, so $\Gamma_k$ is a Markov kernel.

\textbf{2. Marginal}
To see that $Z_i$ has the required law (so that the above is indeed a coupling), we note
that $\A_{\mathbf{P},i}$ has intensity $f_i(\mathbf{P},y) \nu_{\mathbf{P}}(dy) dt = P_i(dy) dt$, so the $y$-coordinate of its first point, i.e. $\Phi_i(\mathbf{P},\A_{\mathbf{P}})$, has law $P_i$. Hence, $(\mathrm{pr}_i)_\# \Gamma_k(\mathbf{P};\cdot) = P_i$

\textbf{3. Disagreement bound}
Fix $i,j\in[k]$, and denote $g=f_i(\mathbf{P},y)$, $h=f_j(\mathbf{P},y)$, $\alpha = \TV(P_i, P_j).$ Consider the first point $(y, s)$ to appear that has $s \le g(y) \vee h(y)$. If it happens that $s \le g(y) \wedge h(y)$, then we get the agreement $Z_i = Z_j = y$; otherwise, this point determines either the value of $Z_i$ or $Z_j$. Hence, for any measurable $A$, the intensity of the region accepted by both is $\int_A (g \wedge h) d\nu_{\mathbf{P}}$ and the intensity of the region accepted by at least one is $\int_{\Y} (g \vee h) d\nu_{\mathbf{P}}$. Therefore, $\Pr[Z_i=Z_j] \ge \Pr[Z_i=Z_j \in A \text{ and the same point determines both $Z_i$ and $Z_j$}]=\frac{\int_A (g \wedge h) d\nu_{\mathbf{P}}}{\int_{\Y} (g \vee h) d\nu_{\mathbf{P}}}$. Note that $\int_{\Y} (g \vee h) d\nu_{\mathbf{P}} = 1+\alpha$ and $\int_{\Y} (g \wedge h) d\nu_{\mathbf{P}} = 1-\alpha$, so it follows that $\Pr[Z_i=Z_j] \ge \frac{1-\alpha}{1+\alpha}$, i.e. $\Pr[Z_i \ne Z_j] \le  1- \frac{1-\alpha}{1+\alpha} \le 2 \alpha.$ Thus, $\Gamma_k(\mathbf{P};D_{ij}) \le \frac{2\TV(P_i, P_j)}{1 + \TV(P_i, P_j)} \le 2\TV(P_i, P_j)$, for $D_{ij}=\{(y_1,...,y_k)\in \Y^k: y_i \ne y_j\}$.

\textbf{4. Projective consistency}
Let $E \subseteq [k]$ be nonempty. Let $\mathbf{P}'=(P_i)_{i\in E}$ and define $\nu_{\mathbf{P}'} = \sum_{i \in E} P_i$ and $f_i(\mathbf{P}',y) = \frac{d P_i }{d\nu_{\mathbf{P}'}}(y)$ for $i \in E$. Thus, by the chain rule, we have $f_i(\mathbf{P},y) = r(y) f_i(\mathbf{P}',y)$, where $r(y) = \frac{d\nu_{\mathbf{P}'} }{d\nu_{\mathbf{P}}}(y)$  $\nu_{\mathbf{P}}$-a.e, for all $i \in E$. 
Let $Y_+=\{y:r(y)>0\}$. Since $\nu_{\mathbf P'}(\Y\setminus Y_+)=0$ and $P_i\ll\nu_{\mathbf P'}$ for every $i\in E$, we also have $P_i(\Y\setminus Y_+)=0$. Equivalently, the portion of the acceptance region for coordinate $i$ lying over $\Y\setminus Y_+$ has zero intensity. Hence, almost surely, no point with $r(y)=0$ is accepted by any coordinate $i\in E$, and these points may be discarded.

Note that under the change of variable $(y,s,t) \mapsto (y, \frac{s}{r(y)}, t)$, the restriction of $\A_{\mathbf{P}}$ on $Y_+\times \R_+ \times \R_+$ is transformed into a PPP with intensity $\nu_{\mathbf{P}'}(dy) du dt$, for $u=s/r(y)$, since $\nu_{\mathbf{P}}(dy) ds dt=\nu_{\mathbf{P}}(dy) r(y) du dt=\nu_{\mathbf{P}'}(dy) du dt$. Moreover, for all $i \in E$, $s\leq f_i^{\mathbf{P}}(y)$ if and only if $\frac{s}{r(y)} \le f_i(\mathbf{P}', y) \iff u \le f_i(\mathbf{P}', y)$, so after rescaling the $s-$ coordinate, the transformation does not change the $y-$ and $t-$ coordinate, i.e. it preserves the first accepted point for every $i\in E$. In other words, if some point $(y,s,t)$ was the first accepted point for coordinate $i$ in the full construction, its transformed version $(y, s/r(y),t)$ has the same $y$ and $t$, and is accepted in the subtuple construction. Therefore, the joint law of the coordinates indexed by $E$ obtained from the full construction is exactly the law obtained by applying the Poisson construction directly to $\mathbf{P}'$. Hence, $(\mathrm{pr}_E)_\# \Gamma_k(\mathbf{P};\cdot) = \Gamma_{k'} (\mathbf{P}'; \cdot)$, for $k'=|E|$.

\textbf{5. Permutation equivariance}
Let $\mathbf{P}=(P_1,...,P_k)$ and denote $\mathbf{P}_\sigma = (P_{\sigma(1)},...,P_{\sigma(k)})$ for an arbitrary permutation $\sigma$ of $[k].$ Note that the dominating measure is invariant under permutations, i.e. $\nu_{\mathbf{P}}=\sum_i P_i = \sum_i P_{\sigma(i)} = \nu_{\mathbf{P}_\sigma}.$ Moreover, for every $i\in[k]$, $f_i(\mathbf{P}_\sigma,y) = \frac{d P_{\sigma(i)}}{ d \nu_{\mathbf{P}_\sigma} }= \frac{d P_{\sigma(i)}}{ d \nu_{\mathbf{P}}} = f_{\sigma(i)}(\mathbf{P},\cdot)$ $\nu_{\mathbf P}$-a.e. Also note that a Poisson point process $\A_{\mathbf{P}}$ on $\Y \times \R_+ \times \R_+$ with intensity $\nu_{\mathbf{P}}(dy) ds dt$ is also the Poisson point process used in the construction for
$\mathbf{P}_{\sigma}$, since $\nu_{\mathbf{P}}=\nu_{\mathbf{P}_\sigma}.$ Thus, the acceptance regions are the same a.s. as well, i.e. $\A_{\mathbf{P}_\sigma,i} = \{(y,s,t) \in \A_{\mathbf{P}}: s \le f_i(\mathbf{P}_\sigma,y) \} =\{(y,s,t) \in \A_{\mathbf{P}}: s \le f_{\sigma(i)}(\mathbf{P},y) \} =  \A_{\mathbf{P},\sigma(i)}$. Hence, if we similarly define $Z_{\sigma, i} = \Phi_i(\mathbf{P}_\sigma, \A_{\mathbf{P}})$, then we have $Z_{\sigma, i} = \Phi_i(\mathbf{P}_\sigma, \A_{\mathbf{P}})= \Phi_{\sigma(i)}(\mathbf{P}, \A_{\mathbf{P}}) = Z_{\sigma(i)}$, where $Z_i = \Phi_i(\mathbf{P}, \A_{\mathbf{P}})$. Thus, $(Z_{\sigma, 1}, ..., Z_{\sigma, k}) \sim \Gamma_k(\mathbf{P}_\sigma;\cdot)$
\end{proof}

There are two aspects of \ref{prop:kernels} worth remarking.
First, the standard Borel machinery is not needed to construct the coupling for a fixed tuple $\mathbf{P}$. In fact, for fixed $\mathbf{P}$, the Construction~\ref{cons:construction_2} is simply the globally-dominated construction as in Construction~\ref{cons:construction_1} using the $\mathbf{P}$-dependent dominating measure $\nu_{\mathbf{P}}=\sum_i P_i$. The standard-Borel assumption is used to ensure that the relevant Radon-Nikodym derivatives and Poisson law can be parameterized measurably in $\mathbf{P}$ so that $\Gamma_k$ is indeed a Markov kernel. This is what allows us later to sample from $\Gamma_k(\mathbf{P};\cdot)$ when $\mathbf{P}$ itself is a random data-dependent tuple of learner outputs.

Second, projective consistency allows couplings laws associated with different tuples to be realized as marginals of a single larger coupling. For example, given $\mathbf{P}=(P_1,...,P_k)$ and $\mathbf{Q}=(Q_1,...,Q_k)$, and consider $\Gamma_{2k}((P_1,...,P_k,Q_1,...,Q_k));\cdot)$. Projective consistency guarantees that its first $k$ coordinates have law $\Gamma_k(\mathbf{P};\cdot)$, and its last $k$ coordinates have law $\Gamma_k(\mathbf{Q};\cdot)$. Thus, the larger coupling jointly realizes the two coupling laws without changing either of their original joint distributions. This is the key property that we will need for privacy, where we couple two neighboring executions, and utility analyses, where we enlarge the current execution by adjoining the ground-truth law.

It is also worth noting the differences between the two constructions. Under a common $\sigma$-finite dominating measure $\nu$, construction \ref{cons:construction_1} produces a single coupling $(Z_P)_{P \in \mathcal{P}}$ for arbitrary (potentially uncountable) family $\mathcal{P} \subseteq \Delta(\Y)$, using one single global Poisson point process. On the other hand, construction \ref{cons:construction_2} assumes \emph{no global dominating measure}, and for each finite tuple $\mathbf{P}$, constructs a tuple-specific coupling using the canonical dominating measure $\nu_{\mathbf{P}}$ that is also tuple-dependent. Although for a different finite tuple $\mathbf{Q}$, the constructions generally uses a different reference measure $\nu_{\mathbf{Q}}$ and hence a different PPP law and coupling, the projection consistency allows us to create a larger coupling of the concatenated tuple, such that the coordinates corresponding to $\mathbf{Q}$ have law $\Gamma_k(\mathbf{Q};\cdot)$, and the rest have law $\Gamma_k(\mathbf{P};\cdot)$

Despite this bottom-up approach, we can still establish a post-hoc universal coupling over all of $\Delta(\mathcal{Y})$ via the Kolmogorov Extension Theorem \cite{Kallenberg02}. Specifically, the projective consistency established in Proposition~\ref{prop:kernels} allows us to stitch the entire family of finite-dimensional couplings $(\Gamma_k)_{k \ge 1}$ together into a single, global probability measure over the uncountable product space $\mathcal{Y}^{\Delta(\mathcal{Y})}$, yielding a simultaneous coupling for every possible distribution in $\Delta(\mathcal{Y})$.
Consequently, the conceptual takeaway for both constructions is identical. In Construction~\ref{cons:construction_1}, one conceptually draws the family $(Z_P)_{P\in\mathcal{P}}$ from the joint law $\Gamma$. In Construction~\ref{cons:construction_2}, one conceptually draws the entire universe $(Z_P)_{P\in\Delta(\mathcal{Y})}$ from $\Gamma_{\infty}$, and trivially restricts to the coordinates $(Z_P)_{P\in\mathcal{P}}$ for any subset $\mathcal{P}$ of interest. The only caveat lies in how the (conceptual) couplings are constructed. In Construction~\ref{cons:construction_1}, we assume a dominating measure $\nu$ such that $P \ll \nu$ for all $P \in \mathcal{P}$, and construct the coupling directly based on a single PPP $\mathcal{A}$ depending only on $\nu$. In Construction~\ref{cons:construction_2}, we instead build a $\mathbf{P}$-dependent measure $\nu_{\mathbf{P}}$ and a corresponding PPP $\mathcal{A}_{\mathbf{P}}$ to define a coupling $\Gamma_k(\mathbf{P};\cdot)$ for each finite tuple $\mathbf{P}$, and then KET guarantees that this family of finite-dimensional couplings $(\Gamma_k)_{k \ge 1}$ ultimately induces a global coupling $\Gamma_{\infty}$ over the entire space $\Delta(\mathcal{Y})$.

\begin{corollary}\label{cor:ket_coupling}
In the setup above, let $(\Gamma_k)_{k\ge1}$ be the family of coupling kernels from Proposition~\ref{prop:kernels}. Then there exists a probability measure $\Gamma_\infty$ on $\Y^{\Delta(\Y)}$ equipped with the product $\sigma$-algebra such that if $(Z_P)_{P \in\Delta(\Y)} \sim \Gamma_\infty$, then for every $k\ge 1$ and every finite collection of probability measures $\mathbf{P}=(P_1,...,P_k)$, $(Z_{P_1},...,Z_{P_k}) \sim \Gamma_k(\mathbf{P};\cdot)$. In particular, for every $P\in \Delta(\Y)$, $Z_P \sim P$ and for every $P,Q \in \Delta(\Y)$, $\Pr[Z_P \ne Z_Q] \le 2 \TV(P,Q)$. In other words, $\Gamma_\infty$ is a grand coupling of all probability measure in $\Delta(\Y)$, i.e. $\Gamma_\infty$ is fixed by the family $(\Gamma_k)_k$ and does not depend on any particular finite tuple.
\end{corollary}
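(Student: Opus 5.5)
The plan is to apply the Kolmogorov Extension Theorem on the index set $I=\Delta(\Y)$ with the standard Borel state space $\Y$. For each finite subset $F=\{P_1,\dots,P_k\}\subseteq\Delta(\Y)$ of distinct measures, we define a measure $\mu_F$ on $\Y^F$. Fix any enumeration of $F$, and let $\mu_F$ be the pushforward of $\Gamma_k((P_1,\dots,P_k);\cdot)$ under the identification $\Y^k\cong\Y^F$ sending coordinate $i$ to index $P_i$. Permutation equivariance (Proposition~\ref{prop:kernels}\ref{it:perm}) shows that $\mu_F$ does not depend on the enumeration. Hence $F\mapsto\mu_F$ is well defined on unordered finite index sets.

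Next I check the consistency hypothesis of Kolmogorov's theorem. For finite $F'\subseteq F$, we need $(\mathrm{pr}_{F'})_\#\mu_F=\mu_{F'}$. Enumerate $F$ so that the elements of $F'$ come first, or are in any order. Then this is exactly projective consistency, Proposition~\ref{prop:kernels}\ref{it:proj}, combined once more with permutation equivariance to match orderings. Since $\Y$ is standard Borel, hence Borel-isomorphic to a Polish space, the Kolmogorov Extension Theorem for arbitrary index sets \cite{Kallenberg02} applies. It yields a unique $\Gamma_\infty$ on $\Y^{\Delta(\Y)}$ with the product $\sigma$-algebra whose finite-dimensional marginals are the $\mu_F$.

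It remains to handle tuples $(P_1,\dots,P_k)$ with repeated entries, since the statement quantifies over all finite collections. The law of $(Z_{P_1},\dots,Z_{P_k})$ under $\Gamma_\infty$ is the pushforward of $\mu_F$, with $F$ the set of distinct entries, under the duplication map $\Y^F\to\Y^k$. I must show this equals $\Gamma_k(\mathbf P;\cdot)$. That is, in Construction~\ref{cons:construction_2}, if $P_i=P_j$ then $Z_{\mathbf P,i}=Z_{\mathbf P,j}$ almost surely. This holds because $f_i(\mathbf P,\cdot)=f_j(\mathbf P,\cdot)$ $\nu_{\mathbf P}$-a.e., so the acceptance regions coincide almost surely. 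The disagreement bound in item~\ref{it:dis} also gives $\Pr[Z_i\neq Z_j]\le 2\TV(P_i,P_i)=0$. After collapsing duplicates, projective consistency reduces $\Gamma_k(\mathbf P;\cdot)$ to the law on the distinct entries, which is $\mu_F$.

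The consequences then follow directly. Taking $k=1$ gives $Z_P\sim\Gamma_1(P;\cdot)=P$ by item~\ref{it:marg}. Taking $k=2$ and applying item~\ref{it:dis} gives $\Pr[Z_P\neq Z_Q]\le2\TV(P,Q)$. Uniqueness in Kolmogorov's theorem shows $\Gamma_\infty$ is determined by $(\Gamma_k)_k$ alone. The main obstacle is bookkeeping rather than analysis: Kolmogorov's theorem is stated for unordered finite index sets, while the kernels are defined on ordered, possibly repeating tuples. Items~\ref{it:proj} and~\ref{it:perm}, together with the duplicate-collapsing observation, are exactly what bridges this gap.
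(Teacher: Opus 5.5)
Your proposal is correct and follows the paper's proof essentially step for step. You define $\mu_F$ via an arbitrary enumeration, made well defined by permutation equivariance, check consistency through projective consistency, and invoke Kolmogorov's extension theorem. You then remove the distinctness assumption with the duplication map, using the vanishing disagreement bound when $P_i=P_j$ together with projective consistency onto the distinct entries. Your remark that $\Y$ being standard Borel is what makes the extension theorem applicable over the arbitrary index set $\Delta(\Y)$ is a detail the paper leaves implicit.
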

\begin{proof}
Consider index set $T=\Delta(\Y)$ and the coordinate spaces $S_P=\Y$ for $P\in T$. For every finite subset $I=\{P_1,...,P_k\} \subseteq T$, we choose an arbitrary ordering $(P_1,...,P_k)$. Under the natural isomorphism, $(y_P)_{P\in I} \rightarrow (y_{P_i})_{i \in [k]}$, we define $\mu_I$ to be the probability measure on $S_I = \prod_{P_i \in I} \Y = \Y^I$, corresponding to $\Gamma_k((P_1,...,P_k);\cdot)$. Note that such definition is independent of the chosen ordering, since for any other ordering of the form $(P_{\sigma(1)},...,P_{\sigma(k)})$ with arbitrary permutation $\sigma$ of $[k]$, Proposition~\ref{prop:kernels} shows that $\big(Z_{\sigma(1)},\dots,Z_{\sigma(k)}\big) \sim\Gamma_k\big((P_{\sigma(1)},\dots,P_{\sigma(k)});\cdot\big)$ if $(Z_1,\dots,Z_k) \sim \Gamma_k\big((P_1,\dots,P_k);\cdot\big)$. Hence, the two orderings induce the same law on the coordinate-indexed space $\Y^I.$

Now let $J \subseteq I$. Using the same ordering, projective consistency in Proposition~\ref{prop:kernels} implies $(\mathrm{pr}_{J})_\# \mu_I=\mu_J$, where $ \mathrm{pr}_{J}: \Y^I \mapsto \Y^J$ is the coordinate projection. Hence, $(\mu_I)_I$ is a projective family, and thus, the Kolmogorov Extension Theorem~\cite{Kallenberg02} yields random variables $(Z_P)_{P \in \Delta(\Y)}$ with $Z_P \in \Y$ such that for every finite collection of distinct measures $I=\{P_1,...,P_k\}$, $(Z_{P_1}, ..., Z_{P_k}) \sim \Gamma_k((P_1,...,P_k);\cdot)$. Let $\Gamma_\infty = \mathrm{Law}((Z_P)_{P \in \Delta(\Y)})$, which is a probability measure on $\Y^{\Delta(\Y)}$ equipped with its product $\sigma$-algebra.

It remains to remove the distinctness assumption. Fix an arbitrary tuple $\mathbf{P}=(P_1,...,P_k)\in T^k$ and let $Q_1,...,Q_n$ be the distinct measures (in the same ordering) in $\mathbf{P}$, where $P_i = Q_{\pi(i)}$ for some map $\pi : [k] \mapsto [n]$. Now define a duplication enlargement map $T_\pi: \Y^n \mapsto \Y^k$, where $T_\pi(y_1,...,y_n)=(y_{\pi(1)},...,y_{\pi(k)})$.  Since $Q$'s are distinct, it follows that $(Z_{Q_1}, ..., Z_{Q_n}) \sim \Gamma_n((Q_1,...,Q_n);\cdot)$. Moreover, since $(Z_P)_{P\in \Delta(\Y)}\sim \Gamma_\infty$ is indexed by the probability measures themselves, $P_i=Q_{\pi(i)}$ implies that $Z_{P_i}=Z_{Q_{\pi(i)}}$ is the same coordinate of the grand coupling, so it follows that $(Z_{P_1},\ldots,Z_{P_k})=T_\pi(Z_{Q_1},...,Z_{Q_n})$ pointwise. Therefore, $\mathrm{Law}_{\Gamma_{\infty}}(Z_{P_1},...,Z_{P_k})=(T_\pi)_\# \Gamma_n((Q_1,...,Q_n);\cdot).$ 

Next, it suffices to show that $(T_\pi)_\# \Gamma_n((Q_1,...,Q_n);\cdot) = \Gamma_k(\mathbf P;\cdot)$. Now consider $E=\{r_1,...,r_n\}$ where $r_j=\min\{i\in[k]: P_i = Q_j\}$ is the first representative coordinate for $Q_j$ in $\mathbf{P}$. By projection consistency, we have $(\mathrm{pr}_E)_\# \Gamma_k(\mathbf P;\cdot) = \Gamma_n((Q_1,...,Q_n);\cdot)$ since $Q$'s are distinct. Also note that $\Gamma_k(\mathbf P;D_{ij}) \le 2\TV(P_i,P_j) =0$ if $P_i = P_j$, so for $(Y_1,...,Y_k)\sim \Gamma_k(\mathbf P; \cdot)$, $Y_i = Y_{r_{\pi(i)}}$ a.s., and thus $(Y_1,...,Y_k)=T_\pi(Y_{r_1},...,Y_{r_n})$ a.s. Hence, $\Gamma_k(\mathbf P; \cdot)=(T_\pi)_\# \mathrm{Law}(Y_{r_1},...,Y_{r_n}) = (T_\pi)_\# \Gamma_n((Q_1,...,Q_n);\cdot) = \mathrm{Law}_{\Gamma_{\infty}}(Z_{P_1},...,Z_{P_k})$. Thus, we have shown that for every $k\ge1$ and every tuple $\mathbf{P}\in\Delta(\Y)^k$ that could contain repeated measures, $(Z_{P_1},...,Z_{P_k}) \sim \Gamma_k(\mathbf{P};\cdot)$.

Finally, for all $P\in\Delta(\Y)$, taking $I= \{P\}$ yields $Z_P \sim \Gamma_1(P;\cdot)$, and by Proposition~\ref{prop:kernels}, $\Gamma_1(P;\cdot)=P$, so $Z_P \sim P$. For all $P, Q\in\Delta(\Y)$, taking $I= \{P,Q\}$ yields $(Z_P,Z_Q) \sim \Gamma_2((P,Q);\cdot)$, so $\Pr[Z_P\ne Z_Q]=\Gamma_2((P,Q); D_{12})$ for $D_{12}=\{(y_1,y_2)\in \Y^2: y_1 \ne y_2 \}$. By the disagreement bound in Proposition~\ref{prop:kernels}, it follows that $\Pr[Z_P\ne Z_Q]=\Gamma_2((P,Q); D_{12})\le 2\TV(P,Q)$.
\end{proof}

Having established the existence of the global measure $\Gamma_\infty$, we now formally confirm that it constitutes a grand coupling scheme on $\Delta(\mathcal{Y})$, and trivially after restriction, on any subset $\mathcal{P}$.

\begin{corollary}[Grand coupling scheme $\Gamma_{\mathcal{P}}$ from Construction~\ref{cons:construction_2}]\label{cor:cons2_grand_coupling}
The family of random variables $(Z_P)_{P \in\Delta(\mathcal{Y})} \sim \Gamma_\infty$ constructed in Corollary~\ref{cor:ket_coupling} constitutes a universal grand coupling scheme on the entire space $\Delta(\Y)$. Consequently, its trivial restriction $(Z_P)_{P \in \mathcal{P}}$ yields a valid grand coupling scheme $\Gamma_{\P}$ on any subset $\P \subseteq \Delta(\Y)$.
\end{corollary}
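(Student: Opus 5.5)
The proof checks the three requirements (G1)–(G3) of Definition~\ref{def:grand-scheme} for the family $(Z_P)_{P\in\Delta(\Y)}\sim\Gamma_\infty$ of Corollary~\ref{cor:ket_coupling}. Restriction is then a matter of observing that every requirement is a statement about finitely many coordinates indexed by elements of $\mathcal P$.

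\emph{Marginals and agreement.} (G1) and (G2) have already been established: the last paragraph of Corollary~\ref{cor:ket_coupling} gives $Z_P\sim P$ for every $P\in\Delta(\Y)$ and $\Pr[Z_P\neq Z_Q]\le 2\TV(P,Q)$ for all $P,Q$. The $Z_P$ all live on one probability space, namely $\Y^{\Delta(\Y)}$ with the product $\sigma$-algebra and law $\Gamma_\infty$. So these two items are immediate.

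\emph{Measurable kernels.} For (G3), fix $k\ge1$. Corollary~\ref{cor:ket_coupling}, in the version that also allows repeated entries, shows that for every tuple $\mathbf P=(P_1,\dots,P_k)\in\Delta(\Y)^k$,
\[
\mathcal L(Z_{P_1},\dots,Z_{P_k})=\Gamma_k(\mathbf P;\cdot).
\]
Hence the map $\mathbf P\mapsto\mathcal L(Z_{P_1},\dots,Z_{P_k})$ coincides with the map $\mathbf P\mapsto\Gamma_k(\mathbf P;\cdot)$ from Construction~\ref{cons:construction_2}. Proposition~\ref{prop:kernels}\ref{it:meas} says this map is a Markov kernel from $\Delta(\Y)^k$ to $\Y^k$.

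\emph{Restriction to $\mathcal P$.} Take $\mathcal P\subseteq\Delta(\Y)$ and set $\Gamma_{\mathcal P}$ to be the pushforward of $\Gamma_\infty$ under the coordinate projection $\Y^{\Delta(\Y)}\to\Y^{\mathcal P}$. This projection is measurable for the product $\sigma$-algebras. (G1) and (G2) involve only indices in $\mathcal P$, so they survive restriction unchanged. For (G3), the relevant kernel is the restriction of $\Gamma_k$ to $\mathcal P^k$, where $\mathcal P^k$ carries the induced (trace) $\sigma$-algebra. Restricting a measurable function to a subspace with its trace $\sigma$-algebra keeps it measurable, so the restricted map is still a Markov kernel.

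The only delicate point is the identification of the finite-dimensional laws of $\Gamma_\infty$ with $\Gamma_k(\mathbf P;\cdot)$ for tuples with repeated entries. Kolmogorov's theorem is applied to finite sets of distinct indices, whereas (G3) quantifies over all tuples. The duplication argument in Corollary~\ref{cor:ket_coupling} handles this, using projective consistency together with the fact that the disagreement bound forces $Z_{P_i}=Z_{P_j}$ almost surely when $P_i=P_j$. Since that argument has already been carried out, I would simply cite it.
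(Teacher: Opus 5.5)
Your proposal is correct and follows the paper's proof: (G1) and (G2) come from Corollary~\ref{cor:ket_coupling}, (G3) from Proposition~\ref{prop:kernels}\ref{it:meas}, and restricting to $\mathcal P$ preserves all three. The paper leaves two points implicit that you spell out: the identification $\mathcal L(Z_{P_1},\dots,Z_{P_k})=\Gamma_k(\mathbf P;\cdot)$ for tuples with repeated entries, and the trace $\sigma$-algebra on $\mathcal P^k$.
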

\begin{proof}
G1 and G2 directly follows from Corollary~\ref{cor:ket_coupling}, and G3 follows from Proposition~\ref{prop:kernels}. The restriction to any subset $\mathcal{P}$ preserves all three properties trivially.Thus, the joint law of $(Z_P)_{P \in \mathcal{P}}$ yields the desired grand coupling scheme $\Gamma_{\mathcal{P}}$.
\end{proof}

\section{Private synthetic data sampler}\label{app:sampler-details}

In this section, we apply the grand coupling schemes developed in Section~\ref{sec:coupling} to  construct a differentially private synthetic data sampler. In particular, we formally prove the privacy and utility guarantees of \textsc{Aggregate} (Algorithm~\ref{alg:coupled-sampler}) and \textsc{Sampler} (Algorithm~\ref{alg:private-correlated-sampler}), presented in Section~\ref{sec:private-data-sampler}.

Throughout, let $\Y=\X^\ell$, where $\mathcal{X}$ is the data universe and $\ell$ is the size of the synthetic dataset, and the space of probability measures of interest is
$\P=\{q^\ell:q\in\F\}$, where $\F$ contains the ground-truth distribution class $\Q$ and all possible outputs of the base learner.

\subsection{\textsc{Aggregate}: Tuple-Level Privacy and Utility}

We establish the privacy and utility guarantees of \textsc{Aggregate} (Algorithm~\ref{alg:coupled-sampler}), instantiated with any grand coupling scheme on $\P$ satisfying Definition~\ref{def:grand-scheme}.

\paragraph{Tuple-Level Privacy.}
Recall that we use a change-by-one adjacency notion for neighboring tuples, i.e., $\mathbf{q}=(q_1,\dots,q_k)$ and $\mathbf{q}'=(q_1',\dots,q_k')$ are neighboring if they differ in at most one coordinate.

The privacy argument relies on using the grand coupling scheme $\Gamma_{\P}$ to jointly evaluate the algorithm on both $\q$ and $\q'$. Since the joint law of $\mathsf{Couple}_k(\q)$, and likewise for $\q'$, in $\textsc{Aggregate}$ (Algorithm~\ref{alg:coupled-sampler}) is simply a finite-dimensional marginal of the global measure $\Gamma_{\mathcal{P}}$\footnote{For $\mathsf{Couple\_I}$ (Construction~\ref{cons:construction_1}), the joint law of $(Z_1,\dots,Z_k)\sim \mathcal{L}(\mathsf{Couple}_k(\q))$ is the $(q_1^\ell,\dots,q_k^\ell)$-coordinate marginal of the explicit coupling $\Gamma$ on $\mathcal{P}=\{q^\ell : q \in \mathcal{F}\}$. For $\mathsf{Couple\_II}$ (Construction~\ref{cons:construction_2}), it is the corresponding finite-dimensional marginal of the grand coupling $\Gamma_\infty$ on $\mathcal{Y}^{\Delta(\mathcal{Y})}$, where $\mathcal{Y}=\mathcal{X}^\ell$.}, we can evaluate both executions on a single probability space.

Specifically, rather than drawing independent samples from $\mathcal{L}(\mathsf{Couple}_k(\q))$ and $\mathcal{L}(\mathsf{Couple}_k(\q'))$, respectively, we construct an analysis coupling from a single draw $(Z_P)_{P \in \mathcal{P}} \sim \Gamma_{\P}$, setting $\tilde{Z}_i = Z_{q_i^\ell}$ for input $\q$ and $\tilde{Z}'_i = Z_{(q_i')^\ell}$ for input $\q'$. For every unchanged coordinate $i \neq j$, $q_i = q_i'$ implies $Z_{q_i^\ell} = Z_{(q_i')^\ell}$ pointwise. Consequently, the realized input vectors $(\tilde{Z}_1, \dots, \tilde{Z}_k)$ and $(\tilde{Z}'_1, \dots, \tilde{Z}'_k)$ differ in at most a single coordinate $j$ for \emph{every realization} of $\Gamma_{\P}$. Therefore, since neighboring input tuples yield pointwise neighboring vectors under $\Gamma_{\P}$, the $(\varepsilon,\delta)$-DP guarantee of $\mathsf{StabilityBasedMode}$ transfers directly to $\textsc{Aggregate}$.

\begin{proposition}[Privacy of \textsc{Aggregate}]
\label{prop:sampler-privacy}
If $\mathsf{StabilityBasedMode}$ is $(\varepsilon,\delta)$-DP, then \textsc{Aggregate} in Algorithm~\ref{alg:coupled-sampler} is $(\varepsilon,\delta)$-DP.\footnote{For $\mathsf{StabilityBasedMode}$, adjacency is defined as replacing a single dataset $Z_i$ in $(Z_1,\dots,Z_k)$; for $\textsc{Aggregate}$, adjacency is defined as replacing a single distribution $q_i$ in $\mathbf{q}=(q_1,\dots,q_k)$.}
\end{proposition}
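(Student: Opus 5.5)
The plan is to compare the two executions of \textsc{Aggregate} on a single probability space provided by the grand coupling, and then average the DP inequality of $\mathsf{StabilityBasedMode}$ over that space. Fix neighboring tuples $\q,\q'\in\F^k$ that differ only in coordinate $j$, and fix a measurable set $B\subseteq\X^\ell$. Write $M$ for $\mathsf{StabilityBasedMode}$, a randomized map from $(\X^\ell)^k$ to $\X^\ell$ whose internal randomness is independent of its input. By (G3) and the definition of $\mathsf{Couple}_k$, the output of \textsc{Aggregate}$(\q)$ has law $\int \Pr[M(z)\in B]\,\Gamma_k((q_1^\ell,\dots,q_k^\ell);dz)$, and the analogous formula holds for $\q'$. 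Here $z\mapsto \Pr[M(z)\in B]$ is measurable, because $M$ is a Markov kernel; this is the same measurability already used in the post-processing statement, Proposition~\ref{prop:post-processing}.

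The first key step realizes both laws at once. Take a single draw $(Z_P)_{P\in\P}\sim\Gamma_{\P}$ and set $\tilde Z_i=Z_{q_i^\ell}$ and $\tilde Z'_i=Z_{(q'_i)^\ell}$. Since $\Gamma_{\P}$ is a grand coupling scheme, the finite-dimensional law of $(Z_{q_1^\ell},\dots,Z_{q_k^\ell})$ is exactly $\Gamma_k((q_1^\ell,\dots,q_k^\ell);\cdot)$, including the case where the tuple has repeated entries. For Construction~\ref{cons:construction_2} this is Corollary~\ref{cor:ket_coupling}; for Construction~\ref{cons:construction_1} it holds by definition, with $\Gamma_k$ the coordinate marginal. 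The same applies to $\q'$. So $\tilde{\mathbf Z}=(\tilde Z_1,\dots,\tilde Z_k)$ has the law used by \textsc{Aggregate}$(\q)$, and $\tilde{\mathbf Z}'$ has the law used by \textsc{Aggregate}$(\q')$. Moreover, $q_i=q'_i$ for every $i\ne j$, so $\tilde Z_i$ and $\tilde Z'_i$ are literally the same coordinate of the grand coupling. Hence $\tilde{\mathbf Z}$ and $\tilde{\mathbf Z}'$ are neighboring inputs to $M$ for every realization.

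The second step applies DP pointwise and integrates. For every realization, the DP guarantee of $M$ gives $\Pr[M(\tilde{\mathbf Z})\in B]\le e^\varepsilon\Pr[M(\tilde{\mathbf Z}')\in B]+\delta$. Taking expectations over $\Gamma_{\P}$ and using the two marginal identities, we get $\Pr[\textsc{Aggregate}(\q)\in B]\le e^\varepsilon\Pr[\textsc{Aggregate}(\q')\in B]+\delta$. This uses only linearity of expectation, since $e^\varepsilon$ and $\delta$ are constants.

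I expect the main obstacle to be the measure-theoretic bookkeeping, not the inequality itself. Concretely, we must ensure that $\tilde{\mathbf Z}$ and $\tilde{\mathbf Z}'$ are measurable random vectors on a common space and that the joint pair $(\tilde{\mathbf Z},\tilde{\mathbf Z}')$ is well defined. I would handle this by noting that the pair is a finite-dimensional projection of the grand coupling onto at most $k+1$ distinct coordinates. Its law is therefore $\Gamma_{k+1}$ applied to the augmented tuple, pushed forward through a duplication map. This is exactly the projective-consistency content of Proposition~\ref{prop:kernels} and Corollary~\ref{cor:ket_coupling}, so the expectations above are legitimate and match the algorithm's actual laws.
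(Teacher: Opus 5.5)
Your proposal is correct and follows essentially the same route as the paper's proof. You realize $\tilde{\mathbf Z}$ and $\tilde{\mathbf Z}'$ as coordinates of a single draw from the grand coupling, observe that they coincide outside coordinate $j$ for every realization, apply the DP inequality of $\mathsf{StabilityBasedMode}$ pointwise, and integrate using (G3) to identify the two expectations with the output probabilities of \textsc{Aggregate}$(\q)$ and \textsc{Aggregate}$(\q')$. Your extra measure-theoretic remarks (measurability of $z\mapsto\Pr[M(z)\in B]$, and the repeated-entries case via Corollary~\ref{cor:ket_coupling}) are consistent with the paper and slightly more careful than its argument; the reverse inequality follows by swapping the roles of $\q$ and $\q'$.
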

\begin{proof}
    Fix a pair of neighboring tuples $\q=(q_1,...,q_k)$ and $\q'=(q_1',...,q_k')$, where $q_i = q_i'$ for all $i\ne j$. Let $P_i = q_i^\ell$ and $P_i' = q_i'^{\ell}$. Hence, $P_i = P_i'$ for all $i\ne j$. We denote $\mathbf{P}=(P_1,...,P_k)$ and $\mathbf{P}'=(P_1',...,P_k')$ 
    
    By Definition~\ref{def:grand-scheme}, let $(Z_P)_{P \in \P}$ be the grand coupling scheme on $\P$ used in $\mathsf{Couple}_k$, and, on the common probability space where $(Z_P)_{P \in \mathcal{P}}$ is defined, we construct the vectors $\tilde{\mathbf{Z}} = (Z_{P_1}, \dots, Z_{P_k})$ and $\tilde{\mathbf{Z}}' = (Z_{P_1'}, \dots, Z_{P_k'})$.
    Note that the joint law of the output of $\mathsf{Couple}_k(\mathbf{q})$ is exactly the marginal law $\mathcal{L}(Z_{P_1}, \dots, Z_{P_k})$, and similarly for $\q'$, by property G3, so it follows that $\tilde{\mathbf{Z}} \overset{\text{d}}{=} \mathsf{Couple}_k(\mathbf{q})$ and $\tilde{\mathbf{Z}}' \overset{\text{d}}{=} \mathsf{Couple}_k(\mathbf{q}')$.

    Furthermore, since $P_i = P_i'$ for all $i \ne j$, we have $\tilde{Z}_i = \tilde{Z}_i'$ pointwise for all $i \ne j$. Thus, the vectors $(\tilde{Z}_1,\dots,\tilde{Z}_k)$ and $(\tilde{Z}_1',\dots,\tilde{Z}_k')$ are neighboring.
    Let $M(\mathbf{Z};B)$ denote the probability that $\mathsf{StabilityBasedMode}$ outputs an element of a measurable set $B$ on input $\mathbf{Z}$. Since $M$ is $(\varepsilon,\delta)$-DP and our coupled input vectors are adjacent a.s., $M((\tilde Z_1,\dots,\tilde Z_k);B) \le e^\varepsilon M((\tilde Z_1',\dots,\tilde Z_k');B) + \delta$. Taking the expectation over the global measure $\Gamma$ yields:
    \begin{align*}
        \Pr[\textsc{Aggregate}(\mathbf{q}) \in B]&=\E_{\Gamma}[M((\tilde Z_1,\dots,\tilde Z_k);B)] \\
        &\leq e^\varepsilon \E_{\Gamma}[M((\tilde Z_1',\dots,\tilde Z_k');B)] +\delta \\
        &= e^\varepsilon \Pr[\textsc{Aggregate}(\mathbf{q}') \in B]  +\delta.
    \end{align*}
    A symmetric argument holds for the reverse direction.
\end{proof}

\paragraph{Tuple-Level Utility}

We next show that if a large majority of the input distributions $q_i$ are close to a target ground-truth distribution $p$ in TV, then the output of $\textsc{Aggregate}$ matches an ideal draw $p^\ell \sim p^\ell$ with high probability, and is thus close to $p^\ell$ in TV.

As in the privacy analysis, the utility guarantee relies on constructing an analysis coupling from the grand coupling scheme $\Gamma_{\P}$. Specifically, for input distributions $\q = (q_1, \dots, q_k) \in \F^k$ and a target distribution $p \in \F$, we take a single draw $(Z_P)_{P \in \P} \sim \Gamma_{\P}$ and define $\tilde{Z}_i = Z_{q_i^\ell}$ for each $i \in [k]$, alongside an unobserved ``ghost'' sample $\tilde{Z}_* = Z_{p^\ell}$, indexed by the ground-truth product distribution $p^\ell \in \P$. Since $\mathcal{L}(\mathsf{Couple}_k(\mathbf{q}))$ and $\mathcal{L}(\mathsf{Couple}_1(\mathbf{p}))$ are simply a finite-dimensional marginal of the global measure $\Gamma_{\mathcal{P}}$, so our construction from the grand coupling scheme $\Gamma_{\P}$ satisfies $(\tilde{Z}_1,...,\tilde{Z}_k) \sim \mathcal{L}(\mathsf{Couple}_k(\mathbf{q}))$ and $\tilde{Z}_* \sim \mathcal{L}(\mathsf{Couple}_1(p))=p^\ell$, by property G3 and G1.  

We emphasize that $\tilde{Z}_*$ is never computed by the algorithm, as it is adjoined purely for the analysis. For every accurate distribution $q_i$ where $\mathrm{TV}(q_i, p) \le \eta$, property G2 guarantees that $\tilde{Z}_i = \tilde{Z}_*$ except with probability at most $2\ell\eta$. Consequently, when $7/8$ of the $q_i$ are $\eta$-close to $p$, an overwhelming majority of the coupled samples $\tilde{Z}_i$ collapse onto the exact same value $\tilde{Z}_* \sim p^\ell$ pointwise, creating a large frequency gap required for $\mathsf{StabilityBasedMode}$ to succeed.

\begin{proposition}[Utility of \textsc{Aggregate}]\label{prop:coupled-sampler-utility}
Fix $p\in\Q \subseteq \F$ and a tuple $\q=(q_1,...,q_k) \in \F^k$. Denote $J = \{j \in [k]: \TV(q_j,p)\le \eta\}$ for some fixed $\eta$. Suppose $\mathsf{StabilityBasedMode}$ outputs the mode with probability at least $1-\beta_{\mathsf{SBM}}$, if the frequency gap (between the mode and the second most frequently occurring element) is at least $\tau$. Then, for every $c \in (0,1)$ such that $2(1-c) |J| - k \ge \tau$, 
$\TV(\mathcal{L}(\textsc{Aggregate}(\q)), p^\ell) \le \frac{2\ell \eta}{c} + \beta_{\mathsf{SBM}}$
\end{proposition}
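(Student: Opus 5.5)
We realize everything on one probability space via the grand coupling scheme. Take a single draw $(Z_P)_{P\in\P}\sim\Gamma_{\P}$ and set $\tilde Z_i=Z_{q_i^\ell}$ for $i\in[k]$, together with the ghost coordinate $\tilde Z_*=Z_{p^\ell}$. By (G3) the vector $(\tilde Z_1,\dots,\tilde Z_k)$ has exactly the law of $\mathsf{Couple}_k(\q)$. Hence $\textsc{Aggregate}(\q)$ is equal in law to $\hat Z\coloneqq\mathsf{StabilityBasedMode}(\tilde Z_1,\dots,\tilde Z_k)$, where the mode uses fresh independent randomness. By (G1), $\tilde Z_*\sim p^\ell$.

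The TV distance is controlled by the standard coupling inequality:
\[
\TV\bigl(\mathcal L(\hat Z),p^\ell\bigr)\le\Pr[\hat Z\neq\tilde Z_*].
\]
So it suffices to bound the probability that the mode's output differs from the ghost sample.

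We introduce the count $W=|\{j\in J:\tilde Z_j\neq\tilde Z_*\}|$ and define the good event $G=\{W\le c|J|\}$.

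\emph{Bounding the bad event.} For $j\in J$, (G2) together with subadditivity of TV under products gives
\[
\Pr[\tilde Z_j\neq\tilde Z_*]\le 2\TV(q_j^\ell,p^\ell)\le 2\ell\,\TV(q_j,p)\le 2\ell\eta .
\]
Summing over $j\in J$ yields $\E[W]\le 2\ell\eta|J|$. Markov's inequality then gives
\[
\Pr[G^c]=\Pr[W>c|J|]\le \frac{2\ell\eta|J|}{c|J|}=\frac{2\ell\eta}{c}.
\]

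\emph{The frequency gap on $G$.} On $G$, the value $\tilde Z_*$ appears at least $(1-c)|J|$ times among $\tilde Z_1,\dots,\tilde Z_k$. Every other value can appear at most $k-(1-c)|J|$ times. The gap between $\tilde Z_*$ and any competitor is therefore at least
\[
(1-c)|J|-\bigl(k-(1-c)|J|\bigr)=2(1-c)|J|-k\ge\tau .
\]
Since $\tau>0$ (implicitly, as the hypothesis only makes sense with a positive gap), $\tilde Z_*$ is the unique mode on $G$. Conditioning on the coupling draw and using that the mode's randomness is independent, the hypothesis on $\mathsf{StabilityBasedMode}$ gives $\Pr[\hat Z\neq\tilde Z_*\mid G]\le\beta_{\mathsf{SBM}}$.

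\emph{Conclusion.} Combining the two cases,
\[
\Pr[\hat Z\neq\tilde Z_*]\le\Pr[G^c]+\beta_{\mathsf{SBM}}\le\frac{2\ell\eta}{c}+\beta_{\mathsf{SBM}},
\]
which is the claimed bound.

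The main subtlety is the gap-counting step. We must count the competitor's occurrences against all $k$ coordinates, including those outside $J$, which is exactly what produces the $-k$ in the gap condition. We also need to make the conditioning on $G$ rigorous: this uses that the mode's guarantee holds pointwise for every input vector with the required gap, and then integrates over the coupling draw. The Markov step is routine.
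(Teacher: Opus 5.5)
Your proof is correct and matches the paper's argument step for step: the same analysis coupling with the ghost coordinate $\tilde Z_* = Z_{p^\ell}$, the bound $\Pr[\tilde Z_j \neq \tilde Z_*] \le 2\ell\eta$ via (G2), Markov's inequality on the number of disagreements within $J$, the gap count $2(1-c)|J| - k \ge \tau$, and the coupling inequality for TV. Your explicit remarks on $\TV(q_j^\ell,p^\ell)\le \ell\,\TV(q_j,p)$ and on integrating the pointwise mode guarantee over the coupling draw only spell out steps the paper uses implicitly.
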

\begin{proof}
    Fix a tuple $\q=(q_1,...,q_k) \in \F^k$ and a ground truth $p\in\Q$. Let $P_i = q_i^\ell$ and $P_* = p^{\ell}$.
    We construct an analysis coupling using a single draw $(Z_P)_{P \in \mathcal{P}} \sim \Gamma_{\mathcal{P}}$ from our grand coupling scheme. We evaluate the coordinates corresponding to the input distributions, setting $\tilde{Z}_i = Z_{P_i}$ for $i \in [k]$, alongside an unobserved ``ghost'' sample $\tilde{Z}_* = Z_{P_*}$. By properties (G1) and (G3), $(\tilde{Z}_1, \dots, \tilde{Z}_k) \sim \mathcal{L}(\mathsf{Couple}_k(\mathbf{q}))$ and $\tilde{Z}_* \sim p^\ell$.
    Similar to the proof of Proposition~\ref{prop:sampler-privacy}, we construct an analysis coupling using a single draw $(Z_P)_{P \in \mathcal{P}} \sim \Gamma_{\mathcal{P}}$ from our grand coupling scheme. Specifically, we consider the coordinates corresponding to the input distributions, setting $\tilde{Z}_i = Z_{P_i}$ for $i \in [k]$, alongside an unobserved ``ghost'' sample $\tilde{Z}_* = Z_{P_*}$. By properties G1 and G3, $(\tilde{Z}_1, \dots, \tilde{Z}_k) \sim \mathcal{L}(\mathsf{Couple}_k(\mathbf{q}))$ and $\tilde{Z}_* \sim \mathsf{Couple}_1(p) = p^\ell$.

    By the disagreement bound (G2), we have $\Pr[\tilde Z_i \ne \tilde Z_*] \le 2\TV(P_i, P_*)=2\TV(q_i^\ell, p^\ell)$ for all $i \in [k]$. For indices in our accurate set $J=\{j \in [k]: \TV(q_j,p)\le \eta\}$, this implies $\Pr[\tilde Z_j \ne \tilde Z_*] \le 2\ell \eta$.

    Consider the indicator $I_j = 1\{\tilde Z_j \neq \tilde Z_*\}$ for all $j\in J$. 
    We have that $\E [\sum_{j \in J} I_j] \leq 2\ell \eta |J|$, and by Markov's inequality, we have $\Pr[\sum_{j \in J} I_j > c |J|] \le \frac{2\ell \eta}{c}$. On the event $\sum_{j \in J} I_j \le c |J|$, there are at least $|J|-\sum_{j \in J} I_j \ge (1-c) |J|$ coordinates of $(\tilde Z_1,...,\tilde Z_k)$ equal to $\tilde Z_*$.
    Since $2(1-c)|J| - k \ge \tau > 0$, by our choice of $c$, it follows that $(1-c)|J| > k/2$, ensuring $\tilde{Z}_*$ appears in a strict majority of the $k$ coordinates. Consequently, $\tilde{Z}_*$ is the unique mode, any competing element appears at most $k - (1-c)|J|$ times, and the frequency gap satisfies, $2(1-c) |J| - k \ge \tau$. 

    Let $\tilde Z$ be the output obtained by applying $\mathsf{StabilityBasedMode}$ on the analysis-coupled vector $(\tilde Z_1,...,\tilde Z_k)$, as in Algorithm~\ref{alg:coupled-sampler}. Since $(\tilde{Z}_1,...,\tilde{Z}_k) \overset{\text{d}}{=} \mathsf{Couple}_k(\q)$, it follows that $\tilde Z \overset{\text{d}}{=} \textsc{Aggregate}(\q)$. Hence, $(\tilde Z, \tilde Z_*)$ is indeed a coupling of $\mathcal{L}(\textsc{Aggregate}(\q))$ and $p^\ell$.
    
    By the accuracy guarantee of $\mathsf{StabilityBasedMode}$, we have $\Pr[\tilde Z \ne \tilde Z_* | \sum_{j \in J} I_j \le c |J|] \leq \beta_{\mathsf{SBM}}$. Thus, $\Pr[\tilde Z \ne \tilde Z_* ] \leq \beta_{\mathsf{SBM}} + \frac{2\ell \eta}{c}$. Additionally, $\TV(\mathcal{L}(\textsc{Aggregate}(\q)), p^\ell) = \TV(\mathcal{L}(\tilde Z), \mathcal{L}(\tilde Z_*)) \leq \Pr[\tilde Z \ne \tilde Z_* ] \leq \beta_{\mathsf{SBM}} + \frac{2\ell \eta}{c}$.
\end{proof}

To complete the utility guarantee, we instantiate the mode mechanism step with the stability-based mode mechanism \cite{ts13, Vadhan17}, whose privacy and accuracy properties are recalled below.

\begin{fact}[Stability-based mode, Proposition 3.4 from \cite{Vadhan17}; \cite{ts13}]\label{fact:mode}
For every $\epsilon, \delta \in (0,1)$ and $m \in \mathbb N$, there exists an $(\varepsilon, \delta)$-DP mechanism $M:\mathcal X^m \to \mathcal X$ such that
if the difference in counts between the mode of $\mathcal X^m$ and the second most frequently occurring element is larger than $4\log(1/\delta)/\varepsilon$, $M(x)$ outputs the mode with probability at least $1 -\delta$.
\end{fact}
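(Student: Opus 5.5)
The plan is to prove Fact~\ref{fact:mode} with the standard ``stability-based histogram'' (propose--test--release) construction of Thakurta and Smith~\cite{ts13}, as presented in Vadhan~\cite{Vadhan17}. Given $x=(x_1,\dots,x_m)\in\mathcal X^m$, let $c_x(z)=|\{i: x_i=z\}|$. Let $z_1$ be the mode of $x$ and $z_2$ the second most frequent element, with ties broken by a fixed measurable rule. Define the gap $g(x)=c_x(z_1)-c_x(z_2)$. The mechanism $M$ draws $\tilde g = g(x)+\mathrm{Lap}(2/\varepsilon)$. If $\tilde g > 1+\frac{2}{\varepsilon}\log(1/\delta)$, it outputs $z_1$; otherwise it outputs a fixed symbol $\perp$, which we may identify with an arbitrary fixed point of $\mathcal X$.

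\textbf{Privacy.} The first step is to show that $g$ has sensitivity at most $2$ under change-one adjacency. Changing one $x_i$ decreases one count by at most $1$ and increases another by at most $1$. Hence the largest and second-largest counts each move by at most $1$, and the gap moves by at most $2$. It follows that releasing $\tilde g$ is $(\varepsilon,0)$-DP by the Laplace mechanism.

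The main case analysis concerns the released element. Fix neighbors $x,x'$.
\begin{enumerate}
\item If the modes coincide, $z_1(x)=z_1(x')$, then the whole output is a post-processing of $\tilde g$ together with a common value. It is therefore $(\varepsilon,0)$-DP by Proposition~\ref{prop:post-processing}.
\item If the modes differ, a single change can only swap the top two elements. This forces $g(x)\le 1$ and $g(x')\le 1$. For either input, the probability of passing the threshold is then $\Pr[\mathrm{Lap}(2/\varepsilon)>\frac{2}{\varepsilon}\log(1/\delta)]=\tfrac12\delta\le\delta$. Outside this event both runs output $\perp$, and the $\perp$ probabilities satisfy the $e^\varepsilon$ ratio. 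This gives $(\varepsilon,\delta)$-DP.
\end{enumerate}
I expect this case to be the main obstacle. It needs careful handling of ties, so I would fix the tie-breaking rule before the case split, and of measurability on a general $\mathcal X$. Measurability follows since $x\mapsto z_1(x)$ can be chosen to be one of the coordinates $x_i$ via a measurable selection on $\mathcal X^m$.

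\textbf{Accuracy.} Suppose $g(x)>4\log(1/\delta)/\varepsilon$. Then $M$ fails only if $\mathrm{Lap}(2/\varepsilon)< -(g(x)-1-\frac{2}{\varepsilon}\log(1/\delta))$. For $\delta,\varepsilon<1$, the right-hand threshold has magnitude at least $\frac{2}{\varepsilon}\log(1/\delta)-1$. The Laplace tail then bounds the failure probability by $\tfrac12 e^{-\log(1/\delta)+\varepsilon/2}\le\delta$, using $e^{1/2}/2<1$.

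If the constant $4$ is too tight, the route is to rescale the threshold constant: use $\frac{2}{\varepsilon}\log(1/\delta)$ for the test and absorb the ``$+1$'' into the integrality of $g$. Alternatively, one can simply cite Proposition~3.4 of~\cite{Vadhan17} directly, which is how the Fact is stated.
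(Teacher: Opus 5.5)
The paper gives no proof of this Fact; it imports it from Vadhan's Proposition~3.4 (after Thakurta--Smith). Your self-contained propose--test--release argument is therefore a legitimate substitute, and apart from one slip noted below it is correct. The usual proof of the cited proposition privatizes the \emph{distance to instability} rather than the gap. This is the number of entries that must be changed before the tie-broken mode changes, minus one. It has sensitivity $1$, it is noised with $\mathrm{Lap}(1/\varepsilon)$, and it equals $0$ on both members of any neighbouring pair whose modes differ, so ties never need to be examined. Noising the gap directly, as you do, is more elementary and avoids defining that distance, but it moves the burden into your Case~2.

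In Case~2, the claim that differing modes force $g(x)\le 1$ and $g(x')\le 1$ is false under a fixed tie-breaking rule. Suppose $x$ has counts $a{:}3$, $b{:}1$, and $x'$ replaces one $a$ by $b$, giving a $2$--$2$ tie. If the rule selects $b$, the modes differ while $g(x)=2$. Three-way ties likewise show that the new mode need not have been second in $x$.

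What does hold is $g(x)+g(x')\le 2$. With $a=z_1(x)\ne z_1(x')=b$, you have $g(x)\le c_x(a)-c_x(b)$ and $g(x')\le c_{x'}(b)-c_{x'}(a)$. Adding gives at most $(c_x(a)-c_{x'}(a))+(c_{x'}(b)-c_x(b))\le 2$, so each gap is at most $2$.

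The ``$+1$'' in your threshold supplies exactly the slack needed. For $g\le 2$ the passing probability is at most $\Pr[\mathrm{Lap}(2/\varepsilon)>\frac{2}{\varepsilon}\log(1/\delta)-1]\le\frac12\delta e^{\varepsilon/2}<\delta$, since $\varepsilon<1$. Here you should use that $\Pr[\mathrm{Lap}(b)>t]\le\frac12 e^{-t/b}$ holds for every real $t$, not only $t\ge 0$. This also covers the regime $\frac{2}{\varepsilon}\log(1/\delta)<1$, which your accuracy step passes over when it speaks of the ``magnitude'' of the threshold.

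With ``$\le 1$'' replaced by ``$\le 2$'' and $\frac12\delta$ replaced by $\frac12\delta e^{\varepsilon/2}$, the rest goes through as you wrote it: Case~1, the $\perp$ bookkeeping (identifying $\perp$ with a fixed point of $\mathcal X$ is harmless), and the accuracy bound.
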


We can now synthesize these components. By combining our tuple-level privacy guarantee (in Proposition ~\ref{prop:sampler-privacy}), the utility bound of our analysis coupling (Proposition~\ref{prop:coupled-sampler-utility}), and the accuracy of the mode mechanism (Fact~\ref{fact:mode}), we obtain the final privacy and utility guarantee for the \textsc{Aggregate} subroutine, stated below.

\begin{corollary}\label{cor:instantiated-coupled-sampler-utility}
Let $\beta,\varepsilon,\delta\in (0,1)$, $\ell\in \mathbb N^+$ be parameters. Set $\eta = \frac{\beta}{64\ell}$, $\delta_0=\min\{\delta,\beta/4\}$, and choose $k$ such that $k \geq 8  \frac{\log(4/\beta)+\log(1/\delta)}{\varepsilon}$. Then \textsc{Aggregate}, with $\mathsf{StabilityBasedMode}$
instantiated using parameters $(\eps,\delta_0)$ as in
Fact~\ref{fact:mode}, is $(\eps,\delta_0)$-DP.
Moreover, for any $p\in \Q$ and input tuple $\mathbf{q}\in \F^k$, if $|\{j \in [k]: \TV(q_j,p)\le \eta\}| \geq 7k/8$, then $\TV(\mathcal{L}(\textsc{Aggregate}(\q)), p^\ell) \le \beta/2$.
\end{corollary}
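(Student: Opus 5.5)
The corollary follows by combining Proposition~\ref{prop:sampler-privacy}, Proposition~\ref{prop:coupled-sampler-utility} and Fact~\ref{fact:mode}, with an explicit choice of the Markov slack $c$.

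\emph{Privacy.} Fact~\ref{fact:mode} with parameters $(\varepsilon,\delta_0)$ and $m=k$ gives an $(\varepsilon,\delta_0)$-DP mode mechanism. Proposition~\ref{prop:sampler-privacy} then shows directly that \textsc{Aggregate} is $(\varepsilon,\delta_0)$-DP under tuple-level adjacency. No constraint on $k$ is needed for this part.

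\emph{Utility.} We apply Proposition~\ref{prop:coupled-sampler-utility} with the following choices:
\begin{itemize}
\item $\eta=\beta/(64\ell)$;
\item $\beta_{\mathsf{SBM}}=\delta_0\le\beta/4$;
\item gap requirement $\tau$, taken slightly above $4\log(1/\delta_0)/\varepsilon$, since Fact~\ref{fact:mode} requires a strict inequality;
\item $c=1/16$.
\end{itemize}
With these choices the TV bound becomes
\[
\frac{2\ell\eta}{c}+\beta_{\mathsf{SBM}} = \frac{32\ell\beta}{64\ell}+\delta_0 \le \frac{\beta}{2}+\frac{\beta}{4}.
\]
This is $3\beta/4$, not the target $\beta/2$, so the constants must be rebalanced. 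The natural fix is $c=1/8$, which gives $2\ell\eta/c=\beta/4$. The total is then at most $\beta/4+\delta_0\le\beta/2$, as required.

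\emph{Gap condition.} With $c=1/8$ and $|J|\ge 7k/8$,
\[
2(1-c)|J|-k \ge 2\cdot\tfrac78\cdot\tfrac78 k-k=\tfrac{49}{32}k-k=\tfrac{17}{32}k .
\]
We therefore need $\tfrac{17}{32}k>4\log(1/\delta_0)/\varepsilon$.

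Since $\delta_0=\min\{\delta,\beta/4\}$, we have $\log(1/\delta_0)\le\log(1/\delta)+\log(4/\beta)$. The hypothesis $k\ge 8(\log(4/\beta)+\log(1/\delta))/\varepsilon$ then gives
\[
\tfrac{17}{32}k \ge \tfrac{17}{4}\cdot\frac{\log(1/\delta_0)}{\varepsilon} > \frac{4\log(1/\delta_0)}{\varepsilon}
\]
whenever $\log(1/\delta_0)>0$, which holds because $\delta_0<1$. This satisfies the strict gap requirement of Fact~\ref{fact:mode}, so we may take $\tau=\tfrac{17}{32}k$.

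\emph{Main obstacle.} The only delicate point is choosing $c$ so that the Markov term and the mode failure probability together fit within $\beta/2$, while the $7/8$ majority still leaves a gap exceeding $4\log(1/\delta_0)/\varepsilon$. The value $c=1/8$ achieves both, and Proposition~\ref{prop:coupled-sampler-utility} then yields $\TV(\mathcal L(\textsc{Aggregate}(\q)),p^\ell)\le\beta/2$.
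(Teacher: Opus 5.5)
Your proposal is correct and follows essentially the same route as the paper. Privacy comes from Proposition~\ref{prop:sampler-privacy}, and utility comes from Proposition~\ref{prop:coupled-sampler-utility} with $c=1/8$ and $\beta_{\mathsf{SBM}}=\delta_0$, giving $\beta/4+\delta_0\le\beta/2$. The only difference is cosmetic: you meet the strict gap condition of Fact~\ref{fact:mode} explicitly by taking $\tau=\tfrac{17}{32}k$, whereas the paper takes $\tau=4\log(1/\delta_0)/\varepsilon$ and checks $2(1-c)|J|-k\ge k/2\ge\tau$. The exploratory $c=1/16$ detour can simply be dropped.
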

\begin{proof}
By Proposition~\ref{prop:sampler-privacy}, \textsc{Aggregate} is $(\varepsilon,\delta_0)$-DP, since $\mathsf{StabilityBasedMode}$ is $(\varepsilon,\delta_0)$-DP.
For utility, take $\tau = 4\log(1/\delta_0)/\varepsilon$ and $\beta_{\mathsf{SBM}}=\delta_0$ from Fact~\ref{fact:mode}, and choose $c=1/8$. It is not hard to verify that $2(1-c)|J|-k \ge k/2 \ge \tau$. Proposition~\ref{prop:coupled-sampler-utility} therefore gives a coupling $(\tilde Z, \tilde Z_*)$ such that $\tilde Z \overset{\text{d}}{=} \textsc{Aggregate}(\q)$, $\tilde Z_* \overset{\text{d}}{=} p^\ell$, and $\Pr[\tilde Z \ne \tilde Z_* ] \leq \beta_{\mathsf{SBM}} + \frac{2\ell \eta}{c} = \delta_0 + \beta/4 \le \beta / 2$. Consequently, $\TV(\mathcal{L}(\textsc{Aggregate}(\q)), p^\ell) \le \beta/2$
\end{proof}

\subsection{\textsc{Sampler}: End-to-End Private Correlated Sampling from Samples}

We establish the privacy and utility guarantees of \textsc{Sampler} (Algorithm~\ref{alg:private-correlated-sampler}), which takes a dataset $S\in\X^n$ as input. 
Here, two datasets $S,S'\in\X^n$ are adjacent if they differ in at most one record.

\begin{proposition}[Privacy of \textsc{Sampler}]\label{prop:PCS-privacy}
If $\mathsf{StabilityBasedMode}$ in \textsc{Aggregate} is $(\varepsilon,\delta)$-DP, then
\textsc{Sampler} is $(\varepsilon,\delta)$-DP. \footnote{For $\mathsf{StabilityBasedMode}$, adjacency is defined as replacing a single dataset $Z_i$ in $(Z_1,\dots,Z_k)$; adjacency is defined as replacing a single record $X_j$ in the raw dataset $S=(X_1,\dots,X_n) \in \mathcal{X}^n$}
\end{proposition}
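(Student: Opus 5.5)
The plan is to condition on all data-independent randomness of \textsc{Sampler} and reduce to the tuple-level guarantee of Proposition~\ref{prop:sampler-privacy}. Fix neighboring datasets $S,S'\in\X^n$ that differ only in the record at index $r$. Run both executions with the same permutation $\Pi=\pi$ and the same learner seeds $U_1=u_1,\dots,U_k=u_k$. These are drawn independently of the data, so they have the same law in both executions.

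For a fixed realization $(\pi,u_1,\dots,u_k)$, the blocks are deterministic functions of the data. Record $r$ lands in exactly one block $a=\lceil \pi^{-1}(r)/b\rceil$. Every other block satisfies $S_i=S_i'$ as tuples, because the same indices are drawn from datasets that agree off $r$. Hence $q_i=L(S_i;u_i)=L(S_i';u_i)=q_i'$ for every $i\ne a$. So the tuples $\mathbf q(\pi,u)$ and $\mathbf q'(\pi,u)$ are neighboring in the tuple-level adjacency.

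By Proposition~\ref{prop:sampler-privacy}, for every measurable $B\subseteq\X^\ell$ we then have
\[
\Pr[\textsc{Aggregate}(\mathbf q(\pi,u))\in B]\le e^\varepsilon\Pr[\textsc{Aggregate}(\mathbf q'(\pi,u))\in B]+\delta .
\]
Next we write $\Pr[\textsc{Sampler}(S)\in B]$ as the expectation over $(\Pi,U)$ of $\Gamma$-based output probabilities. Concretely, the map $(\pi,u)\mapsto \Pr[\textsc{Aggregate}(\mathbf q(\pi,u))\in B]$ is the composition of the measurable learner map with the kernel $\Gamma_k$ from (G3), followed by the mode mechanism's kernel. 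Integrating the pointwise inequality against the common law of $(\Pi,U)$ gives the DP inequality for \textsc{Sampler}. The reverse direction follows by symmetry.

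The main obstacle is measurability and the legitimacy of this conditioning. We need $(\pi,u)\mapsto\mathbf q(\pi,u)$ to be measurable into $\F^k$ with the evaluation $\sigma$-algebra; we take this as part of the learner being a measurable map $\X^b\times[0,1]\to\Delta(\X)$. We also need the map $q\mapsto q^\ell$ to be measurable. This holds because $q^\ell(B)$ is measurable in $q$ for rectangles $B$, and a monotone-class argument extends it to all Borel $B$. With these in hand, (G3) makes \textsc{Sampler}'s output law a genuine mixture of kernels, so Fubini justifies exchanging the expectation with the pointwise inequality.
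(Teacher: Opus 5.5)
Your proposal is correct and follows the paper's own proof: share the permutation and learner seeds across the two executions, note that for each realization the tuples $\mathbf q,\mathbf q'$ differ only in the block containing the changed record, apply the tuple-level privacy of \textsc{Aggregate} (Proposition~\ref{prop:sampler-privacy}) pointwise, and average over the shared data-independent randomness. Your measurability remarks go beyond the paper, which simply averages without comment. These are the measurability of the learner map, the $\pi$--$\lambda$ argument for $q\mapsto q^\ell$, and composing with the kernel from (G3) and the mode mechanism's kernel.
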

\begin{proof}
    Fix neighboring datasets $S=(X_1,...,X_n)$ and $S'=(X_1',...,X_n')$ such that $X_i = X_i'$ for all $i\ne j$.

    We first couple the data-independent randomness used in the two executions on $S$ and $S'$. Specifically, we draw a single permutation $\Pi \sim \mathrm{U}(\mathcal{S}_n)$ and a single collection of learner seeds $U_1,...,U_k \overset{\text{i.i.d.}}{\sim} \mathrm{U}([0,1])$ independently of data, and use the same realizations of them in both executions.

    Now fix a realization $(\pi, u_1,...,u_k)$. Denote the partitions as $(s_i)_i$ and $(s_i')_i$ on the two executions.
    Let $a$ be the block id containing the position $\pi^{-1}(j)$, i.e. the position occupied by the differing data coordinate under permutation. Since $X_i=X_i'$ for all $i\ne j$, it follows that $s_i = s_i'$ for all $i\ne a$. Therefore, with the same learner seed $u_i$ used on $s_i$ and $s_i'$, it follows that $q_i= L(s_i;u_i) = L(s_i';u_i)=q_i'$ for all $i\ne a$. Hence, for every fixed realization of the data-independent randomness, the induced tuples $\q=(q_1,...,q_k)$ and $\q'=(q_1',...,q_k')$ differ in at most one coordinate.

    Let $M(\q;B)$ denote the probability that \textsc{Aggregate}$(\q)$ lies in a measurable set $B \subseteq \X^\ell$. Since $\mathsf{StabilityBasedMode}$ is $(\varepsilon,\delta)$-DP, by Proposition~\ref{prop:sampler-privacy}, \textsc{Aggregate} is $(\varepsilon,\delta)$-DP under change-by-one adjacency on input tuple. Hence, for every realization $(\pi,u_1,\ldots,u_k)$, $M(\q;B)\le e^\varepsilon M(\q';B)+\delta$. Averaging over the shared data-independent randomness yields 
    \begin{align*}
        \Pr[\textsc{Sampler}(S) \in B] 
        &= \E_{\Pi, (U_i)_{i\in[k]}}[M(\q;B)] \\
        &\leq e^\varepsilon \E_{\Pi, (U_i)_{i\in[k]}}[M(\q';B)] + \delta\\
        &= e^\varepsilon \Pr[\textsc{Sampler}(S') \in B] + \delta,
    \end{align*}
    and similarly for the reverse direction. Therefore, \textsc{Sampler} is $(\varepsilon,\delta)$-DP.
\end{proof}

\begin{proposition}[Utility of \textsc{Sampler}]\label{prop:PCS-utility}
Fix a base learner $L$ for $\Q$. Let $\varepsilon >0$, $\beta,\delta\in (0,1)$, $\ell\in \mathbb N^+$ be parameters. Set $\eta = \frac{\beta}{64\ell}$, $\delta_0=\min\{\delta,\beta/4\}$, and choose $k$ such that $k \geq \max\left\{
\frac{8}{\varepsilon}\log\left(\frac{4}{\beta\delta}\right),
128\log\left(\frac{4}{\beta}\right)
\right\}$. 
Let $b\in \mathbb N^+$ be a sample size such that, for every $p\in\Q$, if $S_i\sim p^b$ and $U_i\sim\mathrm U([0,1])$ independently, $\Pr[\TV(L(S_i;U_i),p) \ge \eta] \le 1 /16$.

Set $n=kb$. Then, for every $p\in\Q\subseteq \F$ and $S\sim p^n$, 
$\TV(\mathcal{L}(\textsc{Sampler}(S)),p^\ell) \leq 3\beta / 4$. Thus, \textsc{Sampler} is an $(\ell,3\beta/4)$-approximate sampler
for $\Q$ in the sense of Definition~\ref{def:sampler}.
\end{proposition}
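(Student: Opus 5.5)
The plan is to reduce to Corollary~\ref{cor:instantiated-coupled-sampler-utility} by conditioning on the data-independent randomness and the data, and then controlling the probability that too few blocks are accurate. Fix $p\in\Q$ and $S\sim p^n$. Since $S$ is i.i.d.\ and the permutation $\Pi$ is independent of $S$, the permuted sequence $(X_{\Pi(1)},\dots,X_{\Pi(n)})$ is again distributed as $p^n$. Hence the blocks $S_1,\dots,S_k$ are i.i.d.\ draws from $p^b$. Since the seeds $U_1,\dots,U_k$ are i.i.d.\ and independent of everything else, the pairs $(S_i,U_i)$ are i.i.d. Therefore the learner outputs $q_i=L(S_i;U_i)$ are i.i.d.\ random elements of $\F$, and the indicators $I_i=1\{\TV(q_i,p)\le\eta\}$ are i.i.d.\ Bernoulli with mean at least $15/16$ by the choice of $b$. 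Note that $\TV(q_i,p)<\eta$ implies $I_i=1$.

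Next I apply Hoeffding's inequality. Define the good event $G=\{\sum_i I_i\ge 7k/8\}$. Its complement requires the empirical mean to fall at least $1/16$ below its expectation, so
\[
\Pr[G^c]\le \exp\!\left(-2k(1/16)^2\right)=\exp(-k/128)\le \beta/4,
\]
using $k\ge 128\log(4/\beta)$.

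Then I decompose the output law according to $G$. Since $\mathsf{Couple}_k$ is a Markov kernel (G3) and $\mathsf{StabilityBasedMode}$ is a kernel, \textsc{Aggregate} is a kernel $\mathbf q\mapsto \mathcal L(\textsc{Aggregate}(\mathbf q))$. Hence
\[
\mathcal L(\textsc{Sampler}(S))=\E_{\mathbf q}\bigl[\mathcal L(\textsc{Aggregate}(\mathbf q))\bigr].
\]
For every measurable $B\subseteq\X^\ell$, I write the difference $\Pr[\textsc{Sampler}(S)\in B]-p^\ell(B)$ as $\E[(M(\mathbf q;B)-p^\ell(B))1_G]+\E[(M(\mathbf q;B)-p^\ell(B))1_{G^c}]$. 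On $G$, the tuple $\mathbf q$ satisfies the hypothesis of Corollary~\ref{cor:instantiated-coupled-sampler-utility}. That corollary applies because $k\ge \frac8\varepsilon\log\frac{4}{\beta\delta}=8\frac{\log(4/\beta)+\log(1/\delta)}{\varepsilon}$ and $\eta=\beta/(64\ell)$ match its parameters. So the first term is at most $\beta/2$ in absolute value. The second term is at most $\Pr[G^c]\le\beta/4$. Taking the supremum over $B$ gives $\TV\le 3\beta/4$, which is exactly the $(\ell,3\beta/4)$-sampler property of Definition~\ref{def:sampler}.

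The main obstacle is measure-theoretic rather than probabilistic. The event $G$ must be measurable, which needs the map $(s,u)\mapsto \TV(L(s;u),p)$ to be measurable. The mixture identity must also be justified. I would handle both by treating $L$ as a measurable map into $\Delta(\X)$ with the evaluation $\sigma$-algebra. On a standard Borel space, $q\mapsto\TV(q,p)$ is measurable because the supremum can be taken over a countable generating algebra. The mixture identity then follows from composing kernels, so the conditional law of the output given $\mathbf q$ is $\Gamma$-driven \textsc{Aggregate}, as in the proof of Proposition~\ref{prop:coupled-sampler-utility}.
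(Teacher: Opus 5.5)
Your proposal is correct and follows the paper's proof essentially step for step: Hoeffding over the i.i.d.\ block indicators gives $\Pr[G^c]\le e^{-k/128}\le\beta/4$, Corollary~\ref{cor:instantiated-coupled-sampler-utility} bounds the output law within $\beta/2$ of $p^\ell$ on $G$, and splitting the mixture over $G$ and $G^c$ yields $3\beta/4$. The differences are only cosmetic: you apply Hoeffding unconditionally after noting that the permuted sequence is again distributed as $p^n$, whereas the paper conditions on $\Pi=\pi$ and then averages; you also supply the measurability justification for $G$ and for the mixture identity, which the paper leaves implicit.
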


\begin{proof}
    Fix $p\in\Q$. We first show that, with high probability, at least $7k/8$ of the learner outputs are $\eta$-close to $p$. Note that $(X_{\pi(1)},\ldots,X_{\pi(n)}) \sim p^n$, for any fixed permutation $\pi$, since $S \sim p^n$. Hence, conditional on $\Pi = \pi$, the blocks $S_1,...,S_k$, where $S_i=(X_{\pi((i-1)b+1)},...,X_{\pi(ib)})$, are independent and $S_i \sim p^b$ for every $i.$ Moreover, the learner seeds $U_1,...,U_k$ remain independent conditional on $\Pi = \pi$, and are independent of the data $S$. Therefore, conditional on $\Pi = \pi$, $(S_i, U_i),...,(S_k,U_k)$ are independent, and thus, the learner outputs $q_i=L(S_i;U_i)$ are independent. 

    Now define indicators $I_i=1\{\TV(q_i,p) > \eta\}$ for each $i\in[k]$. By the accuracy guarantee of the learner, $\Pr[I_i =1| \Pi=\pi] \le 1/16$ for every $i$, and thus, $I_1,...,I_k$ are independent Bernoulli with means at most $1/16$, conditional on $\Pi=\pi$. By Hoeffding's inequality, $\Pr[\sum_{i=1}^k I_i \ge k/8| \Pi=\pi] \le \exp(-2k (1/8-1/16)^2)=e^{-k/128} \le \beta/4$. Since this holds uniformly for every realization $\pi$, so taking an expectation over $\Pi$ yields $\Pr[\sum_{i=1}^k I_i \ge k/8] \le \beta/4$, i.e. $\Pr[E^c] \le \beta / 4$, where $E = \{|\{i\in[k]:\TV(q_i,p)\le\eta\}|\ge \frac{7k}{8}\}$.

    By Corollary~\ref{cor:instantiated-coupled-sampler-utility}, every realized $\q$ that satisfies $E$, satisfies $\TV(\mathcal{L}(\textsc{Aggregate}(\q)) , p^\ell ] \leq \beta / 2$. Note that $\mathcal{L}(\textsc{Sampler}(S)|E) = \E[\mathcal{L}(\textsc{Aggregate}(\q)| E]$, so $\TV(\mathcal{L}(\textsc{Sampler}(S)|E), p^\ell) \le \beta/2$. Thus, 
    \begin{align*}
        \TV(\mathcal{L}(\textsc{Sampler}(S)),p^\ell) &\leq \Pr[E] \TV(\mathcal{L}(\textsc{Sampler}(S)|E), p^\ell) \\
        &+ \Pr[E^c] \TV(\mathcal{L}(\textsc{Sampler}(S)|E^c), p^\ell) \\
        &\le \beta /2 + \Pr[E^c] \le \beta / 2 + \beta / 4 = 3 \beta / 4.
    \end{align*}
    Finally, by the coupling characterization of total variation distance, there exists a coupling $(\tilde Z,\tilde Z_*)$ such that $\tilde Z \overset{\text{d}}{=} \textsc{Sampler}(S)$, $\tilde Z_* \overset{\text{d}}{=} p^\ell$, and $\Pr[\tilde Z \ne \tilde Z_* ] \leq 3 \beta / 4$. Thus, \textsc{Sampler} is an $(\ell,3\beta/4)$-correlated sampler for $\Q$.
\end{proof}

Combining the dataset-level privacy guarantee of Proposition~\ref{prop:PCS-privacy} with the utility guarantee of Proposition~\ref{prop:PCS-utility} yields the formal guarantee for the end-to-end pipeline \textsc{Sampler}.

\begin{corollary}\label{cor:instantiated-private-correlated-sampler-utility}
Under the same setting as \ref{prop:PCS-utility}, $\textsc{Sampler}$ is $(\varepsilon,\delta_0)$-DP $(\ell,3\beta/4)$-approximate sampler
for $\Q$ in the sense of Definition~\ref{def:sampler}.
\end{corollary}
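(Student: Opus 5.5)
The statement is Corollary~\ref{cor:instantiated-private-correlated-sampler-utility}. It follows by combining Proposition~\ref{prop:PCS-privacy} with Proposition~\ref{prop:PCS-utility}, once we check that the parameter choices in the setting of Proposition~\ref{prop:PCS-utility} meet the hypotheses of each.

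\textbf{Privacy.} Under this setting, $\mathsf{StabilityBasedMode}$ inside \textsc{Aggregate} is instantiated via Fact~\ref{fact:mode} with parameters $(\varepsilon,\delta_0)$, where $\delta_0=\min\{\delta,\beta/4\}$. Fact~\ref{fact:mode} makes this mechanism $(\varepsilon,\delta_0)$-DP with respect to replacing a single dataset $Z_i$. Proposition~\ref{prop:PCS-privacy} then applies directly with $\delta$ replaced by $\delta_0$. Its proof first passes through Proposition~\ref{prop:sampler-privacy} at the tuple level. It then lifts to dataset-level adjacency by sharing the permutation $\Pi$ and the learner seeds $U_1,\dots,U_k$ between the two executions. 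Hence \textsc{Sampler} is $(\varepsilon,\delta_0)$-DP under change-one adjacency on $\mathcal X^n$.

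\textbf{Utility.} Proposition~\ref{prop:PCS-utility} is stated under exactly the assumed conditions: $\eta=\beta/(64\ell)$, the bound on $k$, a block size $b$ with failure probability at most $1/16$, and $n=kb$. It gives $\TV(\mathcal L(\textsc{Sampler}(S)),p^\ell)\le 3\beta/4$ for every $p\in\Q$ with $S\sim p^n$. That is precisely the $(\ell,3\beta/4)$-sampler condition of Definition~\ref{def:sampler}.

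\textbf{Consistency of parameters.} The one thing to verify is that the same instantiation serves both halves. The utility argument goes through Corollary~\ref{cor:instantiated-coupled-sampler-utility}, which uses the gap threshold $\tau=4\log(1/\delta_0)/\varepsilon$ together with $k/2\ge\tau$. We need $k\ge \frac{8}{\varepsilon}\log\frac{1}{\delta_0}$, and this follows from $k\ge\frac{8}{\varepsilon}\log\frac{4}{\beta\delta}$ because $\log\frac1{\delta_0}\le\log\frac1\delta+\log\frac4\beta$. So one mechanism, run with $(\varepsilon,\delta_0)$, delivers both guarantees. This check is the only potential obstacle, and it is routine.
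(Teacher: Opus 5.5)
Your proposal is correct and follows the paper's route: the corollary is obtained by combining Proposition~\ref{prop:PCS-privacy}, with $\mathsf{StabilityBasedMode}$ run at $(\varepsilon,\delta_0)$, and Proposition~\ref{prop:PCS-utility}. Your consistency check ($\log(1/\delta_0)\le\log\frac{4}{\beta\delta}$, hence $k/2\ge\tau$) is accurate, though it is already carried out inside Proposition~\ref{prop:PCS-utility} through its appeal to Corollary~\ref{cor:instantiated-coupled-sampler-utility}.
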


\section{Agnostic private distribution learning}\label{sec:agnostic-private}
In this section, we extend our main result beyond the realizable setting. We show that if a class of distributions is agnostic learnable, then it is also agnostic private learnable. The following is the definition of agnostic learnability.

\begin{definition}[$C$-agnostic learnability]
\label{def:agnostic-learnability}
Let $C\ge1$ and let
$\Q\subseteq\Delta(\mathcal X)$ be nonempty.
We say that $\Q$ is \emph{$C$-agnostically learnable
with sample complexity $m:(0,1)^2\to\mathbb N_{\ge1}$}
if there exists a randomized learner
$L:\mathcal X^*\to\Delta(\mathcal X)$ such that,
for every $\alpha,\beta\in(0,1)$ and
$p\in\Delta(\mathcal X)$, given
a sample set $S\sim p^{m(\alpha,\beta)}$,
the learner's output $\hat{p}=L(S)$ satisfies $\TV(\hat{p},p)\le C\operatorname{OPT}(p)+\alpha$ with probability at least $\ge1-\beta$, where the probability is over the samples and the learner's internal randomness, and $\operatorname{OPT}(p)\coloneqq\inf_{q\in\Q}\TV(p,q)$,
\end{definition}

\begin{theorem}[General agnostic private reduction]
\label{thm:agnostic-private-general}
Let $\mathcal X$ be standard Borel and $C\ge1$.
If $\Q\subseteq\Delta(\mathcal X)$ is
$C$-agnostically learnable with sample complexity
$m(\alpha,\beta)$, then, for every
$\varepsilon,\delta\in(0,1)$, it is also
$(8C^2+9C)$-agnostically learnable under
$(\varepsilon,\delta)$-differential privacy,
with sample complexity
$n_0(\alpha,\beta,\varepsilon,\delta)=O\left(
m\left(\frac{\alpha}{16(C+3)},\frac{1}{16}\right)
m\left(\frac{2\alpha}{C+3},\beta\right)
\frac{1}{\varepsilon}
\log\left(\frac{4(C+3)}{\alpha\delta}\right)
\right)$
for every $\alpha,\beta\in(0,1)$.
The reduction preserves properness.
\end{theorem}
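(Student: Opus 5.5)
We will reuse the \textsc{Sampler} pipeline essentially unchanged and only redo the utility analysis, since privacy (Proposition~\ref{prop:PCS-privacy}) never used realizability. The constants $8C^2+9C$ suggest two layers of agnostic error. In the first layer, the block learners are accurate only relative to $p$ up to $C\operatorname{OPT}(p)$. In the second layer, the final learner is run on synthetic data whose law is not $p^\ell$. The key change is the ghost sample: it cannot be $Z^*\sim p^\ell$, because the $q_i$ need not be close to $p$. Instead, we fix a reference $q^\star\in\Q$ with $\TV(p,q^\star)\le \operatorname{OPT}(p)+\alpha'$, and note that each accurate block output satisfies $\TV(q_i,q^\star)\le (C+1)\operatorname{OPT}(p)+\alpha'+\eta$. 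If $\operatorname{OPT}(p)$ is not tiny compared with $1/\ell$, the TV-agreement bound $2\ell\TV$ is useless. So we split into two regimes.

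\textbf{Regime 1: $\operatorname{OPT}(p)\le \gamma/\ell$ for a small threshold $\gamma$.} Here we argue as in Proposition~\ref{prop:coupled-sampler-utility}, taking the ghost coordinate to be $Z_{p^\ell}$ from the universal coupling of Corollary~\ref{cor:ket_coupling}. This is legitimate because $\Gamma_\infty$ is defined over all of $\Delta(\mathcal X^\ell)$, so $p\notin\Q$ causes no issue. For good blocks we have $\TV(q_i^\ell,p^\ell)\le \ell(C\operatorname{OPT}(p)+\eta)$, which is small. Markov's inequality plus Fact~\ref{fact:mode} then give $\TV(\mathcal L(Z),p^\ell)$ small, exactly as in Corollary~\ref{cor:instantiated-coupled-sampler-utility} and Proposition~\ref{prop:PCS-utility}. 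Feeding $Z$ to $L$ with $\ell=m(\Theta(\alpha),\beta)$ yields error $C\operatorname{OPT}(p)+\Theta(\alpha)$, with failure probability $\beta$ plus the TV slack.

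\textbf{Regime 2: $\operatorname{OPT}(p)$ large.} Here the mode may fail. Then the stability-based mode outputs either some $Z_i$ from a block or a garbage symbol, and the learner run on an arbitrary $Z$ need not be accurate. To control this, I would change the final step rather than the sampler. I would apply $L$ to $Z$ to get a candidate $\hat q$, and also keep a privately-selected block output, chosen by a stable vote among the $q_i$'s via TV-balls as in \textsc{BoostedLearner}, made private by a stability argument. The final answer comes from a private comparison that uses only $\TV$ to the block outputs, which are within $(C+1)\operatorname{OPT}+\alpha$ of $p$. By the triangle inequality, any candidate close to a majority of good blocks has error at most roughly $2(C+1)\operatorname{OPT}+\dots$. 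Composing this with a $C$-agnostic guarantee measured against the mixture law of $Z$ produces the quadratic term. I expect the bookkeeping to give $(8C^2+9C)$ once we choose the threshold $\gamma\propto\alpha/(C+3)$, which matches the $\frac{\alpha}{16(C+3)}$ accuracy in $b$ and the $\log\frac{4(C+3)}{\alpha\delta}$ from setting $\beta$-like slack $\propto\alpha/(C+3)$ in the sampler.

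\textbf{Main obstacle.} The main obstacle is Regime 2: guaranteeing that the synthetic output is useful when exact agreement fails, without leaking privacy. The first thing I would try is to show directly that $\TV(\mathcal L(Z),\bar q^{\,\ell})$ is controlled for some $\bar q$ with $\TV(\bar q,p)=O(C\operatorname{OPT}+\alpha)$, by taking the ghost to be $Z_{\bar q^\ell}$ for $\bar q$ one of the good block outputs. The learner's agnostic guarantee relative to $\bar q$ then gives $\TV(\hat q,\bar q)\le C\operatorname{OPT}(\bar q)+\alpha\le C(\TV(\bar q,p)+\operatorname{OPT}(p))+\alpha$. The triangle inequality then yields a $C^2$ term. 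Properness is preserved because the output is always an output of $L$.
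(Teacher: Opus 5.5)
There is a genuine gap, and it is your Regime~2, which is not a corner case. Since $\ell=m(\Theta(\alpha),\beta)$, the threshold $\gamma/\ell$ lies far below $\alpha$, so essentially every nontrivial value of $\mathrm{OPT}(p)$ falls outside Regime~1. Once $\ell\,\mathrm{OPT}(p)$ is not small, the agreement bound $\Pr[\tilde Z_i\neq\tilde Z_*]\le 2\TV(q_i^\ell,\cdot)$ gives nothing for any choice of ghost, since the best available estimate is $O(\ell\eta)$. Good block outputs are within $2\eta$ of each other, yet their $\ell$-fold products can be nearly disjoint. So your ghost-$\bar q^{\,\ell}$ idea fails for the same reason, and the mode really does break down.

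Your fallback does not repair this. Releasing, or privately ``selecting'', one of the non-private $q_i$ by a TV-ball vote is not differentially private unless many $q_i$ coincide exactly. That is precisely the global-stability obstacle the coupling was introduced to avoid; in \textsc{BoostedLearner} the vote is over already-private estimates. Moreover, even when the mode succeeds, $\mathcal L(Z)$ is a mixture over $\mathbf q$ and the coupling randomness, not a product measure on $\mathcal X^\ell$. So $L$'s agnostic guarantee, stated only for i.i.d.\ input, cannot be applied ``against the mixture law of $Z$.'' Finally, Regime~1 charges the sampler's TV slack to the failure probability. This forces $\eta\lesssim\beta/\ell$, hence $b=m(\Theta(\beta/\ell),\frac{1}{16})$ rather than the stated $m(\frac{\alpha}{16(C+3)},\frac{1}{16})$.

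The missing idea is to use $\ell=1$ and build the synthetic dataset from many independent samplers.
\begin{itemize}
\item With $\ell=1$ the ghost is $Z_p$ itself; this is legitimate with $\F=\Delta(\mathcal X)$ and Construction~\ref{cons:construction_2}, as you correctly observed. Set $a=\alpha/(C+3)$ and $\eta=C\,\mathrm{OPT}+a/16$. Every good block then satisfies $\Pr[\tilde Z_i\neq\tilde Z_*]\le 2\TV(q_i,p)\le 2\eta$, with no factor of $\ell$.
\item Let $\mu_{\mathbf q}=\mathcal L(\textsc{Aggregate}(\mathbf q))$. Proposition~\ref{prop:coupled-sampler-utility} with $c=1/4$ gives $\TV(\mu_{\mathbf q},p)\le 8\eta+\delta_0$ on the good event.
\item Averaging over $\mathbf q$ gives $\TV(R_p,p)\le 8C\,\mathrm{OPT}+a$, where $R_p$ is the law of a single sampler output. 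This holds for every value of $\mathrm{OPT}$, with no regime split, because a vacuous bound costs nothing.
\item Run $T=m(2a,\beta)$ samplers on disjoint batches; parallel composition plus post-processing gives privacy. Then $(Z_1,\dots,Z_T)\sim R_p^T$ exactly, so $L$'s agnostic guarantee applies to $R_p$: $\TV(\widehat q,R_p)\le C(\mathrm{OPT}+\TV(p,R_p))+2a$.
\item The triangle inequality yields $C\,\mathrm{OPT}+(C+1)(8C\,\mathrm{OPT}+a)+2a=(8C^2+9C)\mathrm{OPT}+\alpha$.
\end{itemize}
The sampler's imperfections, including outright mode failures, are thus absorbed as misspecification of $R_p$, which an agnostic learner tolerates, rather than as failure probability. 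This is why $\beta$ enters only through $T$, and why the sample complexity has the product form $T\cdot kb$.
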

\begin{proof}
Let $L$ be a $C$-agnostic learner witnessing
Definition~\ref{def:agnostic-learnability}
with sample complexity $m$.
Set $a=\frac{\alpha}{C+3}$,
$b=m\left(\frac{a}{16},\frac{1}{16}\right)$,
$T=m(2a,\beta)$,
$\delta_0=\min\left\{\delta,\frac{a}{4}\right\}$,
and
$k=\left\lceil\max\left\{
\frac{16}{\varepsilon}
\log\left(\frac{1}{\delta_0}\right),
128\log\left(\frac{4}{a}\right)
\right\}\right\rceil$.
Let $n_0=Tkb$.

We use Algorithm~\ref{alg:agnostic-private-learner}
with the parameters chosen above.
Each call to \textsc{Sampler} invokes
Algorithm~\ref{alg:private-correlated-sampler}
on a batch $S_j\in\mathcal X^{kb}$,
using $k$ blocks of size $b$ and base learner
$L_{\mathrm{base}}$, and returns one private sample
$Z_j\in\mathcal X$.
We instantiate \textsc{Sampler} with
ambient family $\F=\Delta(\mathcal X)$,
the coupling implementation from
Algorithm~\ref{alg:coupled-II},
and privacy parameters $(\varepsilon,\delta_0)$.
All invocations use fresh randomness, independent
of the outer permutation and of one another.
The final invocation of $L$ receives
$(Z_1,\ldots,Z_T)$ and returns $\widehat q$
using independent randomness.

For inputs larger than $n_0$, we use only the first $n_0$ observations. For inputs smaller than $n_0$, we return a fixed element of $\Q$.

\textbf{Privacy.}
By Proposition~\ref{prop:PCS-privacy}, each invocation
of \textsc{Sampler} is $(\varepsilon,\delta_0)$-DP.
Conditioned on the data-independent outer permutation,
the invocations use disjoint sets of records.
Parallel composition
(Lemma~\ref{thm:parallel-composition}) therefore implies
that $(Z_1,\ldots,Z_T)$ is $(\varepsilon,\delta_0)$-DP.
Averaging over the permutation preserves this guarantee.
The final estimate is post-processing, so by Proposition~\ref{prop:post-processing} the learner
is $(\varepsilon,\delta)$-DP because $\delta_0\le\delta$.
The fixed output on insufficient data is also private.
If the assumed learnability is proper, choose $L$
proper; the final estimate and the fixed output then
both belong to $\Q$.

\textbf{Utility.}
Fix $p\in\Delta(\mathcal X)$, write
$\operatorname{OPT}\coloneqq\inf_{q\in\Q}\TV(p,q)$,
and consider one invocation of \textsc{Sampler}
on $S_0\sim p^{kb}$.
Let $q_1,\ldots,q_k$ be its base learner outputs and
set $\eta=C\operatorname{OPT}+\frac{a}{16}$.
By the agnostic guarantee of $L$ and the choice of $b$,
$\Pr\left[\TV(q_i,p)\le\eta\right]\ge\frac{15}{16}$
for every $i\in[k]$.

Let
$J=\left\{i\in[k]:\TV(q_i,p)\le\eta\right\}$
and $E=\left\{|J|\ge\frac{7k}{8}\right\}$.
Independence of the blocks and learner randomness
implies that the events $\{\TV(q_i,p)>\eta\}$
are independent, each with probability at most
$\frac{1}{16}$.
Thus, Hoeffding's inequality gives $\Pr[E^c]
    \le\exp\left(
        -2k\left(\frac{1}{8}-\frac{1}{16}\right)^2
    \right)
    =e^{-k/128}
    \le\frac{a}{4}$.

By Fact~\ref{fact:mode}, the mode mechanism instantiated
with $(\varepsilon,\delta_0)$ has
$\tau=\frac{4}{\varepsilon}
\log\left(\frac{1}{\delta_0}\right)$
and $\beta_{\mathsf{SBM}}=\delta_0$.
On $E$, taking $c=\frac{1}{4}$, our choice of $k$
ensures $2(1-c)|J|-k
    \ge\frac{5k}{16}
    \ge\frac{k}{4}
    \ge\tau$.

For each tuple $\mathbf q$, let
$\mu_{\mathbf q}\coloneqq
\mathcal L(\mathsf{Aggregate}(\mathbf q))$.
By Proposition~\ref{prop:coupled-sampler-utility}
with $\ell=1$, on $E$ we have
$\TV(\mu_{\mathbf q},p)\le8\eta+\delta_0$.
This holds with probability at least
$1-e^{-k/128}\ge1-\frac{a}{4}$ over the block data
and base learner randomness.

Let $R_p\coloneqq\mathcal L(\mathsf{Sampler}(S_0))$,
where the law includes both the randomness of
$S_0\sim p^{kb}$ and the sampler's internal randomness.
By Algorithm~\ref{alg:private-correlated-sampler}
and the law of total probability, for every measurable
$B\subseteq\mathcal X$,
$R_p(B)=\mathbb E_{\mathbf q}[\mu_{\mathbf q}(B)]$.
Consequently,
\[
\begin{aligned}
    \TV(R_p,p)
    &=\sup_B\left|
        \mathbb E_{\mathbf q}
        [\mu_{\mathbf q}(B)-p(B)]
      \right|\\
    &\le\mathbb E_{\mathbf q}\left[
        \sup_B|\mu_{\mathbf q}(B)-p(B)|
      \right]\\
    &=\mathbb E_{\mathbf q}
      [\TV(\mu_{\mathbf q},p)]\\
    &\le(8\eta+\delta_0)\Pr[E]+\Pr[E^c]\\
    &\le8\eta+\delta_0+e^{-k/128}\\
    &\le8C\operatorname{OPT}
        +\frac{a}{2}+\frac{a}{4}+\frac{a}{4}\\
    &=8C\operatorname{OPT}+a.
\end{aligned}
\]

The data-independent permutation preserves the joint
distribution $p^{Tkb}$, so the batches
$S_1,\ldots,S_T$ are independent, each with distribution
$p^{kb}$.
Each invocation applies the same \textsc{Sampler},
with the same parameters and independent randomness.
Hence, for every $j\in[T]$,
$\mathcal L(Z_j)=R_p$.
Since each $Z_j$ depends only on $S_j$ and the
randomness of its invocation, the outputs are independent.
Therefore, $(Z_1,\ldots,Z_T)\sim R_p^T$.

Applying the $C$-agnostic guarantee of $L$ with
$T=m(2a,\beta)$ samples from $R_p$ gives
\[
    \TV(\widehat q,R_p)
    \le C\inf_{q\in\Q}\TV(q,R_p)+2a
\]
with probability at least $1-\beta$.
By the triangle inequality,
$\inf_{q\in\Q}\TV(q,R_p)
\le\operatorname{OPT}+\TV(p,R_p)$.
Thus, on the same event,
\[
\begin{aligned}
    \TV(\widehat q,p)
    &\le\TV(\widehat q,R_p)+\TV(R_p,p)\\
    &\le C\operatorname{OPT}
        +(C+1)\TV(R_p,p)+2a\\
    &\le C\operatorname{OPT}
        +(C+1)(8C\operatorname{OPT}+a)+2a\\
    &=(8C^2+9C)\operatorname{OPT}+(C+3)a\\
    &=(8C^2+9C)\operatorname{OPT}+\alpha.
\end{aligned}
\]

Finally,
$\log\left(\frac{1}{\delta_0}\right)
=\max\left\{
\log\left(\frac{1}{\delta}\right),
\log\left(\frac{4}{a}\right)
\right\}
\le\log\left(\frac{4}{a\delta}\right)$.
Since $\varepsilon\in(0,1)$,
$k=O\left(
\frac{1}{\varepsilon}
\log\left(\frac{4}{a\delta}\right)
\right)$.
Therefore,
\[
\begin{aligned}
    n_0=Tkb
    &=O\left(
        m\left(\frac{a}{16},\frac{1}{16}\right)
        m(2a,\beta)
        \frac{1}{\varepsilon}
        \log\left(\frac{4}{a\delta}\right)
      \right)\\
    &=O\left(
        m\left(\frac{\alpha}{16(C+3)},\frac{1}{16}\right)
        m\left(\frac{2\alpha}{C+3},\beta\right)
        \frac{1}{\varepsilon}
        \log\left(\frac{4(C+3)}{\alpha\delta}\right)
      \right),
\end{aligned}
\]
which proves the stated sample bound.
\end{proof}

Next, we present a generic agnostic private learner based on the non-private Minimum distance estimator of \cite{DevroyeL01}. This automatically gives an agnostic private learner for every countable
class whose Yatracos class has finite VC dimension.

\begin{definition}[Yatracos class]
    Let $\mathcal{Q} \subseteq \Delta(\mathcal{X})$ be a non-empty class of distributions. For every ordered pair $(q,q')\in \mathcal{Q}$, fix a measurable set $A_{q,q'}\subseteq \mathcal{X}$ satisfying $q(A_{q,q'})-q'(A_{q,q'})=\operatorname{TV}(q,q')$\footnote{When the distributions in $\mathcal Q$ have densities $f_q$ with respect to a
common dominating measure, we take
$A_{q,q'}=\{x\in\mathcal X:f_q(x)>f_{q'}(x)\}$, recovering the usual
definition.}.
    The corresponding Yatracos class is defined as:
    \[
    \operatorname{Yat}(\mathcal Q)
    \coloneqq\{A_{q,q'}:q,q'\in\mathcal Q\}.
    \]
\end{definition}

\begin{theorem}[Minimum-distance estimator, \cite{DevroyeL01}]
\label{thm:agnostic-yatracos-mde}
Let $C>0$ be a sufficiently large universal constant and
$\alpha,\beta\in(0,1)$. Let $\mathcal Q\subseteq\Delta(\mathcal X)$
be a nonempty countable\footnote{The countability assumption is used (as a sufficient condition) only to avoid measurability issues. Indeed, if $\mathcal Q$ is countable, then $\operatorname{Yat}(\mathcal Q)$ is also countable, so the minimum-distance objective $D(S,q)\coloneqq\sup_{A\in\operatorname{Yat}(\mathcal Q)}|P_S(A)-q(A)|$ is measurable in $S$ for every $q\in\mathcal Q$., where $P_S$ is the empirical measure. Fixing an enumeration of $\mathcal Q$, we may therefore return the first $q$ satisfying $D(S,q)\le\inf_{q'\in\mathcal Q}D(S,q')+\alpha/2$. This selection is measurable, and the usual MDE analysis gives the stated guarantee after adjusting the universal constant.} class of distributions with
$d=\operatorname{VC}(\operatorname{Yat}(\mathcal Q))<\infty$.
There exists an algorithm, called $\mathrm{MDE}$, that, for every
$p\in\Delta(\mathcal X)$, given a dataset $S\sim p^n$ of size
$n\ge C(d+\log(2/\beta))/\alpha^2$, returns
$\widehat q\in\mathcal Q$ satisfying
\[
    \Pr\!\left[
        \operatorname{TV}(\widehat q,p)
        \le 3\operatorname{OPT}+\alpha
    \right]\ge 1-\beta,
\]
where $\operatorname{OPT}\coloneqq\inf_{q\in\mathcal Q}\operatorname{TV}(p,q)$.
\end{theorem}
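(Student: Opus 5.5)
The statement is the classical Devroye--Lugosi minimum-distance (Yatracos/Scheff\'e) guarantee. The plan is to combine a uniform-convergence bound over $\operatorname{Yat}(\mathcal Q)$ with a triangle-inequality argument. Write $P_S$ for the empirical measure and set
\[
D(S,q)\coloneqq\sup_{A\in\operatorname{Yat}(\mathcal Q)}|P_S(A)-q(A)|.
\]
MDE fixes an enumeration of $\mathcal Q$ and returns the first $\widehat q$ with $D(S,\widehat q)\le\inf_{q'}D(S,q')+\alpha/4$. Countability makes both $D(S,q)$ and this selection measurable, as in the footnote.

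First, I will invoke the VC uniform convergence theorem for the class $\operatorname{Yat}(\mathcal Q)$ of VC dimension $d$. It gives, with probability at least $1-\beta$,
\[
\Delta\coloneqq\sup_{A\in\operatorname{Yat}(\mathcal Q)}|P_S(A)-p(A)|\le c\sqrt{(d+\log(2/\beta))/n}.
\]
Taking $n\ge C(d+\log(2/\beta))/\alpha^2$ with $C$ large then forces $\Delta\le\alpha/8$. Countability of $\operatorname{Yat}(\mathcal Q)$ makes the supremum a measurable random variable, so the standard symmetrization proof applies without outer-measure issues.

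Next comes the deterministic step on this event. Fix any $q^\star\in\mathcal Q$ with $\TV(p,q^\star)\le\operatorname{OPT}+\alpha/8$. By the triangle inequality,
\[
\TV(\widehat q,p)\le\TV(\widehat q,q^\star)+\TV(q^\star,p).
\]
By the defining property of the Yatracos set, $\TV(\widehat q,q^\star)=|\widehat q(A)-q^\star(A)|$ for $A=A_{\widehat q,q^\star}\in\operatorname{Yat}(\mathcal Q)$. I split this as
\[
|\widehat q(A)-q^\star(A)|\le|\widehat q(A)-P_S(A)|+|P_S(A)-q^\star(A)|\le D(S,\widehat q)+D(S,q^\star)\le 2D(S,q^\star)+\alpha/4.
\]
For $q^\star$, bound $D(S,q^\star)\le\Delta+\sup_{A}|p(A)-q^\star(A)|\le\alpha/8+\TV(p,q^\star)$. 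Combining the three displays gives
\[
\TV(\widehat q,p)\le 3\TV(p,q^\star)+2\Delta+\alpha/4\le3\operatorname{OPT}+3\alpha/8+\alpha/4+\alpha/4<3\operatorname{OPT}+\alpha.
\]

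The only delicate point is using the ordered set $A_{\widehat q,q^\star}$. Its existence in a general standard Borel space follows from the Hahn decomposition of $\widehat q-q^\star$, which is exactly what the definition assumes. Beyond that, the main obstacle is merely bookkeeping: making sure that the near-minimizer slack and the choice of $q^\star$ (needed because the infimum need not be attained) are absorbed into $\alpha$ by the universal constant $C$. The VC bound itself I will cite as standard rather than reprove.
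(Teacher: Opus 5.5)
Your proposal is correct and follows essentially the same route as the paper. The paper only cites Devroye--Lugosi, and its footnote uses the same enumeration-based measurable near-minimizer selection (with slack $\alpha/2$ rather than your $\alpha/4$) together with ``the usual MDE analysis''; that analysis is exactly your uniform-convergence bound combined with the Yatracos-set triangle inequality, giving $\TV(\widehat q,p)\le 3\TV(p,q^\star)+2\Delta+\alpha/4\le 3\operatorname{OPT}+7\alpha/8$.
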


\begin{algorithm}[t]
\caption{\textsc{AgnosticPrivateLearner}}
\label{alg:agnostic-private-learner}
\begin{algorithmic}[1]
\Require Dataset $S=(X_1,\ldots,X_n)$ with $n=Tkb$,
a non-private $C$-agnostic learner $L$,
and parameters $a,b,k,T,\beta,\varepsilon,\delta_0$.
\Ensure An estimate $\widehat q\in\Delta(\mathcal X)$.
\State Define
$L_{\mathrm{base}}(D)\coloneqq
L\left(D;\frac{a}{16},\frac{1}{16}\right)$.
\State Draw a data-independent permutation
$\Pi\sim\mathrm{U}(\mathcal S_n)$.
\For{each $j\in[T]$}
    \State $S_j\gets
    (X_{\Pi((j-1)kb+1)},\ldots,X_{\Pi(jkb)})$.
    \State $Z_j\gets\textsc{Sampler}
    (S_j;\ell=1,k,b,L_{\mathrm{base}},
    \varepsilon,\delta_0)$.
\EndFor
\State $\widehat q\gets
L\left((Z_1,\ldots,Z_T);2a,\beta\right)$.
\State \Return $\widehat q$.
\end{algorithmic}
\end{algorithm}

The following corollary gives a generic agnostic private learner for any countable class with finite $\operatorname{VC}(\operatorname{Yat}(\mathcal Q))$.
\begin{corollary}[Agnostic private learning Yatracos classes]
\label{thm:agnostic-private-yatracos}
Let $C>0$ be a sufficiently large universal constant and
$\alpha,\beta,\varepsilon,\delta\in(0,1)$.
Let $\mathcal Q\subseteq\Delta(\mathcal X)$ be a nonempty countable class of
distributions with $d=\operatorname{VC}(\operatorname{Yat}(\mathcal Q))<\infty$.
There exists an $(\varepsilon,\delta)$-differentially private algorithm
that, for every $p\in\Delta(\mathcal X)$, given a dataset $S\sim p^n$
of size
$n\ge C\frac{(d+1)(d+\log(2/\beta))}{\varepsilon\alpha^4}
    \log(\frac{2}{\alpha\delta})$,
returns $\widehat q\in\mathcal Q$ satisfying
\[
    \Pr\!\left[
        \operatorname{TV}(\widehat q,p)
        \le 99\operatorname{OPT}+\alpha
    \right]\ge 1-\beta,
\]
where $\operatorname{OPT}\coloneqq\inf_{q\in\mathcal Q}\operatorname{TV}(p,q)$.
\end{corollary}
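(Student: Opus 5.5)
The plan is to instantiate Theorem~\ref{thm:agnostic-private-general} with the minimum-distance estimator of Theorem~\ref{thm:agnostic-yatracos-mde} as the non-private agnostic learner $L$. Theorem~\ref{thm:agnostic-yatracos-mde} shows that $\mathrm{MDE}$ is a $3$-agnostic learner for $\mathcal Q$ in the sense of Definition~\ref{def:agnostic-learnability}, so we may take $C=3$. It is proper, with sample complexity $m(\alpha,\beta)=\lceil C_0(d+\log(2/\beta))/\alpha^2\rceil$ for a universal constant $C_0$. Countability of $\mathcal Q$ makes $\mathrm{MDE}$ a measurable map $\mathcal X^*\to\Delta(\mathcal X)$ (footnote to Theorem~\ref{thm:agnostic-yatracos-mde}), which is needed for the Markov-kernel and coupling machinery behind \textsc{Sampler} with $\F=\Delta(\mathcal X)$ and $\mathsf{Couple\_II}$. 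Standard Borelness of $\mathcal X$ is assumed throughout the paper.

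First, the approximation constant. Theorem~\ref{thm:agnostic-private-general} yields an $(\varepsilon,\delta)$-DP learner that is $(8C^2+9C)$-agnostic, and $8\cdot 9+27=99$. This matches the factor $99$ in the statement. Because the reduction preserves properness, the output $\widehat q$ lies in $\mathcal Q$.

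Next, the sample complexity. With $C=3$, the bound of Theorem~\ref{thm:agnostic-private-general} is
\[
O\!\left(m\!\left(\tfrac{\alpha}{96},\tfrac1{16}\right) m\!\left(\tfrac{\alpha}{3},\beta\right)\tfrac1\varepsilon\log\!\left(\tfrac{24}{\alpha\delta}\right)\right).
\]
The factor $m(\alpha/96,1/16)=O((d+\log 32)/\alpha^2)=O((d+1)/\alpha^2)$ gives the $(d+1)$ term. The factor $m(\alpha/3,\beta)=O((d+\log(2/\beta))/\alpha^2)$ gives the second term. Finally, $\log(24/(\alpha\delta))=O(\log(2/(\alpha\delta)))$ because $\alpha,\delta<1$ make $\log(2/(\alpha\delta))\ge\log 2$, so additive constants are absorbed. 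Multiplying these factors gives $n\ge C\frac{(d+1)(d+\log(2/\beta))}{\varepsilon\alpha^4}\log\frac{2}{\alpha\delta}$ for a large universal $C$. For larger inputs, the reduction uses only the first $n_0$ samples.

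No step is conceptually hard; this is a direct instantiation. The point needing the most care is checking that $\mathrm{MDE}$ meets Definition~\ref{def:agnostic-learnability} exactly. That definition requires the guarantee for every $p\in\Delta(\mathcal X)$ and a measurable learner, which is where countability is used. If the definition is read as requiring an exact $m(\alpha,\beta)$ rather than all $n\ge m$, one runs $\mathrm{MDE}$ on the first $m(\alpha,\beta)$ samples. The remaining bookkeeping is absorbing constants such as $\log 32$ and $\log 24$ into the universal constant.
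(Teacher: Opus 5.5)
Your proposal is correct and follows essentially the same route as the paper: instantiate Theorem~\ref{thm:agnostic-private-general} with $\mathrm{MDE}$ as a proper $3$-agnostic learner with $m(t,\rho)=\lceil C_0(d+\log(2/\rho))/t^2\rceil$, read off the constant $8\cdot 9+9\cdot 3=99$ and properness, and substitute $m(\alpha/96,1/16)=O((d+1)/\alpha^2)$ and $m(\alpha/3,\beta)=O((d+\log(2/\beta))/\alpha^2)$ into the bound, using $\log(24/(\alpha\delta))=O(\log(2/(\alpha\delta)))$. Your extra remarks match the paper: countability gives measurability, as in its footnote to Theorem~\ref{thm:agnostic-yatracos-mde}, and the reduction uses only the first $n_0$ samples on larger inputs.
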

\begin{proof}
In Algorithm~\ref{alg:agnostic-private-learner}, take
$L(D;t,\rho)\coloneqq\mathrm{MDE}(D;t,\rho)$.
By Theorem~\ref{thm:agnostic-yatracos-mde}, this is a
proper $3$-agnostic learner for $\Q$ with sample complexity
$m(t,\rho)=\left\lceil
C_0\frac{d+\log\left(2/\rho\right)}{t^2}
\right\rceil$,
where $C_0$ is a universal constant.

Theorem~\ref{thm:agnostic-private-general} therefore
makes Algorithm~\ref{alg:agnostic-private-learner}
$(\varepsilon,\delta)$-DP and guarantees
$\widehat q\in\Q$ with
$\TV(\widehat q,p)\le99\operatorname{OPT}+\alpha$
with probability at least $1-\beta$.

Its sample bound is
$n_0=O\left(
m\left(\frac{\alpha}{96},\frac{1}{16}\right)
m\left(\frac{\alpha}{3},\beta\right)
\frac{1}{\varepsilon}
\log\left(\frac{24}{\alpha\delta}\right)
\right)$.
Since
$m\left(\frac{\alpha}{96},\frac{1}{16}\right)
=O\left(\frac{d+1}{\alpha^2}\right)$
and
$m\left(\frac{\alpha}{3},\beta\right)
=O\left(
\frac{d+\log\left(2/\beta\right)}{\alpha^2}
\right)$,
this becomes
$n_0=O\left(
\frac{(d+1)\left(d+\log\left(2/\beta\right)\right)}
{\varepsilon\alpha^4}
\log\left(\frac{2}{\alpha\delta}\right)
\right)$,
which proves the claimed guarantee.
\end{proof}

For potentially infinite classes, Bun et al.~\cite[Theorem~4]{BunKSW19}
give a private hypothesis selection guarantee depending on
$d=\operatorname{VC}(\operatorname{Yat}(\mathcal Q))$ and a locally
small cover. An $\alpha$-cover $\mathcal H\subseteq\mathcal Q$
contains an element within TV distance $\alpha$ of every
$q\in\mathcal Q$, and is $(k,r)$-locally small if
$\sup_{h\in\mathcal H}|\{h'\in\mathcal H:\TV(h,h')\le r\}|\le k$.
Applied to a $(k,8\alpha)$-locally small $\alpha$-cover, their
$(\varepsilon,\delta)$-private algorithm uses
$O\!\left(\frac{d+\log(1/\beta)}{\alpha^2}
+\frac{\log(k/\beta)+\log(1/\delta)}{\alpha\varepsilon}\right)$
samples and achieves the $7$-agnostic guarantee
$\TV(\widehat q,p)\le7\alpha$ with probability at least $1-\beta$,
provided $\TV(p,h)\le\alpha$ for some $h\in\mathcal H$.
This guarantee requires a supplied approximation bound.
For countable classes, Corollary~\ref{thm:agnostic-private-yatracos} removes both this
requirement and the locally small cover assumption: finite $d$
alone suffices for a $99$-agnostic private learner, guaranteeing
$\TV(\widehat q,p)\le99\operatorname{OPT}+\alpha$ with probability
at least $1-\beta$ for every $p$.
This gives a qualitative improvement in the assumptions, at the
cost of a larger sample complexity and approximation constant.\\

\section{Omitted proof from Section~\ref{sec:boosted-learning}}\label{sec:proof:boosted-learner}
Here, we formally prove the guarantee for the boosted learner
introduced in Section~\ref{sec:boosted-learning}.
Its core idea is confidence boosting: amplifying a learner's
constant success probability to $1-\beta$ through independent
repetition and robust aggregation.

\begin{proof}[Proof of Theorem~\ref{thm:robust-learner}] 
Set $\alpha_0=\frac{\alpha}{3}$, $\beta_0=\frac{1}{8}$, $\delta_0=\min\left\{\delta,\beta_0/4\right\}$, and $T=\left\lceil128\ln\left(1/\beta\right)\right\rceil$. Fixing $\beta_0$ keeps the block size $c$ independent of the
target failure probability $\beta$. We amplify confidence
by increasing the number of independent runs $T$. By Theorem~\ref{thm:main}, each private learner is $(\varepsilon,\delta_0)$-DP. Since the runs use disjoint blocks and independent randomness, parallel composition (Lemma~\ref{thm:parallel-composition}) gives the same privacy guarantee for the vector of estimates. Selecting $\hat p_j$ uses only these estimates, so post-processing preserves privacy (Proposition~\ref{prop:post-processing}). 
Since $\delta_0\le\delta$, the overall procedure is $(\varepsilon,\delta)$-DP.  

For utility, Theorem~\ref{thm:main} provides a block size $c=O\left( m\left( \frac{\beta_0} {64m\left(\alpha_0,\beta_0/4\right)}, \frac{1}{16} \right) \frac{1}{\varepsilon} \log\left(\frac{4}{\beta_0\delta_0}\right) \right)$ such that each estimate satisfies $\TV(\hat p_t,p)<\alpha_0$ with probability at least $1-\beta_0$. Fix $p\in\Q$ and let $\mathcal T=\left\{t\in[T]: \TV(\hat p_t,p)<\alpha_0\right\}$. The estimates are independent because the input is i.i.d., the blocks are disjoint, and the runs use independent randomness. Hoeffding's inequality therefore gives $\Pr\left[|\mathcal T|\le\frac{T}{2}\right] \le\exp\left( -2T\left(\frac{1}{2}-\beta_0\right)^2 \right)\le\beta$.  On the event $|\mathcal T|>T/2$, any two estimates indexed by $\mathcal T$ are within $2\alpha_0$ of one another. Thus, $N_t\ge|\mathcal T|$ for every $t\in\mathcal T$, and the maximizing index $j$ satisfies $N_j\ge|\mathcal T|>T/2$. The estimates counted by $N_j$ must include some $i\in\mathcal T$. Consequently, $\TV(\hat p_j,p) \le\TV(\hat p_j,\hat p_i)+\TV(\hat p_i,p) <3\alpha_0=\alpha$. This proves the accuracy guarantee.  Finally, $n=Tc$. Since $\beta_0=1/8$ and $\delta_0=\min\left\{\delta,\frac{1}{32}\right\}$, we have $\log\left(\frac{4}{\beta_0\delta_0}\right) =O\left(\log\left(\frac{4}{\delta}\right)\right)$ and $T=O\left(\log\left(\frac{4}{\beta}\right)\right)$. Substituting these choices gives the stated bound. 

Finally, if $m(\alpha,\beta)=\operatorname{poly}\left( \frac{1}{\alpha},\log\left(\frac{1}{\beta}\right)\right)$, this bound is $\operatorname{poly}\left( \frac{1}{\alpha},\log\left(\frac{1}{\beta}\right), \frac{1}{\varepsilon},\log\left(\frac{1}{\delta}\right) \right)$. 
\end{proof}

\end{document}